\documentclass{article}

\usepackage{fullpage}
\usepackage[numbers,compress]{natbib}

\usepackage[utf8]{inputenc} %
\usepackage[T1]{fontenc}    %
\usepackage{hyperref}       %
\usepackage{url}            %
\usepackage{booktabs}       %
\usepackage{amsfonts}       %
\usepackage{nicefrac}       %
\usepackage{microtype}      %
\usepackage{xcolor}         %
\usepackage{parskip}

\usepackage{amsthm}

\usepackage[english]{babel}
\usepackage[T1]{fontenc}
\usepackage{amsmath}
\usepackage{amssymb}
\usepackage{amsfonts}
\usepackage{multirow}
\usepackage{subcaption}
\usepackage{mathtools}
\usepackage{mathabx}
\usepackage{bbm}
\usepackage{dsfont}
\usepackage{graphicx}
\usepackage{cases}
\usepackage{color}
\usepackage[colorinlistoftodos]{todonotes}
\usepackage{algorithm}
\usepackage[noend]{algpseudocode}
\usepackage{array}
\newcolumntype{C}[1]{>{\centering\let\newline\\\arraybackslash\hspace{0pt}}m{#1}}

\def\shownotes{1} 
\ifnum\shownotes=1
\newcommand{\authnote}[2]{{[#1: #2]}}
\else 
\newcommand{\authnote}[2]{{}}
\fi

\newtheorem{theorem}{Theorem}[section]

\newtheorem{proposition}[theorem]{Proposition}
\newtheorem{lemma}[theorem]{Lemma}

\newtheorem{claim}[theorem]{Claim}

\newtheorem{example}[theorem]{Example}

\newtheorem{assumption}[theorem]{Assumption}

\numberwithin{equation}{section}

\newcommand{\ones}{\mathbf{1}}
\newcommand{\zeros}{\mathbf{0}}
\newcommand{\1}{\mathbbm{1}}
\newcommand{\diag}{\textup{diag}}

\newcommand{\dist}[1]{P[#1]} %
\newcommand{\pr}[1]{\textup{Pr}(#1)}

\newcommand{\tok}{x} %
\newcommand{\Tok}{X} %
\newcommand{\hatTok}{\widehat{X}}
\newcommand{\tokset}{\mathcal{X}} %

\newcommand{\seq}{x}
\newcommand{\Seq}{X} %
\newcommand{\hatseq}{\widehat{\seq}}
\newcommand{\hatSeq}{\widehat{X}}

\newcommand{\hidden}{h} %
\newcommand{\Hidden}{H} %

\newcommand{\augHidden}{\widetilde{H}}
\newcommand{\hiddenset}{\mathcal{H}} %

\newcommand{\steps}{t} %
\newcommand{\Steps}{T} %

\newcommand{\model}{F} %
\newcommand{\head}{f}
\newcommand{\embed}{e}

\newcommand{\hatembed}{\widehat{e}} %
\newcommand{\prompt}{u} %

\newcommand{\gt}{F^\star} %
\newcommand{\gtlin}{\mu} %
\newcommand{\lin}{b} %

\newcommand{\gtprobs}{G}

\newcommand{\trans}{A} %
\newcommand{\obs}{W} %
\newcommand{\labelset}{\gY} %

\newcommand{\supp}{\textup{supp}}

\newcommand{\mem}{m} %
\newcommand{\Mem}{M} %
\newcommand{\memset}{\gM} %

\newcommand{\attn}{\textup{Attn}}
\newcommand{\key}{K}
\newcommand{\val}{V}
\newcommand{\query}{q}

\newcommand{\keyparam}{\Theta^{(K)}}
\newcommand{\valparam}[1]{\Theta^{(V#1)}}

\newcommand{\spanvec}{\textup{span}}

\newcommand{\gtcell}{j^\star}
\newcommand{\cell}{j}
\newcommand{\Cell}{J}
\newcommand{\numcell}{N}

\newcommand{\syn}{s}
\newcommand{\Syn}{S}
\newcommand{\synset}{\gS}

\newcommand{\vl}{\,\vert\,}

\newcommand{\bG}{\overline{\gtprobs}}
\newcommand{\eb}{\embed}

\newcommand{\faktok}{\widetilde{z}}

\usepackage{amsmath,amsfonts,bm}

\DeclareMathAlphabet{\mathsfit}{\encodingdefault}{\sfdefault}{m}{sl}
\SetMathAlphabet{\mathsfit}{bold}{\encodingdefault}{\sfdefault}{bx}{n}

\def\gB{{\mathcal{B}}}

\def\gI{{\mathcal{I}}}

\def\gM{{\mathcal{M}}}

\def\gR{{\mathcal{R}}}
\def\gS{{\mathcal{S}}}

\def\gU{{\mathcal{U}}}
\def\gV{{\mathcal{V}}}

\def\gY{{\mathcal{Y}}}
\def\gZ{{\mathcal{Z}}}

\newcommand{\R}{\mathbb{R}}

\DeclareMathOperator*{\argmax}{arg\,max}

\title{Why Do Pretrained Language Models Help in Downstream Tasks? An Analysis of Head and Prompt Tuning}

\author{
	\Large Colin Wei ~~~~ Sang Michael Xie ~~~~  Tengyu Ma\\
	~\\
	Stanford University\\ 
	Department of Computer Science\\
	~\\
	 \texttt{\{colinwei,xie,tengyuma\}@cs.stanford.edu}
}

\begin{document}
	
	\maketitle
	
	\begin{abstract}
           	Pretrained language models have achieved state-of-the-art performance when adapted to a downstream NLP task.
            However, theoretical analysis of these models is scarce and challenging since the pretraining and downstream tasks can be very different.
            We propose an analysis framework that links the pretraining and downstream tasks with an underlying latent variable generative model of text --- the downstream classifier must recover a function of the posterior distribution over the latent variables.
            We analyze head tuning (learning a classifier on top of the frozen pretrained model) and prompt tuning in this setting.
            The generative model in our analysis is either a Hidden Markov Model (HMM) or an HMM augmented with a latent memory component, motivated by long-term dependencies in natural language.
            We show that 1) under certain non-degeneracy conditions on the HMM, simple classification heads can solve the downstream task, 2) prompt tuning obtains downstream guarantees with weaker non-degeneracy conditions, and 3) our recovery guarantees for the memory-augmented HMM are stronger than for the vanilla HMM because task-relevant information is easier to recover  from the long-term memory.
            Experiments on synthetically generated data from HMMs back our theoretical findings.
	\end{abstract}
	
	\section{Introduction}\label{sec:intro}
Natural language processing (NLP) has been revolutionized by large-scale pretrained language models such as BERT~\citep{devlin2018bert} and GPT~\citep{radford2018improving}, which are adapted to a variety of downstream NLP tasks.
Although a large body of empirical work seeks to understand the effectiveness of pretrained models~\citep{giulianelli2018under,ethayarajh2019contextual,jawahar2019does,tenney2019you,tenney2019bert,hewitt2019structural,rogers2020primer,kim2020pre}, theoretical understanding is scarce.
Theoretically analyzing the relationship between the pretraining and downstream tasks is challenging because pretraining and downstream settings can greatly differ.

The key starting point for our analysis is to link the pretraining and downstream settings through an underlying generative model of the data. We model the data distribution as a latent variable model and the downstream task as a function of the latent variables.
Assuming that pretraining on a large corpus allows us to learn the generative model, the conditional token probabilities predicted by the pretrained model carry information about the hidden variables.
In downstream adaptation, we aim to recover this information to solve the downstream task.

Though full finetuning is the de facto empirical standard, analyzing it is challenging because it requires characterizing the weights of the pretrained model.
In this paper, we focus on \emph{head tuning} and prompt tuning, which both freeze all pretrained parameters and allow us to treat the pretrained model as a black box.
Head tuning~\citep{peters2018deep} trains task-specific heads on top of the pretrained model outputs. Prompt tuning~\citep{shin2020autoprompt,lester2021power,hambardzumyan2021warp,li2021prefix} optimizes a task-specific ``prompt'' that is concatenated to the model input.
Studying prompt tuning is particularly interesting since it can match the performance of full finetuning with less computation time~\citep{lester2021power,hambardzumyan2021warp,li2021prefix}.

Our work contrasts with prior theoretical work~\citep{saunshi2020mathematical}, which \emph{assumes} that downstream labels are recoverable via a linear head applied to the conditional token probabilities, and analyze how errors in pretraining or model misspecification propagate downstream. We consider specific generative distributions for which we can \emph{prove} these assumptions, showing that head and prompt tuning can recover the downstream labels.

Our analysis considers two data-generating distributions with increasing realism.
First, we consider data generated from a Hidden Markov Model (HMM), where the downstream task is to learn a linear classifier on the posterior distribution over the hidden states (Section~\ref{sec:hmm}).
We prove that, under strong non-degeneracy conditions on token emission probabilities, a linear head applied to a pretrained model $\gtprobs$ which outputs exact conditional token probabilities ($\gtprobs_i(\seq) = \dist{\Tok_i \vl \tok_{-i}}$) can recover the downstream label (Theorem~\ref{thm:hmm_simple}).
Furthermore, we can prove better recovery guarantees with relaxed non-degeneracy assumptions (Assumption~\ref{ass:non_degen_vanilla}) by using continuous prompt tuning (Theorem~\ref{thm:hmm_prompt_tune}), reflecting the strong empirical performance of prompt tuning~\citep{lester2021power,hambardzumyan2021warp,li2021prefix}.
Intuitively, prompt tuning conditions the latent variables so that nonessential information for the downstream task can be ignored during the tuning phase, making task-essential information easier to recover.

Second, we also strengthen our analysis by leveraging additional structure in the data. Motivated by long-range dependences in natural language, we analyze HMM variants with additional latent ``memory'' variables that can store long-term information more easily than vanilla HMMs (Section~\ref{sec:memory_hmm_main}).
Here, the downstream task is to learn a linear classifier on the posterior distribution of the memory variables.
We show that, under weaker non-degeneracy conditions than the first setting, an attention-based classification head can recover ground-truth downstream labels from pretrained model outputs (Theorem~\ref{thm:hmm_mem}).
Intuitively, our recovery guarantees improve because the classification head can focus on the persistent, task-essential information in the memory while ignoring other transient and nonessential aspects of the latent variables.
As with the vanilla HMM, we analyze prompt tuning for relaxing the non-degeneracy conditions even further (Theorem~\ref{thm:single_mem_prompt_tune}).

In summary, we relate the pretraining and downstream tasks by assuming that the downstream task is to learn a classifier on the posterior distributions of the latent variables defined by an underlying generative model of text. Our theoretical contributions are: 1) in this setting we analyze an HMM generative model show that simple classification heads can recover the true downstream labels under certain non-degeneracy assumptions, 2) we prove that soft prompt tuning can relax the non-degeneracy assumptions needed for downstream recovery making it easier to extract task-specific information, and 3) our recovery guarantees are stronger for memory-augmented HMMs in comparison to the vanilla HMM when tuning an attention-based classfication head.

We empirically evaluate our theoretical results with language models pretrained on synthetically generated data from HMMs.
We find that prompt tuning obtains good downstream performance when our non-degeneracy conditions are relaxed, whereas head tuning performs poorly.
Furthermore, we show that head tuning obtains better downstream performance when data is generated from a memory-augmented HMM, compared to a vanilla HMM, as is predicted by our theory.\footnote{Code is available at \url{https://github.com/sangmichaelxie/pretraining_analysis}.}

	\subsection{Related works}
The black box nature of BERT and related models has inspired a variety of empirical works which seek to understand them. Probing papers study whether a pretrained model computes various types of structured information (e.g., syntactic~\citep{tenney2019you,hewitt2019structural}) by evaluating the performance of simple classifiers, or probes, on the representations~\citep{giulianelli2018under,jawahar2019does,tenney2019bert,rogers2020primer,kim2020pre}.
Other papers ablate various aspects of pretraining, such as changing the masking scheme~\citep{joshi2020spanbert,levine2020pmi,zhang2021inductive} or permuting the word order~\cite{sinha2021masked}.

In comparison, theoretical analysis of pretrained language models is limited. Besides~\citep{saunshi2020mathematical}, which we discussed in Section~\ref{sec:intro}, 
\citet{zhang2021inductive} analyze using a linear classifier to approximately recover the latent variable in a Gaussian graphical model with sparse dependencies between observed variables.
However, their analysis and setting are focused towards understanding syntactic dependencies between tokens, whereas we directly model and analyze downstream performance.

Prompt-based tuning~\citep{shin2020autoprompt,lester2021power,hambardzumyan2021warp,li2021prefix,jiang2020can,gao2020making,zhong2021factual,chen2021adaprompt,qin2021learning}, which has improved empirical downstream performance for lightweight adaptation methods beyond head tuning to approach full finetuning, is an important focus of our theoretical analysis.
~\citet{shin2020autoprompt} employ task-specific prompts that are optimized over the discrete token space.~\citet{schick2020exploiting,schick2020s} reformulate natural language tasks as cloze-style phrases to enable few-shot learning. Subsequent methods~\citep{lester2021power,hambardzumyan2021warp,li2021prefix} optimize ``soft'' prompts, or continuous embedding vectors.~\citet{lester2021power} employ soft prompts on pretrained large-scale T5~\citep{raffel2019exploring} models and show that as the model size increases, prompt tuning performance can eventually match finetuning.~\citet{hambardzumyan2021warp} 
applies a variant of soft prompt tuning to MLM models.~\citet{li2021prefix} propose prefix tuning, which prepends a trainable prefix embedding sequence to all layers of the transformer.

More broadly,~\citet{lee2020predicting} analyze reconstruction-based self-supervised learning methods in a general setting and show that under certain conditional independence assumptions, predicting one observed variable from another allows recovery of the latent with a linear head. 
Other theoretical works analyzing self-supervised or constrastive learning include~\citep{arora2019theoretical,haochen2021provable,tosh2020contrastive,wei2020theoretical,tosh2021contrastive, liu2021selfsupervised}, but they are not directly relevant for our particular setting. %

	\section{Formulations and notations}

We analyze models pretrained on masked language modeling (MLM) objectives. Let $\tokset$ denote a finite vocabulary of input tokens, $\tokset^*$ the set of variable-length sequences of tokens, and $\Seq = (\Seq_1, \ldots, \Seq_{\Steps}) \in \tokset^*$ a random sequence of $\Steps$ tokens. Let $\Delta^{|\tokset|}$ denote the space of probability distributions over tokens.

\paragraph{Pretraining and downstream task.} Let $\gtprobs(\seq) = (\gtprobs_1(\seq), \gtprobs_2(\seq), \ldots)$ denote the masked language model which predicts a probability vector for each timestep in the input $\seq$. Our theoretical abstraction is that $\gtprobs_i$ perfectly computes the distribution of $\Tok_i$, the $i$-th token, conditioned on all other tokens: $\gtprobs_i(\seq)= \dist{\Tok_i | \Tok_{-i}= \tok_{-i}}$. Here $\dist{\Tok_i \vl \Tok_{-i} = \tok_{-i}} \in \Delta^{|\tokset|}$ is a probability vector. In particular, $\gtprobs_i(\seq)$ does not depend on $\tok_i$. %
The downstream task involves labeled examples $(\seq, \gt(\seq)) \in \tokset^* \times \labelset$, where $\gt : \tokset^* \to \labelset$ provides ground-truth downstream labels and $\labelset$ is a discrete set of labels for classification.

\paragraph{Head and prompt tuning.}
Head tuning trains a classification head $\head$ on top of fixed model outputs, resulting in the classifier $\model(\seq) =  \1( \head(\gtprobs(\seq)) \ge 0)$. We expect $\head$ to be a simple function such as a linear or one layer attention model.
We also analyze variants where $\head$ also takes the tokens $\tok$ or embeddings of $\tok$ as input, which provides additional information.
Soft prompt tuning requires viewing the pretrained model $\gtprobs$ as a function of the token embeddings; we refer to this model by $\bG$. Letting $\embed(\seq) = \embed(\tok_1), \ldots, \embed(\tok_{\steps})$ denote the token embeddings, we have $\bG(\embed(\seq)) = \gtprobs(\seq)$. Soft prompt tuning concatenates a trainable prompt $\prompt$ so that the model output is $\bG((\prompt, \embed(\seq))$. We consider simultaneously training the prompt parameter $\prompt$ and a classification head to fit the downstream task.

 \noindent{\bf Notations.} Let $\Delta^d$ denote the space of $d$-dimensional probability vectors. We work with discrete random variables $V$ taking values in a finite set $\gV$. We use $\dist{V} \in \Delta^{|\gV|}$ to denote the distribution of $V$ and $\dist{U \vl V = v} \in \R^{|\gU|}$ the conditional distribution of $U$ given $V = v$. $\pr{V = v} \in [0, 1]$ will denote the probability that $V$ takes values $v$. 
 We also let $\dist{U = u \vl V} \in \R^{|\gV|}$ %
 denote the vector with entries $\pr{U = u \vl V = v}$. $\dist{U \vl V} \in \R^{|\gU| \times |\gV|}$ will describe the matrix with entries $\dist{U \vl V}_{u, v} = \pr{U= u \vl V = v}$. 
 
 For a sequence $v = (v_1, \ldots, v_\steps)$, we use the notation $v_{i:j}$ for $i \le j$ to denote $(v_i, \ldots, v_j)$, and $v_{-i}$ to denote $(v_{1:i - 1}, v_{i + 1:\steps})$. We let $\1$ denote the indicator function. For set $\gV$, we let $\gV^* = \gV^1 \cup \gV^2 \cup \cdots$ denote variable-length sequences of elements of $\gV$. 
Let $\odot$ denote elementwise product. Let $\ones_d, \zeros_d$ denote the $d$-dimensional all-1's and all-0's vector. We omit the subscript if the dimension is clear from context. For two vectors $a, b \in \R^d$, we let $a/b$ denote their element-wise division. We use $\supp(a)$ to denote the set of indices where vector $a$ is non-zero.

	\section{Analysis for Hidden Markov Models}\label{sec:hmm}

\begin{figure}
	\begin{minipage}{0.47\textwidth}
		\centering
	\includegraphics[width=0.9\textwidth]{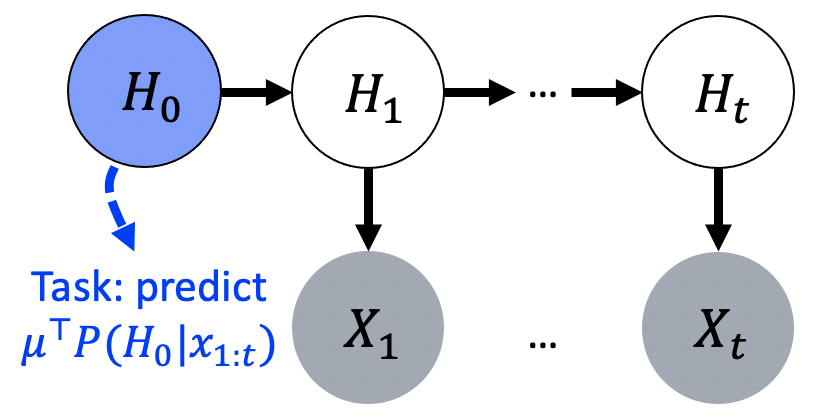}
	\end{minipage}
	\begin{minipage}{0.47\textwidth}
		\centering
	\includegraphics[width=0.9\textwidth]{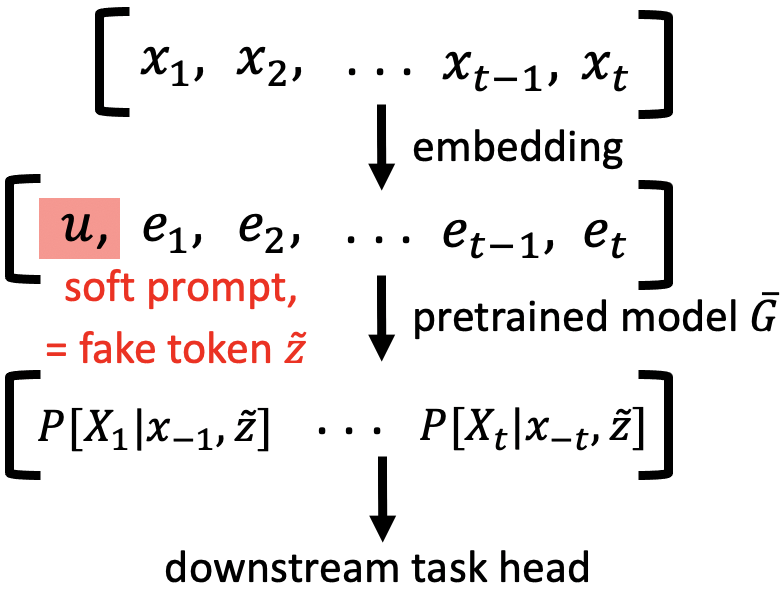}
	\end{minipage}
\caption{\textbf{Left:} Illustration of HMM graphical model. \textbf{Right:} Overview of the formulation and analysis setting for prompt (and head) tuning. To abstractify soft prompt tuning, we note that every token has a natural embedding, the corresponding row of the emission probability matrix. We view prompt tuning as adding a fake token $\faktok$ to the vocabulary, assigning it a row $\prompt$ in the emission matrix, and prepending it to the input embedding sequence. 
	More details are provided in Section~\ref{sec:prompt_tune}.
}\label{fig:hmm_setup}
\end{figure}

Defining a relation between pretraining and downstream tasks is the foremost challenge for analysis. We propose to link the two via latent variable generative assumptions on the input distribution. We model the downstream task as a function of the posterior distribution of the latent variables.
Towards a first result, this section studies the case where inputs are generated by HMMs (see Figure~\ref{fig:hmm_setup} (left)), which have been well-studied in the context of language and speech processing (see e.g.~\citep{rabiner1986introduction,kupiec1992robust,chiu2020scaling}).

\noindent \textbf{Data distribution.} Let $\hiddenset$ denote the hidden state space of the HMM. We use $\Hidden = (\Hidden_0, \Hidden_1, \ldots, \Hidden_\Steps) \in \hiddenset^*$ to denote the sequence of hidden states. For all timesteps $i > 0$, the transition probabilities are time-invariant, i.e. $\dist{\Hidden_i \vl \Hidden_{i - 1}} = \trans$ for $\trans \in \R^{|\hiddenset| \times |\hiddenset|}$. For each timestep $i \ge 1$, tokens $\Tok_i$ are emitted following some time-invariant probability: $\dist{\Tok_i \vl \Hidden_i} = \obs$ for $\obs \in \R^{|\tokset| \times |\hiddenset|}$. The joint probability of $\Seq, \Hidden$ is
\begin{align}
	\pr{\Tok, \Hidden= \tok, \hidden \vl  \Steps = \steps} =\pr{\Hidden_0 = \hidden_0} \prod_{i =1}^{\steps} \pr{\Hidden_i = \hidden_i \vl \Hidden_{i - 1} = \hidden_{i - 1}} \pr{\Tok_i = \tok_i \vl \Hidden_i = \hidden_i}. \nonumber
\end{align}
\textbf{Downstream tasks.} We assume that $\Hidden_0$ has the meaningful information for the downstream task, which is a binary classification task where the ground-truth labeling $\gt$ is assumed to be a linear classifier on the posterior $\dist{\Hidden_0 \vl \Tok_{1:\Steps} = \tok}$:
\begin{align} \label{eq:hmm_downstream}
	\gt(\tok) = \1(\gtlin^{\top} \dist{\Hidden_0 \vl \Tok_{1:\Steps} = \tok} \ge 0)
\end{align}
for $\gtlin \in \R^{|\hiddenset|}$. Our results are easily extended to the multiclass setting.
We consider tuning a linear head for the downstream classifier, which formally computes $\1(\lin^\top \gtprobs_1(\tok) \ge 0)$ for $\lin \in \R^{|\tokset|}$. 
The following non-degeneracy condition is crucial for our recovery result in this setting. 
\begin{assumption}[Non-degeneracy, vanilla HMM] \label{ass:non_degen_vanilla} 
	The token emission probability matrix $\obs$ has linearly independent columns.
\end{assumption}
 We also require the following regularity conditions on $\Hidden_0$ and the state transitions.
\begin{assumption}[Regularity] \label{ass:regularity}
	The Markov chain $\Hidden_0, \Hidden_1, \ldots$ is ergodic, and $\dist{\Hidden_0}$ has full support. 
\end{assumption}
We show that if $W$ has linearly independent columns, a linear head fits downstream labels.

\newcommand{\maskalt}{\varnothing}
\newcommand{\G}{\gtprobs}
\newcommand{\x}{\seq}

\begin{theorem}\label{thm:hmm_simple}
	Assume that non-degeneracy (Assumption~\ref{ass:non_degen_vanilla}) and regularity (Assumption~\ref{ass:regularity}) hold. Then any downstream task $\gt(\seq)$ of the form~\eqref{eq:hmm_downstream} can be computed by a linear head on $\gtprobs$ applied to a shifted sequence. That is, there exists linear head weights $\lin \in \R^{ |\tokset|}$ such that for all $x \in \supp(\dist{\Tok})$, 
	\begin{align}
		\gt(\seq) = \1(\lin^\top \gtprobs_1(\seq') \ge 0) \nonumber
	\end{align}
where $\seq' = (\maskalt, \seq_{1:\steps})$ is the concatenation of a special token $\maskalt$ with $\x$.\footnote{We note that $\G_1(\seq')$ does not depend on $\x_1'$ and therefore $\x_1'$ can be any token.} 
\end{theorem}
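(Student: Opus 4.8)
The plan is to show that feeding the shifted sequence $\seq' = (\maskalt, \seq_{1:\steps})$ into $\gtprobs_1$ produces, up to a fixed linear map, the posterior over the hidden state that governs the downstream label. First I would unpack what $\gtprobs_1(\seq')$ computes. Under the length-$(\steps+1)$ HMM we have $\gtprobs_1(\seq') = \dist{\Tok_1 \vl \Tok_{2:\steps+1} = \seq_{1:\steps}}$, and since $\Tok_1$ is emitted from $\Hidden_1$ and is conditionally independent of the later tokens given $\Hidden_1$, this factors as
\[
	\gtprobs_1(\seq') = \obs \, \dist{\Hidden_1 \vl \Tok_{2:\steps+1} = \seq_{1:\steps}}.
\]
The purpose of prepending the dummy token is precisely that now \emph{all} of the real tokens $\seq_{1:\steps}$ sit in the conditioning set, and the recovered state $\Hidden_1$ lies immediately before the first real emission --- exactly the structural position occupied by $\Hidden_0$ in the original length-$\steps$ chain.

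Next I would compare the posterior $\dist{\Hidden_1 \vl \Tok_{2:\steps+1} = \seq}$ against the target $\dist{\Hidden_0 \vl \Tok_{1:\steps} = \seq}$. The forward chains emanating from $\Hidden_1$ (shifted) and from $\Hidden_0$ (original) are structurally identical, so the likelihood factor $\dist{\seq \vl \Hidden = h}$ agrees; the two posteriors differ only through their priors, $\dist{\Hidden_1} = \trans\,\dist{\Hidden_0}$ versus $\dist{\Hidden_0}$. A short Bayes-rule computation then shows they are related by a fixed diagonal reweighting and renormalization: with $D = \diag\big(\trans\,\dist{\Hidden_0} \,/\, \dist{\Hidden_0}\big)$ one finds $\dist{\Hidden_1 \vl \seq} \propto D\,\dist{\Hidden_0 \vl \seq}$. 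Regularity (Assumption~\ref{ass:regularity}) is what makes this well defined: full support of $\dist{\Hidden_0}$ together with ergodicity (hence irreducibility, so every state has an incoming transition) forces both $\dist{\Hidden_0}$ and $\trans\,\dist{\Hidden_0}$ to be strictly positive, so $D$ is invertible with positive diagonal.

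Finally I would assemble the linear head. Non-degeneracy (Assumption~\ref{ass:non_degen_vanilla}) supplies a left inverse $\obs^{+}$ with $\obs^+\obs = I$, so $\obs^+ \gtprobs_1(\seq') = \dist{\Hidden_1 \vl \seq}$ recovers the shifted posterior exactly. Composing with $D^{-1}$ undoes the reweighting up to a positive scalar, making $D^{-1}\obs^+\gtprobs_1(\seq')$ a strictly positive multiple of $\dist{\Hidden_0 \vl \seq}$. Taking $\lin = (\obs^+)^\top D^{-1} \gtlin \in \R^{|\tokset|}$, the scalar $\lin^\top \gtprobs_1(\seq') = \gtlin^\top D^{-1}\obs^+\gtprobs_1(\seq')$ equals $\gtlin^\top \dist{\Hidden_0 \vl \seq}$ divided by a strictly positive normalizer; hence its sign, and therefore $\1(\lin^\top \gtprobs_1(\seq') \ge 0)$, matches $\gt(\seq)$ exactly, boundary included.

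I expect the main obstacle to be the prior-mismatch step: cleanly justifying that the \emph{only} discrepancy between the shifted and original posteriors is the diagonal factor $D$, and verifying via ergodicity that $D$ is genuinely invertible (a single transition step cannot annihilate any coordinate of the full-support initial distribution). The emission-inversion via $\obs^+$ and the final sign-preservation argument are then routine linear algebra.
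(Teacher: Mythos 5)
Your proposal is correct and follows essentially the same route as the paper's proof: factor $\gtprobs_1(\seq') = \obs\,\dist{\Hidden_1 \vl \Tok_{2:\steps+1} = \seq}$ via conditional independence, relate the shifted posterior to $\dist{\Hidden_0 \vl \Tok_{1:\steps} = \seq}$ by a positive diagonal reweighting $D$ and a positive scalar (the paper's Claim~\ref{claim:hmm_time_conversion}), and compose the left inverse $\obs^{\dagger}$ with $D^{-1}$ and $\gtlin$ to form the head. Your explicit ergodicity argument for the invertibility of $D$ (every state has an incoming transition, so $\trans\,\dist{\Hidden_0}$ is strictly positive) is a slightly more detailed justification of a step the paper asserts directly from Assumption~\ref{ass:regularity}.
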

The key for the proof is to leverage the following general statement about random variables $U, V, Z$ such that $U \perp V \vl Z$, which decomposes the expression for $\dist{U \vl V}$. 
\begin{proposition}\label{prop:ci}
	Let $U, V, Z$ be random variables such that $U \perp V \vl Z$. Then for any $v$,
$
		\dist{U \vl V = v} = \dist{U \vl Z}\cdot \dist{Z\vl V = v}
$. 
	Thus, if $\dist{U\vl Z}$ has a left inverse $(\dist{U\vl Z})^{\dagger}$, then 
$
		\dist{Z\vl V = v} = (\dist{U\vl Z})^{\dagger}\dist{U \vl V = v}
$.
\end{proposition}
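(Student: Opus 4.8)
The plan is to prove the matrix identity entrywise via the law of total probability, then invoke conditional independence, and finally read off the matrix--vector form. For each fixed value $u$ of $U$, I would marginalize over the intermediate variable $Z$:
\[
\pr{U = u \vl V = v} = \sum_z \pr{U = u, Z = z \vl V = v} = \sum_z \pr{U = u \vl Z = z, V = v}\,\pr{Z = z \vl V = v}.
\]
The conditional independence hypothesis $U \perp V \vl Z$ says precisely that $\pr{U = u \vl Z = z, V = v} = \pr{U = u \vl Z = z}$, so each summand simplifies and I obtain
\[
\pr{U = u \vl V = v} = \sum_z \pr{U = u \vl Z = z}\,\pr{Z = z \vl V = v}.
\]

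Next I would recognize the right-hand side as a matrix--vector product under the paper's notation: the quantity $\sum_z \pr{U = u \vl Z = z}\,\pr{Z = z \vl V = v}$ is exactly the $u$-th entry of $\dist{U \vl Z}\cdot\dist{Z \vl V = v}$, since $\dist{U \vl Z} \in \R^{|\gU|\times|\gZ|}$ has $(u,z)$ entry $\pr{U = u \vl Z = z}$ and $\dist{Z \vl V = v} \in \R^{|\gZ|}$ has $z$-th entry $\pr{Z = z \vl V = v}$. As this holds for every $u$, stacking the entries gives the claimed identity $\dist{U \vl V = v} = \dist{U \vl Z}\cdot\dist{Z \vl V = v}$.

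For the final claim, I would left-multiply both sides of this identity by the left inverse $(\dist{U \vl Z})^{\dagger}$ and use $(\dist{U \vl Z})^{\dagger}\dist{U \vl Z} = I$ to cancel, yielding $\dist{Z \vl V = v} = (\dist{U \vl Z})^{\dagger}\dist{U \vl V = v}$.

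I do not anticipate a substantive obstacle here: the argument is an elementary application of total probability followed by conditional independence, and the only care needed is bookkeeping of the matrix and vector dimensions so that the sum is correctly identified with the stated product. One mild point to flag is that the total-probability step implicitly conditions on $V = v$, so the identity is asserted only for values $v$ with $\pr{V = v} > 0$; in the downstream application this is guaranteed by restricting attention to $\seq \in \supp(\dist{\Tok})$.
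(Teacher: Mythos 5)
Your proposal is correct and follows essentially the same route as the paper's proof: marginalize over $Z$, apply the conditional independence to drop the conditioning on $V=v$, and read off the matrix--vector product (the paper writes the identical chain in vectorized form rather than entrywise). Your remark about restricting to $v$ with $\pr{V=v}>0$ is a reasonable extra precaution that the paper leaves implicit.
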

By the conditional independence structure of the HMM, Proposition~\ref{prop:ci} immediately implies
\begin{align}%
	\gtprobs_1(\seq') = \obs \dist{\Hidden_1 |  \Tok_{2:\Steps + 1} = \seq} 
	\implies \dist{\Hidden_1 |  \Tok_{2:\Steps + 1} = \seq} = \obs^{\dagger}\gtprobs_1(\seq') \nonumber
\end{align}
where $\obs^{\dagger}$ is the left inverse for $\obs$, guaranteed to exist by Assumption~\ref{ass:non_degen_vanilla}. This lets us recover $\dist{\Hidden_1 |  \Tok_{2:\Steps + 1} = \seq}$ by applying a linear function to $\gtprobs_1(\seq')$. Additional linear functions will be sufficient to obtain $\gtlin^\top \dist{\Hidden_0 |  \Tok_{1:\Steps} = \seq}$ from $\dist{\Hidden_1 |  \Tok_{2:\Steps + 1} = \seq}$.
We provide the full proof in Section~\ref{sec:hmm_proof}. 

Proposition~\ref{prop:ci} is reminiscent of the arguments of~\citep{lee2020predicting}, which leverages the independence structure in the same way. %
Subsequent sections will require more complicated analyses and recovery procedures.

A drawback of Theorem~\ref{thm:hmm_simple} is that it relies heavily on assuming $\obs$ has full column rank, which implies the necessary condition that $|\hiddenset| \le |\tokset|$. Without this assumption, it is unclear how to recover $\dist{\Hidden_0 \vl \Tok_{1:\Steps} = \seq}$ from $\gtprobs(\seq)$ alone. However, in realistic settings we would expect $|\hiddenset| > |\tokset|$, as increasing the size of the hidden state space improves language modeling capabilities of HMMs~\citep{chiu2020scaling}.

\subsection{Relaxed non-degeneracy assumptions via prompt tuning}\label{sec:prompt_tune}
In this section, we study applying soft, or continuous, prompt tuning~\citep{lester2021power,hambardzumyan2021warp} to the setting above. We show that by using soft prompt tuning,
 we can recover $\gt$ using a linear head on $\gtprobs$ for HMMs where the non-degeneracy assumptions on $\obs$ are relaxed. Our analysis provides insight into the empirical successes of prompt-tuning: intuitively, prompt tuning enables better recovery of the downstream task by conditioning the output of $\gtprobs$ to only contain task-specific information. 

Soft prompt tuning trains task-specific  embedding vectors, but analyzing how the model processes embedding vectors is challenging because it requires opening up the black box of the pretrained model. Thus, we require additional abstractions about how the pretrained model processes the embedding vectors. %
We will extend the mask language model $\gtprobs$ to a model $\bG$ that maps a sequence of embeddings $\eb_1,\dots, \eb_\steps$ to conditional probabilities  $\gtprobs_1(x),\dots, \gtprobs_{\steps}(x)$ as follows.  We observe that each token $z$ in the vocabulary $\tokset$ naturally corresponds to a $|\hiddenset|$-dimensional vector: the $z$-th row of the emission probability matrix $W$, or equivalently, $\dist{X_i=z\vl H_i}$. We denote this embedding by $\eb(z)$ and call the family of embeddings $\{\eb(z): z\in \tokset\}$ proper embeddings. A fundamental property of HMMs is that the conditional probability $\dist{\Tok_i \vl \Tok_{-i} = \seq_{-i}}$ only depends on $\x_1,\dots, \x_\steps$ through their embeddings $\eb(x) = (\eb(\x_1),\dots, \eb(\x_\steps))$. In other words, there exists a function $\bG_i$ such that 
\begin{align}
\gtprobs_i(x_1,\dots, x_\steps) = \bG_i(\eb(x_1),\dots, \eb(x_\steps)) \nonumber
\end{align}
In particular, we let $\bG_i$ compute the standard message passing algorithm~\citep{koller2009probabilistic} that computes the conditional probability of HMMs. This ensures that $\bG_i$ is well defined on all sequences of nonnegative vectors in $[0,1]^{|\hiddenset|}$, beyond sequences of proper embeddings.%
We assume that pretraining produces this $\bG_i$, which we treat as a blackbox for prompt tuning. 
 
In particular, for prompt tuning we can consider the case where we pass an arbitrary nonnegative vector $\prompt \in [0, 1]^{|\hiddenset|}$ to $\bG$ in the first argument and proper embeddings at positions $i > 1$. We can interpret $\prompt$ as the embedding of a fake token $\faktok$. Concretely, consider adding a new token $\faktok$ to the vocabulary $\tokset$, and changing the emission probability at position 1 to satisfy $\dist{\Tok_1 = \faktok \vl \Hidden_1} = \prompt$ and for all $z \ne \faktok$, $\dist{\Tok_1 = z \vl \Hidden_1} \propto (1 - \prompt) \odot \eb(z)$. Then $\bG_i(\prompt, \eb(x_1), \ldots, \eb(\tok_{\steps}))$ precisely computes the conditional probability $\dist{X_i\vl X_{-i}= (\faktok,x_1,\dots, x_{\steps})_{-i}}$ under the modified HMM. We refer the readers to Section~\ref{sec:hmm_prompt_tune_proof} for the formal definition of $\bG_i$ and formal proofs of the interpretation above.

We consider a downstream training algorithm which trains the prompt tuning parameter $\prompt$ described above and a linear classification head. Letting $\prompt$ denote the trainable prompt parameter and $\lin\in \R^{|\tokset|}$ the trainable linear head weights, the model  %
uses the embedding sequence 
\begin{align}\label{eq:prompt_tune_input}
\hatembed(x) \triangleq (\prompt, \eb(\maskalt), \eb(\tok_1), \ldots, \eb(\tok_\steps))
\end{align} and outputs the prediction $\model(x) = \1(\lin^\top \gtprobs_2(\hatembed(x)) \ge 0)$.
We can provide recovery guarantees for this model if the ground-truth classifier weights $\gtlin$ (defined in~\eqref{eq:hmm_downstream}) and columns of the HMM transition matrix $\trans$ satisfy the following relaxation of the requirement in Theorem~\ref{thm:hmm_simple} that $\obs$ is nondegenerate.
\begin{assumption}[Relaxed non-degeneracy condition]\label{ass:hmm_sparsity}
There exists a set of essential hidden states $\hiddenset^\star \subseteq \hiddenset$, so that the columns of $\obs$ corresponding to $\hiddenset^\star$, $\{\obs_{:, \hidden} \}_{\hidden \in \hiddenset^\star}$ , are linearly independent. Furthermore, $\hiddenset^\star$ covers all  meaningful information for the downstream tasks: $\supp(\gtlin) \subseteq \hiddenset^\star$. 
	
	In addition, a last technical requirement on $\hiddenset^\star$ is as follows: there exists a set $\gB \subseteq \hiddenset$ such that $\hiddenset^\star = \cup_{\hidden \in \gB} \supp(\trans_{:, \hidden})$. 
	 In other words, $\hiddenset^\star$ must be the set of all states reachable by starting from some state in $\gB$ and transitioning one step in the hidden Markov chain.
\end{assumption}
Compared to Assumption~\ref{ass:non_degen_vanilla}, which required that \textit{all} columns of $\obs$ are linearly independent, Assumption~\ref{ass:hmm_sparsity} only requires linear independence on a subset $\hiddenset^\star$ of essential states. In the setting where $|\hiddenset| > |\tokset|$, the condition for Theorem~\ref{thm:hmm_simple} can never hold. On the other hand, Assumption~\ref{ass:hmm_sparsity} could still hold, for example, if $|\supp(\gtlin)| < |\tokset|$ and the set of columns of $\obs$ corresponding to hidden states in $\supp(\gtlin)$ is linearly independent. The last technical requirement in Assumption~\ref{ass:hmm_sparsity} is also required, which could be satisfied if columns of $\trans$ are sparse. 
The following theorem shows that when Assumption~\ref{ass:hmm_sparsity} holds, we can recover $\gt$ using soft prompt tuning with a linear head.
\begin{theorem}\label{thm:hmm_prompt_tune}
	In the above setting, assume that Assumptions~\ref{ass:regularity} and~\ref{ass:hmm_sparsity} hold. Then $\gt$ can be computed using soft prompt tuning with a linear head on $\bG$. Concretely, there is a continuous prompt parameter $\prompt \in \R^{|\hiddenset|}$ and weight vector $\lin \in \R^{|\tokset|}$, such that for all $\seq \in \supp(\dist{\Seq})$,
	\begin{align}\nonumber
		\gt(x) = \1(\lin^\top  \bG_2(\hatembed(\seq)) \ge 0)
\end{align}
where $\hatembed$ prepends $\prompt$ to the input embedding sequence, as defined in~\eqref{eq:prompt_tune_input}.
\end{theorem}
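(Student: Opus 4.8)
The plan is to reuse the recovery strategy behind Theorem~\ref{thm:hmm_simple}, but to use the prompt to force the posterior we read off to be supported on the essential states $\hiddenset^\star$, so that only left-invertibility of $\obs$ on the columns indexed by $\hiddenset^\star$ (Assumption~\ref{ass:hmm_sparsity}) is needed instead of full column rank. First I would take $\prompt$ to be any vector with entries in $(0,1]$ and $\supp(\prompt) = \gB$, where $\gB$ is the set furnished by Assumption~\ref{ass:hmm_sparsity}. By the modified-HMM interpretation of Section~\ref{sec:prompt_tune}, evaluating $\bG_2(\hatembed(\seq))$ computes $\dist{\Tok_2 \vl \Tok_{-2}}$ in the HMM obtained by adjoining a fake token $\faktok$ with $\dist{\Tok_1 = \faktok \vl \Hidden_1} = \prompt$, masking position $2$, and emitting $\seq_1, \dots, \seq_\steps$ at positions $3, \dots, \steps+2$. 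Since position $2$ emits from $\Hidden_2$ through the unmodified matrix $\obs$ and $\Tok_2 \perp \Tok_{-2} \vl \Hidden_2$, Proposition~\ref{prop:ci} gives $\bG_2(\hatembed(\seq)) = \obs\, \dist{\Hidden_2 \vl \Tok_1 = \faktok, \Tok_{3:\steps+2} = \seq}$.

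Next I would pin down the support of this posterior. Conditioning on $\Tok_1 = \faktok$ reweights $\Hidden_1$ by $\prompt$, so $\pr{\Hidden_1 = \hidden \vl \Tok_1 = \faktok}$ is supported exactly on $\gB$ (the state marginals are fully supported by Assumption~\ref{ass:regularity}). One transition then spreads $\Hidden_2$ over $\cup_{\hidden \in \gB}\supp(\trans_{:,\hidden}) = \hiddenset^\star$ — precisely the structural requirement in Assumption~\ref{ass:hmm_sparsity} — and conditioning further on the future tokens $\seq$ can only shrink the support. Hence $\dist{\Hidden_2 \vl \Tok_1 = \faktok, \Tok_{3:\steps+2} = \seq}$ lives on $\hiddenset^\star$, so $\bG_2(\hatembed(\seq)) = \obs^\star r$, where $\obs^\star$ is the submatrix of $\obs$ on columns $\hiddenset^\star$ and $r$ is the restriction of the posterior to those coordinates. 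Assumption~\ref{ass:hmm_sparsity} makes $\obs^\star$ left-invertible, so $r = (\obs^\star)^\dagger \bG_2(\hatembed(\seq))$ is a linear readout of the pretrained output.

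It remains to connect this posterior over $\Hidden_2$ to the target posterior over $\Hidden_0$. By time-invariance, $\Hidden_2$ in the prompted chain and $\Hidden_0$ in the original chain both sit one step before the first state emitting $\seq$, so the emission likelihood $\pr{\Tok_{1:\steps} = \seq \vl \Hidden_0 = \hidden}$ matches $\pr{\Tok_{3:\steps+2} = \seq \vl \Hidden_2 = \hidden}$. Writing both posteriors via Bayes' rule and cancelling this shared likelihood gives, for each $\hidden \in \hiddenset^\star$, $\dist{\Hidden_0 = \hidden \vl \seq} = c(\seq)\, \frac{\pr{\Hidden_0 = \hidden}}{\pr{\Hidden_2 = \hidden \vl \Tok_1 = \faktok}}\, r_\hidden$, where $c(\seq) > 0$ is the (sequence-dependent but strictly positive) ratio of the two normalizers and the ratio of priors is a fixed, sequence-independent, strictly positive number on $\hiddenset^\star$. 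Since $\supp(\gtlin) \subseteq \hiddenset^\star$, only these coordinates enter $\gtlin^\top \dist{\Hidden_0 \vl \seq}$, so $\gtlin^\top \dist{\Hidden_0 \vl \seq} = c(\seq)\, w^\top r$ with the fixed vector $w_\hidden := \gtlin_\hidden\, \pr{\Hidden_0 = \hidden} / \pr{\Hidden_2 = \hidden \vl \Tok_1 = \faktok}$. Taking $\lin := ((\obs^\star)^\dagger)^\top w$ yields $\lin^\top \bG_2(\hatembed(\seq)) = w^\top r$, whose sign agrees with that of $\gtlin^\top \dist{\Hidden_0 \vl \seq}$ because $c(\seq) > 0$; hence $\gt(\seq) = \1(\lin^\top \bG_2(\hatembed(\seq)) \ge 0)$.

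The main obstacle is that the prompt changes the effective prior on the hidden states, so the vector we can linearly extract is a reweighted version of the target posterior rather than the posterior itself; the crux is to show this discrepancy factors cleanly into a fixed diagonal reweighting that is finite and positive exactly on $\supp(\gtlin) \subseteq \hiddenset^\star$ (and hence absorbable into $\lin$) and a strictly positive scalar $c(\seq)$ that cannot flip the classifier's sign, with the support-alignment guaranteed by $\hiddenset^\star = \cup_{\hidden \in \gB}\supp(\trans_{:,\hidden})$. A secondary technical point is the well-definedness of the prompted posterior: the normalizer $\pr{\Tok_{3:\steps+2} = \seq \vl \Tok_1 = \faktok}$ can vanish when no essential state can generate $\seq$, but in exactly that case $\pr{\Tok_{1:\steps} = \seq \vl \Hidden_0 = \hidden} = 0$ for all $\hidden \in \hiddenset^\star$, forcing $\gtlin^\top \dist{\Hidden_0 \vl \seq} = 0$ and thus the boundary label $\gt(\seq) = 1$, which is matched by the unnormalized readout evaluating to $0$.
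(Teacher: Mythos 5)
Your proposal is correct and follows essentially the same route as the paper's proof: choose the prompt supported on the set $\gB$ from Assumption~\ref{ass:hmm_sparsity}, use the fake-token interpretation to write $\bG_2(\hatembed(\seq))$ as $\obs$ times a posterior of $\Hidden_2$ supported on $\hiddenset^\star$, left-invert the restricted emission matrix, and absorb the fixed positive reweighting (the paper's diagonal matrix $D$ and the factor $\dist{\hatTok_1=\faktok,\Hidden_2}$ from Lemma~\ref{lem:hmm_prompt_decomp}) together with the positive scalar into the linear head. Your edge-case treatment of sequences with $(\faktok,\seq)\notin\supp(\dist{\hatSeq_{-2}})$ also matches the paper's, so there is nothing substantive to add.
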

Theorem~\ref{thm:hmm_prompt_tune} provides a stronger recovery result than Theorem~\ref{thm:hmm_simple}, which only used a linear head. This is also reflected in our synthetic experiments (Section~\ref{sec:experiments}), and prior work which shows that variants of prompt tuning can perform much better than only training the last few layers of the model~\citep{li2021prefix}. Our theory suggests that prompt tuning could help by conditioning the hidden variables to remove nonessential information for the task from the output of $\gtprobs$. This makes task-essential information easier to recover.

The key proof intuition is that although recovering $\dist{\Hidden_0 \vl \Tok_{1:\Steps} = \seq}$ is impossible without  strong non-degeneracy conditions (Assumption~\ref{ass:non_degen_vanilla}), we can aim to recover $\dist{\Hidden_0 \vl \Tok_{1:\Steps} = \seq}$ on the subset of essential states $\hiddenset^\star$ defined in Assumption~\ref{ass:hmm_sparsity}, which suffices for computing $\gtlin^\top \dist{\Hidden_0 \vl \Tok_{1:\Steps} = \seq}$, since $\hiddenset^\star \supseteq \supp(\gtlin)$. To recover $\dist{\Hidden_0 \vl \Tok_{1:\Steps} = \seq}$ on $\hiddenset^\star$, we observe in Lemma~\ref{lem:hmm_prompt_decomp} that prepending the prompt $\prompt$ is equivalent to introducing a modified random sequence $\hatTok$ and fake token $\faktok$ which influences the posterior of $\Hidden_2$ as follows: 
\begin{align}
	\bG_2(\hatembed(\seq)) %
	& = r_{\seq} \obs D(\dist{\Hidden_2 \vl \hatTok_1 = \faktok} \odot \dist{\Hidden_0 \vl \Tok_{1:\Steps} = \seq})\label{eq:prompt_tune_sketch:1}
\end{align}
for invertible diagonal matrix $D$ and positive scalar $r_x$. We choose $\prompt$ such that the vector $\dist{\Hidden_2 \vl \hatTok_1 = \faktok} \odot \dist{\Hidden_0 \vl \Tok_{1:\Steps} = \seq}$ is supported only on $\hiddenset^\star$. Because corresponding columns of $\obs$ are linearly independent by Assumption~\ref{ass:hmm_sparsity}, we can then recover $\pr{\Hidden_0 = \hidden\vl \Tok_{1:\Steps} = \seq}$ for $\hidden \in \hiddenset^\star$ by applying a linear function to $\bG_2(\hatembed(\seq))$. This suffices for computing $\gtlin^\top \dist{\Hidden_0 \vl \Tok_{1:\Steps} = \seq}$. More details are in Section~\ref{sec:hmm_prompt_tune_proof}.

	\section{Analysis for memory-augmented Hidden Markov Models}
\label{sec:memory_hmm_main}
We study a memory-augmented HMM which explicitly disentangles the evolution of hidden states from a persistent ``memory'' variable. Inspired by natural sentences, this model is intended to better capture the distinction between syntax, which constantly evolves, and semantics, which changes less. 
This additional structure in the generative model allows us to strengthen our results by relaxing the non-degeneracy conditions on $\obs$, the token emission probabilities.
Thus, both head and prompt tuning are more powerful in this setting compared to Section~\ref{sec:hmm} and can recover the downstream label with weaker non-degeneracy assumptions on $\obs$. In Section~\ref{sec:mem_prompt_tune}, we show that soft prompt tuning also provides an advantage over head tuning alone.

\begin{figure}
	\begin{minipage}{0.40\textwidth}
		\centering
		\includegraphics[width=0.7\textwidth]{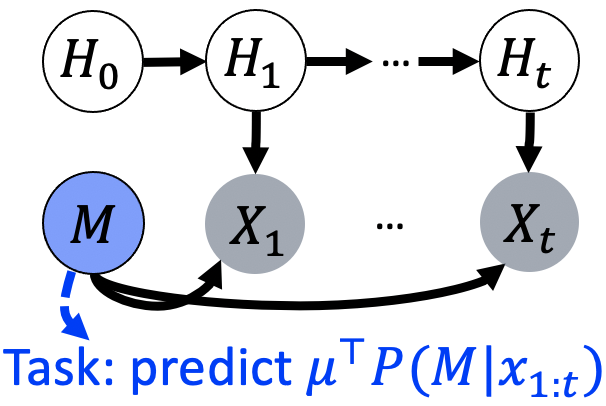}
	\end{minipage}
	\begin{minipage}{0.55\textwidth}
		\centering
		\includegraphics[width=0.7\textwidth]{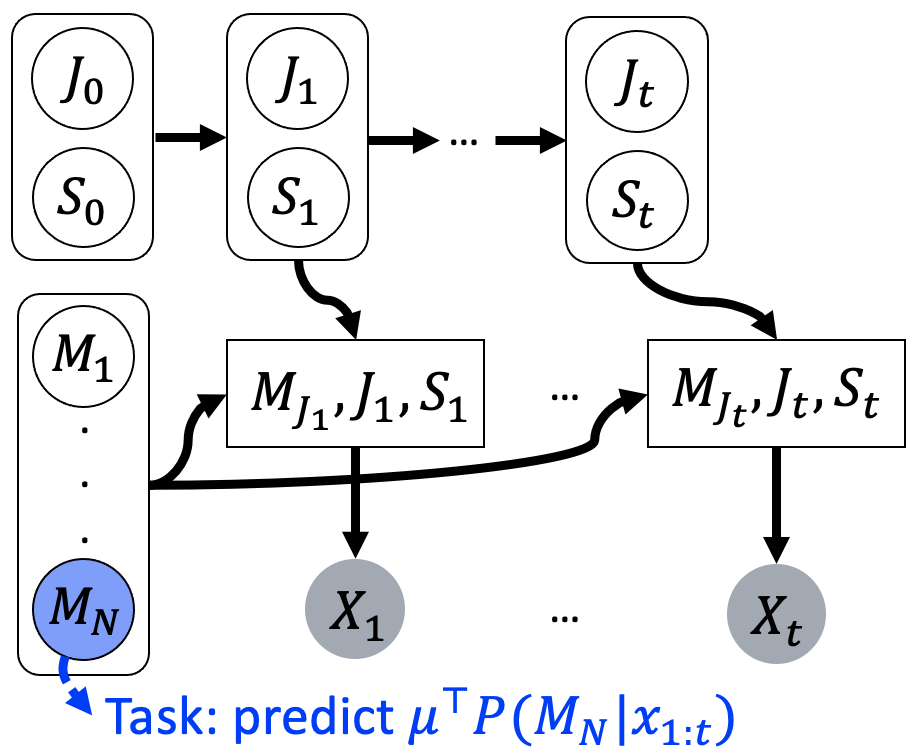}
	\end{minipage}	
\caption{\textbf{Left:} Memory-augmented HMM with a single memory cell. The memory $\Mem$ and hidden state $\Hidden_i$ determine the emission probabilities for each state $\Tok_i$.  \textbf{Right:} Memory-augmented HMM with multiple memories $\Mem_1, \ldots, \Mem_{\numcell}$. The hidden state $\Hidden_i$ consists of a cell index $\Cell_i$ and syntax state $\Syn_i$. To sample $\Tok_i$, we first look up the $\Cell_i$-th memory cell $\Mem_{\Cell_i}$. The token emission probability is then determined by the tuple $(\Mem_{\Cell_i}, \Cell_i, \Syn_i)$.}	\label{fig:memory_hmm}
\end{figure}
\noindent{\bf Data distribution.} The memory-augmented HMM, depicted in Figure~\ref{fig:memory_hmm}, can be viewed as a generative variant of memory networks~\citep{weston2014memory,sukhbaatar2015end} and is closely related to Hidden Topic Markov Models~\citep{gruber2007hidden}. There are two sets of latent variables in the memory-augmented HMM: a Markov chain on hidden states $\Hidden_0, \Hidden_1, \ldots$, meant to model the evolution of syntax, and a persistent ``memory'' $\Mem = (\Mem_1, \ldots, \Mem_{\numcell})$ with $\numcell$ total cells, where each $\Mem_i$ takes values in a finite set $\memset$. 
The full joint probability is as follows:
\begin{align}
&	\pr{\Seq, \Hidden, \Mem = \seq, \hidden, \mem | \Steps = \steps} = \nonumber\\ &~~~\pr{\Mem = \mem} \pr{\Hidden_0 = \hidden_0} \prod_{i = 1}^t \pr{\Hidden_i = \hidden_i | \Hidden_{i - 1} = \hidden_{i - 1}} \pr{\Tok_i = \tok_i | \Mem = \mem, \Hidden_i = \hidden_i}\nonumber
\end{align}
The hidden state is modified to explicitly consist of a disentangled cell index $\Cell \in [\numcell]$ and syntax state $\Syn \in \synset$, such that $\Hidden_i = (\Cell_i, \Syn_i)$ and $\hiddenset = [\numcell] \times \synset$.
To sample the token at timestep $i$ given the hidden state $H_i = (J_i, S_i)$, we first use $\Cell_i$ to index the memory $\Mem$, obtaining the random variable $\Mem_{\Cell_i}$. $\Tok_i$ is then sampled according to some time-invariant probability depending on $\Mem_{\Cell_i}, \Cell_i, \Syn_i$: %
\begin{align}
	\dist{\Tok_i \vl \Mem = \mem, \Hidden_i = (\cell, \syn)} = \dist{\Tok_i \vl \Mem_{\Cell_i} = \mem_{\cell}, \Hidden_i = (\cell, \syn)} = \obs_{:, (\mem_{\cell}, \cell, \syn)} \nonumber
\end{align} 
Here $\obs \in \R^{|\tokset| \times |\memset||\hiddenset|}$ stores the emission probabilities for each choice of memory cell value and hidden state.
Note that in particular, the conditional probabilities for $\Tok_i$ only depend on a single memory cell for each timestep. We also note that memory-augmented HMMs can be viewed as vanilla HMMs with structured transitions because $(\Hidden_0, \Mem), (\Hidden_1, \Mem), \ldots$ can be viewed as a Markov chain where the memory component does not change. 

\begin{example}[Generating natural sentence with memory-augmented HMM]\label{ex:mem}
We consider how this model may generate the sentence ``The cow in the pasture rolled on the grass' happily.'' $\Mem_1$ could store the subject (``cow''), $\Mem_2$ the location (``pasture''), $\Mem_3$ the sentiment (``happily''), and $\Syn_i$ could determine part-of-speech. For timesteps where ``cow'' and ``rolled'' are emitted $\Cell_i = 1$ because we emit information related to the sentence subject. Timesteps for ``pasture'' and ``grass'' would have $\Cell_i = 2$. %
\end{example}

\noindent{\bf Downstream tasks.} We consider downstream tasks where ground-truth labels are obtained via a linear classifier on the posterior distribution of a particular memory cell $\gtcell \in [\numcell]$:
$
	\gt(\seq) = \1(\gtlin^\top \dist{\Mem_{\gtcell} | \Tok_{1:\Steps} = \seq} \ge 0)
$,
where $\gtlin \in \R^{|\memset|}$. Intuitively, this formulation models downstream tasks which depend on a particular aspect of the semantics but not on syntax (e.g. in the setting of Example~\ref{ex:mem}, if $\gtcell = 3$, the task is sentiment analysis). 

\subsection{Tuning attention head for recovering ground-truth downstream labels}
To recover the downstream labeling, we require an attention-based classification head, which is a function of both the input embeddings and outputs of $\gtprobs$. Formally, let $\query \in \R^{|\hiddenset| + 1}$ denote a query parameter and $\beta_1, \ldots, \beta_{\steps} \in \R^{|\hiddenset| + 1}$ denote trainable position embeddings. Given pretrained model outputs $\gtprobs_i(\seq)$ and trainable token embeddings $\eb(\tok_i) $, 
the attention head $\attn(\cdot)$ applies key and value functions $\key, \val$ to compute the output as follows:
\begin{align}
	\gI &\triangleq \argmax_i \{\query^\top (\key(\gtprobs_i(\tok)) + \beta_i)\} \label{eq:attended_indices} \\
	\attn((\gtprobs_i(\seq), \embed(\tok_i))_{i = 1}^\steps) &\triangleq \frac{1}{|\gI|} \sum_{i \in \gI} \val(\gtprobs_i(\seq), \embed(\tok_i)) \label{eq:mem_attn}
\end{align}
where $\argmax$ refers to the set of indices achieving the maximum in~\eqref{eq:attended_indices}. We note that standard attention heads in practice rely on the softmax function, but the expression based on $\argmax$ above captures the limiting behavior as $\|\query \|_2 \rightarrow \infty$.
We consider linear key functions 
given by $\key(\gtprobs_i(\seq)) = \keyparam \gtprobs_i(\seq)$. 
The value function $\val : \R^{|\tokset|} \times \R^{|\memset| |\hiddenset|} \to \R$ uses parameters $\valparam{} \in \R^{|\memset||\hiddenset| \times |\tokset|}$ and $\lin \in \R^{|\memset||\hiddenset|}$ and computes 
$
	\val(\gtprobs_i(\seq), \eb(\seq_i)) = \lin^\top ((\valparam{} \gtprobs_i(\seq)) \odot \eb(\seq_i))
$.

Because our generative model disentangles  $\Hidden$ and $\Mem$, we can relax the non-degeneracy assumption on the token emission probabilities $\obs$, compared to Theorem~\ref{thm:hmm_simple}. The relaxed assumption only requires the columns $\{\obs_{:, (\mem, \hidden)}\}_{\mem \in \memset, \hidden \in \hiddenset^\star}$ to be linearly independent in a subset $\hiddenset^\star$ of ``recoverable'' hidden states, whereas Assumption~\ref{ass:non_degen_vanilla} required all columns to be linearly independent. 

\begin{assumption}[Existence of ``recoverable'' hidden states]\label{ass:nondegen_mem}
	There exists a set of recoverable hidden states $\hiddenset^\star = \{\gtcell\} \times \synset^\star$, such that the collection of token emission probabilities from $\memset \times \hiddenset^\star$, $\{\obs_{:, (\mem, \hidden)}\}_{\mem \in \memset, \hidden \in \hiddenset^\star}$, is a linearly independent set of vectors. 
	
	Furthermore, the span of these vectors must be disjoint from the span of token emission probabilities from $\memset \times (\hiddenset \setminus \hiddenset^\star)$:
$
		\spanvec(\{\obs_{:, (\mem, \hidden)}\}_{\mem \in \memset, \hidden \in \hiddenset^\star}) \cap 	\spanvec(\{\obs_{:, (\mem, \hidden')}\}_{\mem \in \memset, \hidden \in \hiddenset \setminus \hiddenset^\star}) = \{\zeros_{|\tokset|}\} 
$.
\end{assumption}

Note that the non-degeneracy condition of Theorem~\ref{thm:hmm_simple} would require $\{\obs_{:, (\mem, \hidden)}\}_{\mem \in \memset, \hidden \in \hiddenset}$ to be linearly independent, whereas Assumption~\ref{ass:nondegen_mem} only requires linear independence for $\hidden \in \hiddenset^\star$. The second condition states that $\hiddenset^\star$ and $\hiddenset \setminus \hiddenset^\star$ are distinguishable by the token emission probabilities. 

We explain Assumption~\ref{ass:nondegen_mem} in the setting of Example~\ref{ex:mem}. For natural language, there might be choices of $\hidden = (\cell_i, \syn_i)$ for which the set $\{\obs_{:, (\mem, \hidden)}\}_{\mem \in \memset}$ of token emission probabilities is fundamentally not very diverse, and therefore not linearly independent.  For example, if the syntax $\syn_i$ indicates ``article'', i.e. words such as ``a'', ``an'', and ``the'', the token emission probabilities would carry little information about $\Mem_{\cell_i}$ because the choice of article does not depend much on semantics, so columns corresponding to $\syn_i = \textup{``article''}$ would not be linearly independent, violating Assumption~\ref{ass:non_degen_vanilla}. However, Assumption~\ref{ass:nondegen_mem} allows us to avoid this issue by placing such $\hidden$ in $\hiddenset \setminus \hiddenset^\star$, a set of hidden states which we can ignore, and only including hidden states which carry a lot of information about $\Mem$ in $\hiddenset^\star$. 
In Example~\ref{ex:mem}, when $\Cell_i = 2$ (location), $\Syn_i = \textup{``noun''}$, the position $i$ should convey a lot about the location (in this case, ``pasture''), so it is more reasonable to assume that $\{\obs_{:, \mem, \hidden}\}_{\mem \in \memset}$  is linearly independent for this hidden state.

Thus, our aim is to focus on recovering information for the downstream task from positions $i$ where $\Hidden_i \in \hiddenset^\star$. Formally, we define the following set of input sequences containing  positions $i$ where the posterior of $\Hidden_i$ given $\seq_{-i}$ concentrates on $\hiddenset^\star$:
\begin{align}\label{eq:mem_concentrated_supp}
	\gR \triangleq \{(x_1, \ldots, x_{\steps}) \in \supp(\dist{\Seq}) : \exists i \textup{ with } \supp(\dist{\Hidden_i \vl \Tok_{-i}= \seq_{-i}}) \subseteq \hiddenset^\star\}
\end{align}

The following theorem shows that under Assumption~\ref{ass:nondegen_mem}, we can recover $\gt$ using the attention head described above, if $\seq \in \gR$ is nonempty.
Note that $\gR$ is nonempty if the posterior of $\Hidden_i$ concentrates on $\hiddenset^\star$ for some $i$. For natural language, it is realistic to assume this can occur because syntactic aspects of a sentence are typically low-entropy when the full sentence is observed.
\begin{theorem} \label{thm:hmm_mem}
	Assume that non-degeneracy (Assumption~\ref{ass:nondegen_mem}) and regularity (Assumption~\ref{ass:regularity}) hold. Define $\gR$ as in~\eqref{eq:mem_concentrated_supp}. Then there exist an attention head on $\gtprobs(\seq)$ and token embeddings $\embed(\seq_i)$ such that the following holds for any $\seq \in \gR$:
\begin{align}\nonumber
		\gt(\seq) = \1(\attn((\gtprobs_i(\seq), \embed(\tok_i))_{i = 1}^{\steps}) \ge 0)
\end{align}
	where the function $\attn$ is in the form described in~\eqref{eq:mem_attn}. %
\end{theorem}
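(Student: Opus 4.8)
The plan is to mimic the recovery strategy of Theorem~\ref{thm:hmm_simple} but localized to the positions the attention head selects. First I would establish the memory-augmented analogue of Proposition~\ref{prop:ci}. Since $\Tok_i \perp \Tok_{-i} \vl (\Mem, \Hidden_i)$ and $\dist{\Tok_i \vl \Mem = \mem, \Hidden_i = (\cell,\syn)} = \obs_{:,(\mem_\cell,\cell,\syn)}$ depends only on the coordinate $(\mem_\cell, \cell, \syn)$, marginalizing over $\Mem$ and $\Hidden_i$ gives $\gtprobs_i(\seq) = \obs\, p_i$, where $p_i$ is the nonnegative coefficient vector with entries $p_i[(v,\cell,\syn)] = \pr{\Mem_\cell = v, \Hidden_i = (\cell,\syn) \vl \Tok_{-i} = \seq_{-i}}$. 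The key observation is that at any position $i$ with $\supp(\dist{\Hidden_i \vl \Tok_{-i} = \seq_{-i}}) \subseteq \hiddenset^\star$, the cell index is forced to be $\gtcell$ and $p_i$ is supported on $\memset \times \hiddenset^\star$; its restriction there is exactly the leave-one-out joint posterior $q_i(v,\syn) = \pr{\Mem_{\gtcell} = v, \Syn_i = \syn \vl \Tok_{-i} = \seq_{-i}}$, which carries all the information we need about the target cell $\Mem_{\gtcell}$.

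Second I would design the key and query so that the attention $\argmax$ isolates exactly these ``concentrated'' positions. Assumption~\ref{ass:nondegen_mem} guarantees that $\spanvec(\{\obs_{:,(\mem,\hidden)}\}_{\hidden\in\hiddenset^\star})$ and $\spanvec(\{\obs_{:,(\mem,\hidden)}\}_{\hidden\notin\hiddenset^\star})$ meet only at $\zeros$, so there is a well-defined linear projector $\Pi$ onto the former along the latter; the linear independence of $\{\obs_{:,(\mem,\hidden)}\}_{\mem\in\memset,\hidden\in\hiddenset^\star}$ gives a left inverse $\obs_\star^\dagger$ of that column block, and $\obs_\star^\dagger \Pi\, \gtprobs_i(\seq)$ equals the restriction of $p_i$ to $\memset\times\hiddenset^\star$. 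I would then pick $\keyparam$ and $\query$ (with $\beta_i = \zeros$) so that $\query^\top(\keyparam\gtprobs_i(\seq)) = \ones^\top \obs_\star^\dagger \Pi\, \gtprobs_i(\seq) = \pr{\Hidden_i \in \hiddenset^\star \vl \Tok_{-i} = \seq_{-i}}$, where summing the recovered coefficients over $v$ marginalizes out the memory. This quantity is at most $1$ and equals $1$ precisely on the concentrated positions; since $\seq \in \gR$ guarantees at least one such position exists, $\gI$ is exactly the set of $i$ with $\supp(\dist{\Hidden_i \vl \Tok_{-i}=\seq_{-i}}) \subseteq \hiddenset^\star$.

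Third I would build the value function to finish the Bayes computation at each selected position. Taking token embeddings to be rows of the emission matrix, $\eb(z) = \obs_{z,:}$, and setting $\valparam{}$ to extend $\obs_\star^\dagger$ to the full index set by padding with zero rows and $\lin_{(v,\gtcell,\syn)} = \gtlin_v$ (zero elsewhere), a short calculation gives, for $i \in \gI$, that $\val(\gtprobs_i(\seq), \eb(\seq_i)) = \sum_{v,\syn} \gtlin_v\, \obs_{\seq_i,(v,\gtcell,\syn)}\, q_i(v,\syn)$. The elementwise product with $\eb(\seq_i) = \obs_{\seq_i,:}$ supplies exactly the emission likelihood that updates the leave-one-out posterior $q_i$ to the full posterior: by Bayes' rule this sum equals $\pr{\Tok_i = \seq_i \vl \Tok_{-i}=\seq_{-i}} \cdot \gtlin^\top \dist{\Mem_{\gtcell} \vl \Tok_{1:\Steps} = \seq}$, and the prefactor is strictly positive since $\seq \in \supp(\dist{\Seq})$.

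Finally, because the full-sequence posterior $\gtlin^\top\dist{\Mem_{\gtcell}\vl\Tok_{1:\Steps}=\seq}$ does not depend on $i$, every selected position contributes a positive multiple of the same quantity, so the average $\frac{1}{|\gI|}\sum_{i\in\gI}\val(\cdot)$ is again a positive multiple of it and $\1(\attn(\cdot)\ge 0) = \gt(\seq)$. The main obstacle I anticipate is the attention-selection step: verifying that a single linear key functional of $\gtprobs_i(\seq)$ coincides with $\pr{\Hidden_i\in\hiddenset^\star\vl\Tok_{-i}=\seq_{-i}}$ (which is where \emph{both} halves of Assumption~\ref{ass:nondegen_mem} are essential) and that its argmax picks out precisely the concentrated positions; the regularity Assumption~\ref{ass:regularity} is used throughout to ensure the conditional distributions and posteriors above are well defined.
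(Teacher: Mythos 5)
Your proposal is correct and follows essentially the same route as the paper's proof: the decomposition $\gtprobs_i(\seq)=\obs\,p_i$ is the paper's Proposition~\ref{prop:multi_mem_cond_prob}, your key/query functional computing $\pr{\Hidden_i\in\hiddenset^\star\vl\Tok_{-i}=\seq_{-i}}$ (maximized exactly at the concentrated positions) matches Lemma~\ref{lem:hmm_mem_query_key} and Claim~\ref{claim:mem_attn_params}, and your value-function Bayes update via the elementwise product with $\obs_{\seq_i,:}$ yielding a positive multiple $\pr{\Tok_i=\seq_i\vl\Tok_{-i}=\seq_{-i}}$ of $\gtlin^\top\dist{\Mem_{\gtcell}\vl\Tok_{1:\Steps}=\seq}$ is exactly Lemma~\ref{lem:hmm_mem_val}. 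The only difference is cosmetic: you package the recovery map as a projector $\Pi$ followed by a left inverse, where the paper builds the equivalent per-state recovery matrices $B^{(\hidden)}$ and $\widebar{B}$.
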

The idea is to use the attention mechanism to attend to positions $i$ where $\supp(\dist{\Hidden_i \vl \Tok_{-i}= \tok_{-i}}) \subseteq \hiddenset^\star$. The intuition of Assumption~\ref{ass:nondegen_mem} is that such positions are more informative for recovering the latent posteriors; indeed, from the outputs $\gtprobs_i(\seq)$ at such $i$, the value function in the attention will be able to recover $\dist{\Mem_{\gtcell} \vl \Tok_{1:\Steps} = \seq}$. A full proof is provided in Section~\ref{sec:hmm_mem_proof}.

\subsection{Guarantees for prompt-tuning}\label{sec:mem_prompt_tune}
Though the generative modeling assumptions in this section already allowed us to relax the non-degeneracy assumptions, applying soft prompt tuning allows us to relax them even further. 
For simplicity, we consider the setting where there is a single memory cell, so $\Mem \in \memset$, and the downstream task is a linear classifier on the posterior of the memory: $\gt(\seq) = \1(\gtlin^\top \dist{\Mem | \Tok_{1:\Steps} = \seq} \ge 0)$. This simplified setting also doesn't require the explicit disentanglement between $\Cell_i$ and $\Syn_i$ in $\Hidden_i$. We analyze continuous prompt-tuning in a setting where the pretrained model $\bG$ follows the same abstraction as in Section~\ref{sec:prompt_tune}. We modify the model to take $|\memset||\hiddenset|$-dimensional vectors, so the proper embedding for token $z$ is given by $\eb(z) = \dist{\Tok_i = z | \Mem, \Hidden_i} = \obs_{z, :}^\top$. In Section~\ref{sec:mem_prompt_tune_proof}, we describe the formal construction and interpretation of $\bG$ in the more general setting with more memories. %

Letting $\prompt \in \R^{|\memset||\hiddenset|}$ denote the trainable prompt parameter, we define the input embeddings 
\begin{align} \label{eq:mem_embed}
	\hatembed(\seq) \triangleq (\prompt, \eb(\tok_1), \ldots, \eb(\tok_\steps))
\end{align} 
The downstream model applies an attention head to the output of $\bG$:
$
	\model(\seq) = \1(\attn((\bG_i(\hatembed(\seq)), \hatembed_i(\seq))_{i = 1}^{\steps + 1}) \ge 0)
$, where $\attn$ is defined in~\eqref{eq:mem_attn}. An additional stationarity assumption on $\dist{\Hidden_0}$ will simplify the recovery procedure (though it can be removed). 
\begin{assumption}[Stationarity]\label{ass:stationarity}
	Assumption~\ref{ass:regularity} holds on the Markov chain $\Hidden_0, \Hidden_1, \ldots$. Furthermore, $\dist{\Hidden_0}$ is the stationary distribution: $\dist{\Hidden_0} = \trans \dist{\Hidden_0}$, where $\trans$ is the transition matrix.
\end{assumption}

As before, we assume sparsity of $\gtlin$ and some non-degeneracy of $\obs$, though the assumption is more relaxed and easier to state compared to the vanilla HMM setting. 
\begin{assumption}[Relaxed version of Assumption~\ref{ass:nondegen_mem}]\label{ass:single_mem_prompt_non_degen}
	Let $\memset^\star \triangleq \supp(\gtlin)$ denote the set of non-zero coordinates in $\gtlin$. There exists a set of recoverable hidden states $\hiddenset^\star$, such that the collection of token emission probabilities from $\memset^\star \times \hiddenset^\star$, $\{\obs_{:, (\mem, \hidden)}\}_{\mem \in \memset^\star, \hidden \in \hiddenset^\star}$, is linearly independent. 

Furthermore, the span of these vectors must be disjoint from the span of token emission probabilities from $\memset^\star \times (\hiddenset \setminus \hiddenset^\star)$:
$
	\spanvec(\{\obs_{:, (\mem, \hidden)}\}_{\mem \in \memset^\star, \hidden \in \hiddenset^\star}) \cap 	\spanvec(\{\obs_{:, (\mem, \hidden')}\}_{\mem \in \memset^\star, \hidden \in \hiddenset \setminus \hiddenset^\star}) = \{\zeros_{|\tokset|}\} 
$.
\end{assumption}
We note that Assumption~\ref{ass:single_mem_prompt_non_degen}, and Assumption~\ref{ass:mem_prompt_non_degen} for multiple memories, are relaxations of Assumption~\ref{ass:nondegen_mem}, as they only consider memory values in $\supp(\gtlin)$, whereas Assumption~\ref{ass:nondegen_mem} considers all $\mem \in \memset$. An additional advantage of the memory-augmented HMM is that Assumption~\ref{ass:nondegen_mem} is simpler than Assumption~\ref{ass:non_degen_vanilla} and does not require any conditions on the transition matrix $\trans$. We now state our result for recovering $\gt$ with soft prompt tuning and an attention head. 
\begin{theorem}\label{thm:single_mem_prompt_tune}
	In the setting above, suppose that non-degeneracy Assumption~\ref{ass:single_mem_prompt_non_degen} and stationarity Assumption~\ref{ass:stationarity} hold. 
	Then there exists a prompt $\prompt$ and attention head on $\bG(\hatembed(\seq))$ and the token embeddings which can compute the ground-truth $\gt(\seq)$ for any $\seq \in \gR$, defined in~\eqref{eq:mem_concentrated_supp}: 
\begin{align}\nonumber
		\gt(\seq) = \1(\attn((\bG_i(\hatembed(\seq)), \hatembed_i(\seq))_{i = 1}^{\steps + 1}) \ge 0)
\end{align}
	where $\hatembed$ is the embedding in~\eqref{eq:mem_embed} and $\attn$ is defined in~\eqref{eq:mem_attn}.
\end{theorem}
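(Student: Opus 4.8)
The plan is to combine the prompt-tuning decomposition behind Theorem~\ref{thm:hmm_prompt_tune} with the attention-based recovery behind Theorem~\ref{thm:hmm_mem}. First I would unpack the effect of the prompt. As in Section~\ref{sec:mem_prompt_tune_proof}, prepending $\prompt$ to $\hatembed(\seq)$ is equivalent to running the memory-augmented HMM on the sequence $(\faktok, \seq_1, \ldots, \seq_\steps)$, where the fake token $\faktok$ has emission law $\dist{\hatTok_1 = \faktok \vl \Mem, \Hidden_1} = \prompt$ at position $1$. Running the message-passing abstraction of $\bG$, each real position $i$ then satisfies $\bG_i(\hatembed(\seq)) = \sum_{\mem, \hidden} \obs_{:, (\mem, \hidden)}\, \pr{\Mem = \mem, \Hidden_i = \hidden \vl \hatTok_{-i}}$ under this modified HMM, exactly the mixture-of-emission-columns form that the value function in~\eqref{eq:mem_attn} is built to invert.

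The crux is choosing $\prompt$ to collapse the memory posterior onto $\memset^\star = \supp(\gtlin)$. Because a priori $\Mem \perp (\Hidden_0, \Hidden_1, \ldots)$, I would set $\prompt_{(\mem, \hidden)} = a_\mem$ independent of $\hidden$, with $a_\mem > 0$ for $\mem \in \memset^\star$ and $a_\mem = 0$ otherwise. Observing $\faktok$ then contributes a likelihood factor that is uninformative about $\Hidden_1$ and multiplies the memory marginal by $a_\mem$; using the stationarity Assumption~\ref{ass:stationarity} to decouple the prompt's effect on the syntax chain from its effect on $\Mem$, this yields a decomposition analogous to~\eqref{eq:prompt_tune_sketch:1}: for $\mem \in \memset^\star$, $\pr{\Mem = \mem \vl \hatTok_{-i}}$ is proportional to $a_\mem$ times the original target $\pr{\Mem = \mem \vl \Tok_{1:\Steps} = \seq}$ (up to a positive, $\seq$-dependent scalar and the position-$i$ factor), while the mass on $\memset \setminus \memset^\star$ is exactly zero. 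Taking $a_\mem$ constant on $\memset^\star$ makes the reweighting trivial, so the sign of $\gtlin^\top \dist{\Mem \vl \Tok_{1:\Steps} = \seq}$ is preserved.

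With the memory posterior supported on $\memset^\star$, I would replay the recovery argument of Theorem~\ref{thm:hmm_mem} but restricted to $\memset^\star \times \hiddenset^\star$. For $\seq \in \gR$ there is a position $i$ with $\supp(\dist{\Hidden_i \vl \Tok_{-i} = \seq_{-i}}) \subseteq \hiddenset^\star$; at such a position $\bG_i(\hatembed(\seq))$ lies in $\spanvec(\{\obs_{:,(\mem,\hidden)}\}_{\mem \in \memset^\star, \hidden \in \hiddenset^\star})$, and the span-disjointness in Assumption~\ref{ass:single_mem_prompt_non_degen} lets the key function $\keyparam \bG_i(\hatembed(\seq))$, query $\query$, and position embeddings select exactly these positions in~\eqref{eq:attended_indices}. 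By the linear independence in Assumption~\ref{ass:single_mem_prompt_non_degen}, a value map $\valparam{}$ recovers the joint posterior $\pr{\Mem = \mem, \Hidden_i = \hidden \vl \hatTok_{-i}}$ on $\memset^\star \times \hiddenset^\star$; the elementwise product with $\eb(\seq_i) = \obs_{\seq_i, :}^\top$ reincorporates the position-$i$ emission (the Bayes update to conditioning on $\hatTok_i$ as well), and a final $\lin$ built from $\gtlin$ and constant in $\hidden$ marginalizes over $\hidden \in \hiddenset^\star$ to produce $\gtlin^\top \dist{\Mem \vl \Tok_{1:\Steps} = \seq}$ up to a positive scalar.

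I expect the main obstacle to be the middle step: verifying that the prompt simultaneously (i) zeroes the memory posterior off $\memset^\star$, (ii) leaves the relative weights within $\memset^\star$ unchanged (or changed only by a known positive diagonal), and (iii) does not destroy the concentration of the syntax posterior on $\hiddenset^\star$ that membership in $\gR$ guarantees, since $\gR$ is defined through the unprompted posterior in~\eqref{eq:mem_concentrated_supp}. The stationarity assumption is exactly what makes (ii) and (iii) tractable --- it lets me argue the fake token, being $\hidden$-independent, perturbs only the memory coordinate and leaves the marginal law of the syntax chain at each real position equal to its unprompted law (an extra initial state marginalizes back to the stationary distribution) --- but checking this decoupling carefully, including the bookkeeping of the $\seq$-dependent normalizing scalars $r_\seq$, is where the real work lies.
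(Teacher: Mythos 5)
Your proposal follows essentially the same route as the paper's proof: the fake-token reinterpretation of the prompt, the choice of an $\hidden$-independent prompt supported on $\supp(\gtlin)$ (the paper takes the indicator of $\{\mem : \mem_{\gtcell}\in\supp(\gtlin)\}$), stationarity to align the shifted posteriors, and the Theorem~\ref{thm:hmm_mem}-style attention construction restricted to $\memset^\star\times\hiddenset^\star$. The three verification points you flag as ``the real work'' are precisely the paper's Proposition~\ref{prop:mem_prompt_supp} (memory posterior collapses onto $\memset^\star$), Claim~\ref{claim:mem_prompt_hat_x} (the functional $\gtlin^\top\dist{\Mem_{\gtcell}\vl\cdot}$ is preserved up to a positive scalar), and Claim~\ref{claim:mem_prompt_hidden_supp} (the prompted posterior of $\Hidden_i$ inherits the concentration on $\hiddenset^\star$ from the unprompted one), with the only piece you omit being the degenerate case $(\faktok,\seq)\notin\supp(\dist{\hatSeq})$, where the paper checks that both the attention output and the target functional vanish.
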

The intuition for this proof is similar to Theorem~\ref{thm:hmm_prompt_tune}: the soft prompt conditions the memory $\Mem$ to concentrate on $\supp(\gtlin)$. As a result, all irrelevant information to the task is removed from $\bG_i(\hatembed(\seq))$, making it easier to recover the task-specific information about the posterior of $\Mem$. A more general theorem statement for the multiple memories setting, and the full proof, is provided in Section~\ref{sec:mem_prompt_tune_proof}

       \section{Simulations}
\label{sec:experiments}
We empirically evaluate our theoretical results by pretraining a BERT-like masked language model (MLM)~\cite{devlin2018bert} on synthetic data generated by an HMM. Our goal is to verify key implications of our theory in a more realistic setting where some assumptions, such as that $\gtprobs$ outputs exact conditional probabilities, may not hold. 
First, we compare head and prompt tuning and show that prompt tuning improves downstream performance, especially when the recovery problem is degenerate.
Second, we compare the effect of changing the data distribution from vanilla HMMs to memory-augmented HMMs on head tuning with an attention layer.
We find that the downstream performance improves when the data has a long-term memory component. These observations support our theory. Our code is available at the following URL: \url{https://github.com/sangmichaelxie/pretraining_analysis}. 

\begin{figure}
	\begin{minipage}{0.5\textwidth}
		\centering
	\includegraphics[width=0.9\textwidth]{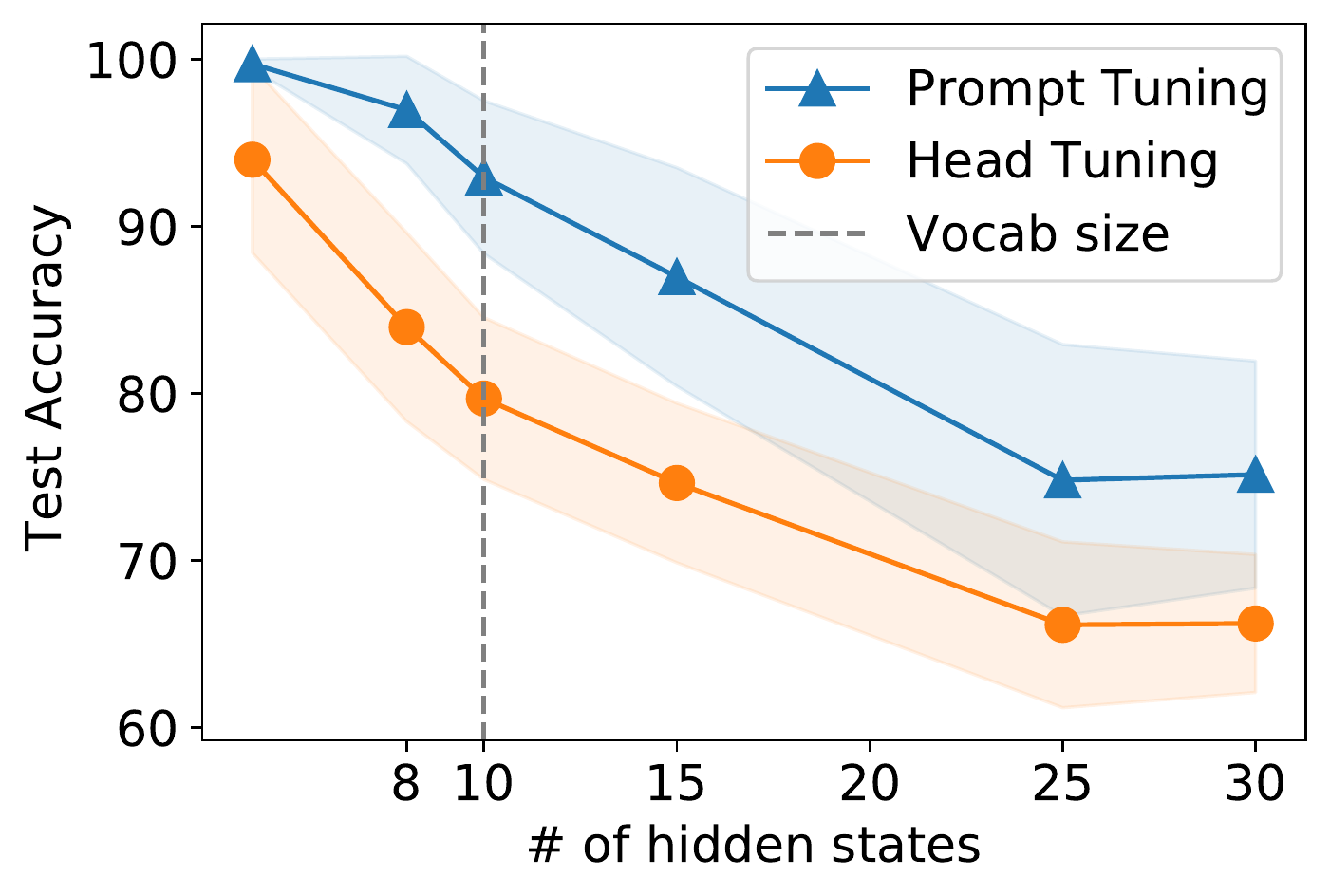}	\end{minipage}
	\begin{minipage}{0.5\textwidth}
		\centering
	\includegraphics[width=0.9\textwidth]{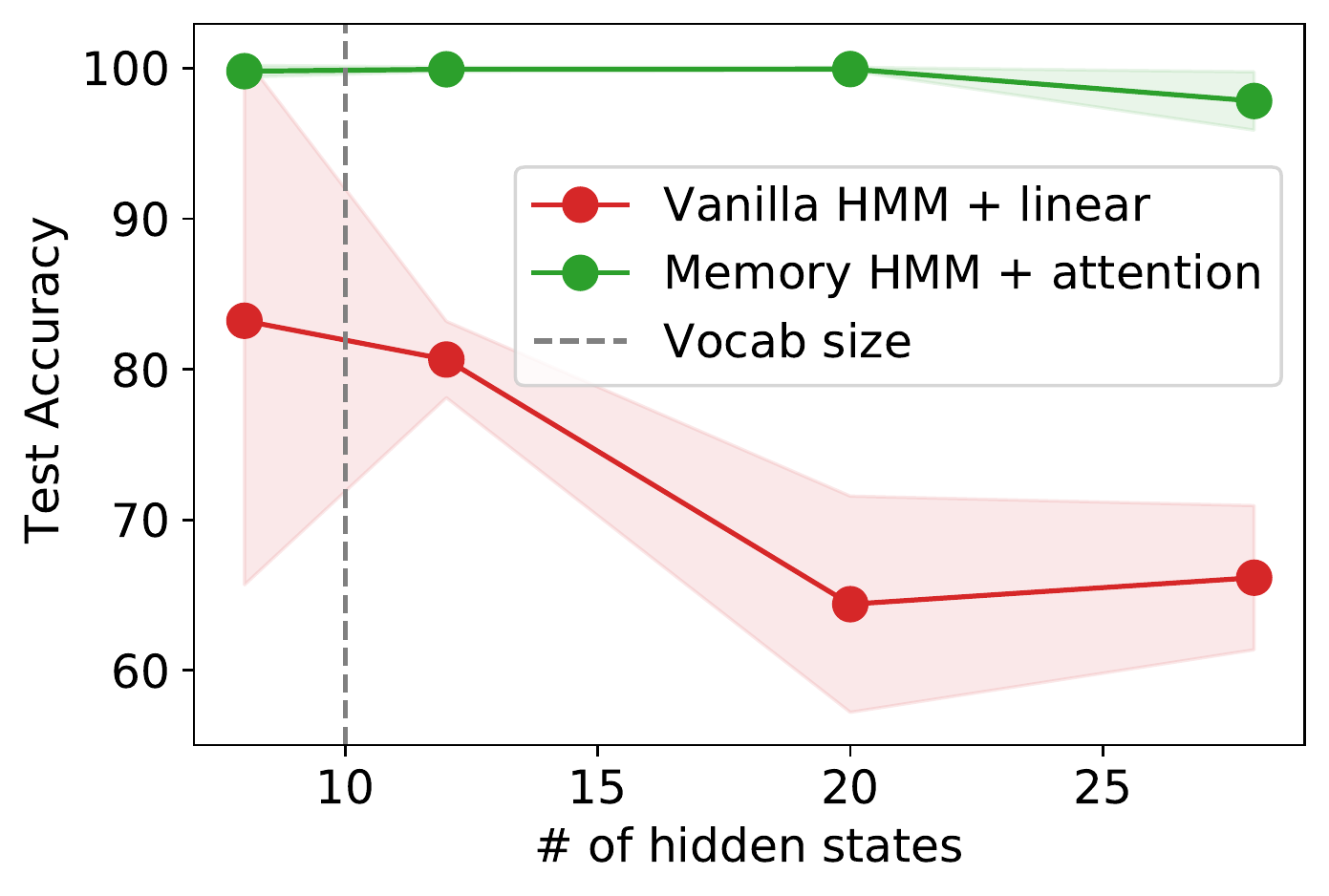}
	\end{minipage}

        \caption{\textbf{Left:} Head vs. prompt tuning with a linear head on synthetically-generated HMM data, with varying hidden state sizes. Prompt tuning improves downstream accuracy especially when the problem is degenerate ($|\hiddenset| > |\tokset|$). \textbf{Right:} Downstream accuracy of head tuning on data from vanilla HMM vs. memory-augmented HMM, across varying values of $|\memset| |\hiddenset|$. Long-term dependencies in the memory-augmented HMM data improve downstream recovery when using attention. Experiments average over 20 trials (left) and 5 trials (right) of pretraining and finetuning, with 95\% intervals shown. }
\label{fig:experiments12}
\end{figure}

\noindent{\bf Pretraining data and downstream task.}
We generate pretraining data from an HMM with randomly generated transition matrix, emission probabilities, and start distributions.
In all experiments, the HMMs have 10 vocabulary symbols, while the hidden state size varies. The downstream task uses input sequences $X_{1:T}$ of length 129,
where the first token $X_1=\texttt{[MASK]}$.
We consider binary classifcation where labels are generated using linear functions of the analytically-computed posteriors in the HMMs.  
In all experiments, the ground truth linear weight is sparse with 6 nonzero entries at uniformly random locations with Gaussian values.
More details are in Appendix~\ref{app:experiments}.

\noindent{\bf Head vs. prompt tuning.} We compare head and prompt tuning as the hidden state size of the data-generating HMM varies.
The downstream label is generated by computing $\gtlin^\top \dist{\Hidden_1 \vl X_{-1} = \tok_{-1}}$, where $\gtlin$ is a random ground-truth linear weight.
Head tuning learns a linear head on top of the softmax probabilities predicted by the pretrained model for filling in the first \texttt{[MASK]} token. Prompt tuning uses the same setup but also optimizes a length 20 continuous embedding and preprends it to the input sequence.

Figure~\ref{fig:experiments12} (left) shows that prompt tuning improves downstream performance substantially across all hidden state sizes (\{4,8,10,15,25,30\}). Prompt tuning improves especially when the hidden state size increases beyond the vocabulary size, which makes the recovery problem degenerate. Thus, as suggested by Theorem~\ref{thm:hmm_prompt_tune}, prompt tuning helps relax the non-degeneracy conditions.

\noindent{\bf Memory-augmented HMMs.} We investigate the effect of augmenting the data-generating HMM with a long-term memory.
We consider the single memory case with $|\hiddenset|=4$ and varying memory sizes $|\memset|\in\{2,3,5,7\}$. 
The downstream label is generated by computing $\gtlin^\top \dist{\Mem \vl X_{-1} = \seq_{-1}}$, where $\gtlin$ denotes the ground-truth weights.
Viewing the memory HMM as a HMM where the component on $\memset$ never changes, we can compare against the vanilla HMMs from the previous setting.
For the memory-augmented HMM, we use head tuning with a single-cell attention layer on the entire sequence of softmax probability outputs. For the vanilla HMM in the comparison, we use a linear head on the output at the first position, as an attention head would perform worse since the downstream task depends only on $\Hidden_1$ and not any other timesteps.

Figure~\ref{fig:experiments12} (right) verifies that head tuning recovers the downstream task better when there is more structure in the data, as predicted by Theorem~\ref{thm:hmm_mem}. Head tuning achieves near 100\% downstream accuracy on all hidden state sizes.

\section{Conclusion}
We analyze how pretraining on generic language modeling tasks can improve performance on diverse downstream tasks. In our analysis framework, the downstream task requires predicting properties of the posterior distribution over latent variables in an underlying generative model. When the generative model is a standard HMM, downstream recovery is possible with a simple classification head under strong non-degeneracy assumptions. We also show that we can relax the non-degeneracy conditions by changing the generative model to a memory-augmented HMM or using prompt tuning. The generative distributions studied here are meant to provide a first-cut result -- we also conjecture similar theorems to hold for other generative models, which we leave as an interesting direction for future work.

Another direction for future work is to analyze finetuning. Existing work analyzes finetuning for linear neural networks and obtains empirically useful insights~\citep{kumar2022fine}, but analyzing neural networks with nonlinear activations is very challenging. Our analysis of head and prompt tuning treats the model as a black box. Analyzing finetuning requires understanding how to open up the black box, which is a major open question. 

\section*{Acknowledgements}
We thank Percy Liang, Tianyi Zhang, and Nelson Liu for helpful discussions. CW was supported by a NSF Graduate Research Fellowship. SMX was supported by a NDSEG Fellowship. 
TM acknowledges support of Google Faculty Award, NSF IIS 2045685, and JD.com.
	\appendix
		\bibliographystyle{plainnat}
	\bibliography{all,refs}

\begin{thebibliography}{43}
\providecommand{\natexlab}[1]{#1}
\providecommand{\url}[1]{\texttt{#1}}
\expandafter\ifx\csname urlstyle\endcsname\relax
  \providecommand{\doi}[1]{doi: #1}\else
  \providecommand{\doi}{doi: \begingroup \urlstyle{rm}\Url}\fi

\bibitem[Arora et~al.(2019)Arora, Khandeparkar, Khodak, Plevrakis, and
  Saunshi]{arora2019theoretical}
Sanjeev Arora, Hrishikesh Khandeparkar, Mikhail Khodak, Orestis Plevrakis, and
  Nikunj Saunshi.
\newblock A theoretical analysis of contrastive unsupervised representation
  learning.
\newblock In \emph{International Conference on Machine Learning}, 2019.

\bibitem[Chen et~al.(2021)Chen, Xie, Zhang, Yan, Deng, Tan, Huang, Si, and
  Chen]{chen2021adaprompt}
Xiang Chen, Xin Xie, Ningyu Zhang, Jiahuan Yan, Shumin Deng, Chuanqi Tan, Fei
  Huang, Luo Si, and Huajun Chen.
\newblock Adaprompt: Adaptive prompt-based finetuning for relation extraction.
\newblock \emph{arXiv preprint arXiv:2104.07650}, 2021.

\bibitem[Chiu and Rush(2020)]{chiu2020scaling}
Justin~T Chiu and Alexander~M Rush.
\newblock Scaling hidden markov language models.
\newblock \emph{arXiv preprint arXiv:2011.04640}, 2020.

\bibitem[Devlin et~al.(2018)Devlin, Chang, Lee, and Toutanova]{devlin2018bert}
Jacob Devlin, Ming-Wei Chang, Kenton Lee, and Kristina Toutanova.
\newblock Bert: Pre-training of deep bidirectional transformers for language
  understanding.
\newblock \emph{arXiv preprint arXiv:1810.04805}, 2018.

\bibitem[Ethayarajh(2019)]{ethayarajh2019contextual}
Kawin Ethayarajh.
\newblock How contextual are contextualized word representations? comparing the
  geometry of bert, elmo, and gpt-2 embeddings.
\newblock \emph{arXiv preprint arXiv:1909.00512}, 2019.

\bibitem[Gao et~al.(2020)Gao, Fisch, and Chen]{gao2020making}
Tianyu Gao, Adam Fisch, and Danqi Chen.
\newblock Making pre-trained language models better few-shot learners.
\newblock \emph{arXiv preprint arXiv:2012.15723}, 2020.

\bibitem[Giulianelli et~al.(2018)Giulianelli, Harding, Mohnert, Hupkes, and
  Zuidema]{giulianelli2018under}
Mario Giulianelli, Jack Harding, Florian Mohnert, Dieuwke Hupkes, and Willem
  Zuidema.
\newblock Under the hood: Using diagnostic classifiers to investigate and
  improve how language models track agreement information.
\newblock \emph{arXiv preprint arXiv:1808.08079}, 2018.

\bibitem[Gruber et~al.(2007)Gruber, Weiss, and Rosen-Zvi]{gruber2007hidden}
Amit Gruber, Yair Weiss, and Michal Rosen-Zvi.
\newblock Hidden topic markov models.
\newblock In \emph{Artificial intelligence and statistics}, pages 163--170.
  PMLR, 2007.

\bibitem[Hambardzumyan et~al.(2021)Hambardzumyan, Khachatrian, and
  May]{hambardzumyan2021warp}
Karen Hambardzumyan, Hrant Khachatrian, and Jonathan May.
\newblock Warp: Word-level adversarial reprogramming.
\newblock \emph{arXiv preprint arXiv:2101.00121}, 2021.

\bibitem[HaoChen et~al.(2021)HaoChen, Wei, Gaidon, and Ma]{haochen2021provable}
Jeff~Z. HaoChen, Colin Wei, Adrien Gaidon, and Tengyu Ma.
\newblock Provable guarantees for self-supervised deep learning with spectral
  contrastive loss, 2021.

\bibitem[Hewitt and Manning(2019)]{hewitt2019structural}
John Hewitt and Christopher~D Manning.
\newblock A structural probe for finding syntax in word representations.
\newblock In \emph{Proceedings of the 2019 Conference of the North American
  Chapter of the Association for Computational Linguistics: Human Language
  Technologies, Volume 1 (Long and Short Papers)}, pages 4129--4138, 2019.

\bibitem[Jawahar et~al.(2019)Jawahar, Sagot, and Seddah]{jawahar2019does}
Ganesh Jawahar, Beno{\^\i}t Sagot, and Djam{\'e} Seddah.
\newblock What does bert learn about the structure of language?
\newblock In \emph{ACL 2019-57th Annual Meeting of the Association for
  Computational Linguistics}, 2019.

\bibitem[Jiang et~al.(2020)Jiang, Xu, Araki, and Neubig]{jiang2020can}
Zhengbao Jiang, Frank~F Xu, Jun Araki, and Graham Neubig.
\newblock How can we know what language models know?
\newblock \emph{Transactions of the Association for Computational Linguistics},
  8:\penalty0 423--438, 2020.

\bibitem[Joshi et~al.(2020)Joshi, Chen, Liu, Weld, Zettlemoyer, and
  Levy]{joshi2020spanbert}
Mandar Joshi, Danqi Chen, Yinhan Liu, Daniel~S Weld, Luke Zettlemoyer, and Omer
  Levy.
\newblock Spanbert: Improving pre-training by representing and predicting
  spans.
\newblock \emph{Transactions of the Association for Computational Linguistics},
  8:\penalty0 64--77, 2020.

\bibitem[Kim et~al.(2020)Kim, Choi, Edmiston, and Lee]{kim2020pre}
Taeuk Kim, Jihun Choi, Daniel Edmiston, and Sang-goo Lee.
\newblock Are pre-trained language models aware of phrases? simple but strong
  baselines for grammar induction.
\newblock \emph{arXiv preprint arXiv:2002.00737}, 2020.

\bibitem[Koller and Friedman(2009)]{koller2009probabilistic}
Daphne Koller and Nir Friedman.
\newblock \emph{Probabilistic graphical models: principles and techniques}.
\newblock MIT press, 2009.

\bibitem[Kumar et~al.(2022)Kumar, Raghunathan, Jones, Ma, and
  Liang]{kumar2022fine}
Ananya Kumar, Aditi Raghunathan, Robbie Jones, Tengyu Ma, and Percy Liang.
\newblock Fine-tuning can distort pretrained features and underperform
  out-of-distribution.
\newblock \emph{arXiv preprint arXiv:2202.10054}, 2022.

\bibitem[Kupiec(1992)]{kupiec1992robust}
Julian Kupiec.
\newblock Robust part-of-speech tagging using a hidden markov model.
\newblock \emph{Computer speech \& language}, 6\penalty0 (3):\penalty0
  225--242, 1992.

\bibitem[Lee et~al.(2020)Lee, Lei, Saunshi, and Zhuo]{lee2020predicting}
Jason~D Lee, Qi~Lei, Nikunj Saunshi, and Jiacheng Zhuo.
\newblock Predicting what you already know helps: Provable self-supervised
  learning.
\newblock \emph{arXiv preprint arXiv:2008.01064}, 2020.

\bibitem[Lester et~al.(2021)Lester, Al-Rfou, and Constant]{lester2021power}
Brian Lester, Rami Al-Rfou, and Noah Constant.
\newblock The power of scale for parameter-efficient prompt tuning.
\newblock \emph{arXiv preprint arXiv:2104.08691}, 2021.

\bibitem[Levine et~al.(2020)Levine, Lenz, Lieber, Abend, Leyton-Brown,
  Tennenholtz, and Shoham]{levine2020pmi}
Yoav Levine, Barak Lenz, Opher Lieber, Omri Abend, Kevin Leyton-Brown, Moshe
  Tennenholtz, and Yoav Shoham.
\newblock Pmi-masking: Principled masking of correlated spans.
\newblock \emph{arXiv preprint arXiv:2010.01825}, 2020.

\bibitem[Li and Liang(2021)]{li2021prefix}
Xiang~Lisa Li and Percy Liang.
\newblock Prefix-tuning: Optimizing continuous prompts for generation.
\newblock \emph{arXiv}, 2021.

\bibitem[Liu et~al.(2021)Liu, HaoChen, Gaidon, and Ma]{liu2021selfsupervised}
Hong Liu, Jeff~Z. HaoChen, Adrien Gaidon, and Tengyu Ma.
\newblock Self-supervised learning is more robust to dataset imbalance, 2021.

\bibitem[Peters et~al.(2018)Peters, Neumann, Iyyer, Gardner, Clark, Lee, and
  Zettlemoyer]{peters2018deep}
Matthew~E Peters, Mark Neumann, Mohit Iyyer, Matt Gardner, Christopher Clark,
  Kenton Lee, and Luke Zettlemoyer.
\newblock Deep contextualized word representations.
\newblock \emph{arXiv preprint arXiv:1802.05365}, 2018.

\bibitem[Qin and Eisner(2021)]{qin2021learning}
Guanghui Qin and Jason Eisner.
\newblock Learning how to ask: Querying lms with mixtures of soft prompts.
\newblock \emph{arXiv preprint arXiv:2104.06599}, 2021.

\bibitem[Rabiner and Juang(1986)]{rabiner1986introduction}
Lawrence Rabiner and Biinghwang Juang.
\newblock An introduction to hidden markov models.
\newblock \emph{ieee assp magazine}, 3\penalty0 (1):\penalty0 4--16, 1986.

\bibitem[Radford et~al.(2018)Radford, Narasimhan, Salimans, and
  Sutskever]{radford2018improving}
Alec Radford, Karthik Narasimhan, Tim Salimans, and Ilya Sutskever.
\newblock Improving language understanding by generative pre-training.
\newblock 2018.

\bibitem[Raffel et~al.(2019)Raffel, Shazeer, Roberts, Lee, Narang, Matena,
  Zhou, Li, and Liu]{raffel2019exploring}
Colin Raffel, Noam Shazeer, Adam Roberts, Katherine Lee, Sharan Narang, Michael
  Matena, Yanqi Zhou, Wei Li, and Peter~J Liu.
\newblock Exploring the limits of transfer learning with a unified text-to-text
  transformer.
\newblock \emph{arXiv preprint arXiv:1910.10683}, 2019.

\bibitem[Rogers et~al.(2020)Rogers, Kovaleva, and Rumshisky]{rogers2020primer}
Anna Rogers, Olga Kovaleva, and Anna Rumshisky.
\newblock A primer in bertology: What we know about how bert works.
\newblock \emph{Transactions of the Association for Computational Linguistics},
  8:\penalty0 842--866, 2020.

\bibitem[Saunshi et~al.(2020)Saunshi, Malladi, and
  Arora]{saunshi2020mathematical}
Nikunj Saunshi, Sadhika Malladi, and Sanjeev Arora.
\newblock A mathematical exploration of why language models help solve
  downstream tasks.
\newblock \emph{arXiv preprint arXiv:2010.03648}, 2020.

\bibitem[Schick and Sch{\"u}tze(2020{\natexlab{a}})]{schick2020exploiting}
Timo Schick and Hinrich Sch{\"u}tze.
\newblock Exploiting cloze questions for few shot text classification and
  natural language inference.
\newblock \emph{arXiv preprint arXiv:2001.07676}, 2020{\natexlab{a}}.

\bibitem[Schick and Sch{\"u}tze(2020{\natexlab{b}})]{schick2020s}
Timo Schick and Hinrich Sch{\"u}tze.
\newblock It's not just size that matters: Small language models are also
  few-shot learners.
\newblock \emph{arXiv preprint arXiv:2009.07118}, 2020{\natexlab{b}}.

\bibitem[Shin et~al.(2020)Shin, Razeghi, Logan~IV, Wallace, and
  Singh]{shin2020autoprompt}
Taylor Shin, Yasaman Razeghi, Robert~L Logan~IV, Eric Wallace, and Sameer
  Singh.
\newblock Autoprompt: Eliciting knowledge from language models with
  automatically generated prompts.
\newblock \emph{arXiv preprint arXiv:2010.15980}, 2020.

\bibitem[Sinha et~al.(2021)Sinha, Jia, Hupkes, Pineau, Williams, and
  Kiela]{sinha2021masked}
Koustuv Sinha, Robin Jia, Dieuwke Hupkes, Joelle Pineau, Adina Williams, and
  Douwe Kiela.
\newblock Masked language modeling and the distributional hypothesis: Order
  word matters pre-training for little.
\newblock \emph{arXiv preprint arXiv:2104.06644}, 2021.

\bibitem[Sukhbaatar et~al.(2015)Sukhbaatar, Szlam, Weston, and
  Fergus]{sukhbaatar2015end}
Sainbayar Sukhbaatar, Arthur Szlam, Jason Weston, and Rob Fergus.
\newblock End-to-end memory networks.
\newblock \emph{arXiv preprint arXiv:1503.08895}, 2015.

\bibitem[Tenney et~al.(2019{\natexlab{a}})Tenney, Das, and
  Pavlick]{tenney2019bert}
Ian Tenney, Dipanjan Das, and Ellie Pavlick.
\newblock Bert rediscovers the classical nlp pipeline.
\newblock \emph{arXiv preprint arXiv:1905.05950}, 2019{\natexlab{a}}.

\bibitem[Tenney et~al.(2019{\natexlab{b}})Tenney, Xia, Chen, Wang, Poliak,
  McCoy, Kim, Van~Durme, Bowman, Das, et~al.]{tenney2019you}
Ian Tenney, Patrick Xia, Berlin Chen, Alex Wang, Adam Poliak, R~Thomas McCoy,
  Najoung Kim, Benjamin Van~Durme, Samuel~R Bowman, Dipanjan Das, et~al.
\newblock What do you learn from context? probing for sentence structure in
  contextualized word representations.
\newblock \emph{arXiv preprint arXiv:1905.06316}, 2019{\natexlab{b}}.

\bibitem[Tosh et~al.(2020)Tosh, Krishnamurthy, and Hsu]{tosh2020contrastive}
Christopher Tosh, Akshay Krishnamurthy, and Daniel Hsu.
\newblock Contrastive estimation reveals topic posterior information to linear
  models.
\newblock \emph{arXiv:2003.02234}, 2020.

\bibitem[Tosh et~al.(2021)Tosh, Krishnamurthy, and Hsu]{tosh2021contrastive}
Christopher Tosh, Akshay Krishnamurthy, and Daniel Hsu.
\newblock Contrastive learning, multi-view redundancy, and linear models.
\newblock In \emph{Algorithmic Learning Theory}, pages 1179--1206. PMLR, 2021.

\bibitem[Wei et~al.(2020)Wei, Shen, Chen, and Ma]{wei2020theoretical}
Colin Wei, Kendrick Shen, Yining Chen, and Tengyu Ma.
\newblock Theoretical analysis of self-training with deep networks on unlabeled
  data, 2020.
\newblock URL \url{https://openreview.net/forum?id=rC8sJ4i6kaH}.

\bibitem[Weston et~al.(2014)Weston, Chopra, and Bordes]{weston2014memory}
Jason Weston, Sumit Chopra, and Antoine Bordes.
\newblock Memory networks.
\newblock \emph{arXiv preprint arXiv:1410.3916}, 2014.

\bibitem[Zhang and Hashimoto(2021)]{zhang2021inductive}
Tianyi Zhang and Tatsunori Hashimoto.
\newblock On the inductive bias of masked language modeling: From statistical
  to syntactic dependencies.
\newblock \emph{arXiv preprint arXiv:2104.05694}, 2021.

\bibitem[Zhong et~al.(2021)Zhong, Friedman, and Chen]{zhong2021factual}
Zexuan Zhong, Dan Friedman, and Danqi Chen.
\newblock Factual probing is [mask]: Learning vs. learning to recall.
\newblock \emph{arXiv preprint arXiv:2104.05240}, 2021.

\end{thebibliography}

	\newpage
	\section{Proofs for Section~\ref{sec:hmm}}
\label{sec:hmm_proof}
We provide the formal proof of Theorem~\ref{thm:hmm_simple} based on the sketch in Section~\ref{sec:hmm}. The following lemma will be useful in our analysis.

\begin{claim}\label{claim:hmm_time_conversion}
	In the setting of Section~\ref{sec:hmm}, suppose that Assumption~\ref{ass:regularity} holds. Fix any timestep $i \ge 1$. Then there exists a diagonal matrix $D$ such that for all $\seq \in \supp(\dist{\Seq})$,
	\begin{align}
		\dist{\Hidden_i \vl \Tok_{i + 1: \Steps + i} = \seq} = r_x D \dist{\Hidden_0 \vl \Tok_{1:\Steps} = \seq} \nonumber
	\end{align}
where $r_x > 0$ is a positive scalar.
\end{claim}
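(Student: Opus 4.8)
The plan is to relate the two posteriors through Bayes' rule, using the time-invariance of the HMM's transition and emission probabilities to identify their likelihood vectors, and absorbing the mismatch between the priors $\dist{\Hidden_0}$ and $\dist{\Hidden_i}$ into the diagonal matrix $D$. First I would apply Bayes' rule to each side. Writing $\dist{\Tok_{1:\Steps} = \seq \vl \Hidden_0}$ for the likelihood vector with entries $\pr{\Tok_{1:\Steps} = \seq \vl \Hidden_0 = \hidden}$ (the paper's convention for $\dist{U = u \vl V}$), we have $\dist{\Hidden_0 \vl \Tok_{1:\Steps} = \seq} = \frac{1}{\pr{\Tok_{1:\Steps} = \seq}} \dist{\Tok_{1:\Steps} = \seq \vl \Hidden_0} \odot \dist{\Hidden_0}$, and analogously $\dist{\Hidden_i \vl \Tok_{i+1:\Steps+i} = \seq} = \frac{1}{\pr{\Tok_{i+1:\Steps+i} = \seq}} \dist{\Tok_{i+1:\Steps+i} = \seq \vl \Hidden_i} \odot \dist{\Hidden_i}$.

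The conceptual crux is that the two likelihood vectors coincide: $\dist{\Tok_{1:\Steps} = \seq \vl \Hidden_0} = \dist{\Tok_{i+1:\Steps+i} = \seq \vl \Hidden_i}$. Expanding either entry by summing over the intermediate hidden states, both equal the same product $\sum_{\hidden_1, \ldots, \hidden_\steps} \prod_{j=1}^\steps \trans_{\hidden_j, \hidden_{j-1}} \obs_{\seq_j, \hidden_j}$, where the conditioned state plays the role of $\hidden_0$. Because the transition matrix $\trans$ and emission matrix $\obs$ are time-invariant, the absolute starting time index is irrelevant and a re-indexing $\hidden_{i+j} \mapsto \hidden_j$ identifies the two expressions. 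Next I would introduce $D = \diag(\dist{\Hidden_i} / \dist{\Hidden_0})$ using elementwise division, which is well-defined since Assumption~\ref{ass:regularity} gives $\dist{\Hidden_0}$ full support. With the likelihoods identified, $\dist{\Tok_{i+1:\Steps+i} = \seq \vl \Hidden_i} \odot \dist{\Hidden_i} = D\big(\dist{\Tok_{1:\Steps} = \seq \vl \Hidden_0} \odot \dist{\Hidden_0}\big)$, and substituting into the two Bayes expressions yields the claimed identity with $r_{\seq} = \pr{\Tok_{1:\Steps} = \seq} / \pr{\Tok_{i+1:\Steps+i} = \seq}$.

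The only real care needed is verifying that $r_{\seq}$ is a well-defined positive scalar, i.e. that $\pr{\Tok_{i+1:\Steps+i} = \seq} > 0$ whenever $\seq \in \supp(\dist{\Seq})$. This reduces to showing $\dist{\Hidden_i}$ has full support for every $i \ge 0$, which I expect to be the main (albeit minor) technical obstacle. Ergodicity (Assumption~\ref{ass:regularity}) gives irreducibility, so every state has at least one positive-probability incoming transition; combined with full support of $\dist{\Hidden_0}$, a straightforward induction on $i$ shows $\dist{\Hidden_i}$ has full support. Consequently $\pr{\Tok_{i+1:\Steps+i} = \seq} = \sum_\hidden \pr{\Tok_{i+1:\Steps+i} = \seq \vl \Hidden_i = \hidden} \pr{\Hidden_i = \hidden} > 0$, since $\pr{\Tok_{1:\Steps} = \seq} > 0$ forces some state $\hidden$ with positive likelihood and that state has positive prior under $\dist{\Hidden_i}$. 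The same full-support fact shows $D$ has strictly positive diagonal entries, completing the argument.
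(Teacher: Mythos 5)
Your proposal is correct and follows essentially the same route as the paper's proof: apply Bayes' rule to both posteriors, identify the two likelihood vectors via time-invariance of the transitions and emissions, and absorb the prior ratio $\dist{\Hidden_i}/\dist{\Hidden_0}$ into the diagonal matrix $D$ with $r_x = \pr{\Tok_{1:\Steps}=\seq}/\pr{\Tok_{i+1:\Steps+i}=\seq}$. The only difference is that you spell out the full-support induction for $\dist{\Hidden_i}$ and the sum-over-paths identification of the likelihoods, which the paper leaves implicit.
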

\begin{proof}
	First, we note that by Assumption~\ref{ass:regularity}, $\dist{\Hidden_i}$ has full support. As a consequence, $\pr{\Tok_{i + 1:\steps + i} = x} > 0$. By Bayes' rule, 
	\begin{align}
			\dist{\Hidden_i \vl \Tok_{i + 1: \Steps + i} = \seq} &= \frac{\dist{ \Tok_{i + 1: \Steps + i} = \seq \vl \Hidden_i} \odot \dist{\Hidden_i}}{\pr{\Tok_{i + 1:\Steps + i} = \seq}}\nonumber \\
				&= \frac{\dist{ \Tok_{1: \Steps} = \seq \vl \Hidden_0 } \odot \dist{\Hidden_0}}{\pr{\Tok_{i + 1:\Steps + 1} = \seq}} \odot \frac{\dist{\Hidden_i}}{\dist{\Hidden_0}} \tag{by Markovian property of HMMs}\\
				&= \dist{\Hidden_0 \vl \Tok_{1:\Steps} = \seq} \odot \frac{\dist{\Hidden_i}}{\dist{\Hidden_0}} \cdot \frac{\pr{\Tok_{1:\Steps} = \tok}}{\pr{\Tok_{i + 1:\Steps + i} = \tok}}\nonumber
	\end{align}
	Note that the vector $\frac{\dist{\Hidden_i}}{\dist{\Hidden_0}} $ has finite and positive entries. The same applies to the ratio $r_x \triangleq \frac{\pr{\Tok_{1:\Steps} = \tok}}{\pr{\Tok_{i + 1:\Steps + i} = \tok}}$. Thus, we get the desired statement.
\end{proof}
The proof of Theorem~\ref{thm:hmm_simple} follows below.
\begin{proof}[Proof of Theorem~\ref{thm:hmm_simple}]
	By definition, $\gtprobs_1(\seq') = \dist{\Tok_1 \vl \Tok_{2:\Steps +1} = \tok}$. Therefore, our goal is to rewrite $\dist{\Hidden_0 \vl \Seq_{1:T} =x}$ as a linear function of $\dist{\Tok_1 | \Tok_{2:\Steps +1} = \tok}$ (up to a scaling which won't affect the linear head prediction). Concretely, we will show 
	\begin{align}
		\dist{\Hidden_0 \vl \Tok_{1:\Steps} = \tok} = r_x B \dist{\Tok_1 \vl \Tok_{2:\Steps +1} = \tok}  \label{eqn:4}
	\end{align}
	for a scalar $r_x\ge 0$. With this equation, taking $\lin= \gtlin^\top B$ will give the desired result. 
	
First, observe that  $\dist{\Tok_1 \vl \Tok_{2:\Steps +1} = \tok} 	 = \obs \dist{\Hidden_1 \vl \Tok_{2:\Steps + 1} = \tok}$ by Proposition~\ref{prop:ci}. 
	Next, we apply Claim~\ref{claim:hmm_time_conversion} to obtain an invertible matrix $D$ such that for all $\seq \in \supp(\dist{\Seq})$, $\dist{\Hidden_1 | \Tok_{2:\Steps + 1} = \seq} = r_x D\dist{\Hidden_0 | \Tok_{1:\Steps} = \seq}$, where $r_x > 0$ is a scalar.
	
	If $W$ has full row rank, it has a left inverse $W^\dagger$ with $W^\dagger W = I_{|\hiddenset| \times |\hiddenset|}$. Choosing $\lin = \gtlin D^{-1} W^\dagger$, we obtain
	\begin{align}
		\1(\lin^\top \gtprobs_1(\seq') \ge 0) &= \1(\gtlin^\top D^{-1} W^\dagger W \dist{\Hidden_1 \vl \Tok_{2:\Steps + 1} = \seq} \ge 0) \nonumber\\&= \1 (\gtlin^\top \dist{\Hidden_0 \vl \Tok_{1:\Steps}= \seq} \ge 0)= \gt(\seq) \nonumber
	\end{align} 
\end{proof}

Next, we complete the proof of Proposition~\ref{prop:ci}.
\begin{proof}[Proof of Proposition~\ref{prop:ci}]
	We write
	\begin{align}
		\dist{U \vl V = v} &= \sum_{z} \dist{U, Z = z \vl V = v} \nonumber\\
		&= \sum_z \dist{U \vl Z = z, V = v} \pr{Z= z \vl V =v}  \tag{by Bayes' rule}\\
		&= \sum_z \dist{U \vl Z = z} \pr{Z= z \vl V =v} \tag{since $U \perp V \vl Z$}\\
		&= \dist{U\vl Z}\dist{Z \vl V = v} \nonumber
	\end{align}
\end{proof}
	\section{Formal abstraction for prompt tuning and proofs for Section~\ref{sec:prompt_tune}}\label{sec:hmm_prompt_tune_proof}

\newcommand{\larrow}[1]{\overleftarrow{#1}}
\newcommand{\rarrow}[1]{\overrightarrow{#1}}
\newcommand{\ldelta}{\larrow{\delta}}
\newcommand{\rdelta}{\rarrow{\delta}}
We first formalize the definition of the model $\bG$ described in Section~\ref{sec:prompt_tune}. The model $\bG $ takes a sequence of embedding vectors $v = (v_1, \ldots, v_t)$ as input and implements message passing to compute a sequence of $\steps$ outputs. We first define left and right messages $\ldelta_{i + 1 \to i}(v)$ and $\rdelta_{i - 1 \to i}(v)$ for $i \in [\steps]$, as follows:
	\begin{align}
	\ldelta_{\steps + 1 \to \steps}(\embed) &= \dist{\Hidden_t} \nonumber\\
	\ldelta_{i \to i - 1}(\embed) &= \dist{\Hidden_{i - 1} \vl \Hidden_i}(\ldelta_{i + 1 \to i}(v) \odot v_i) \ \forall 1 < i < \steps \nonumber\\
	\rdelta_{0 \to 1}(\embed) &= \dist{\Hidden_1} \nonumber\\
	\rdelta_{i \to i + 1}(\embed) &= \dist{\Hidden_{i + 1} \vl \Hidden_i}(\rdelta_{i - 1 \to i}(v) \odot v_i) \ \forall 1 < i < \steps \nonumber
\end{align}
Next, we define the aggregated message at timestep $i$ by 
\begin{align}\label{eq:hmm_prompt_tau}
		\tau_i(v) \triangleq \begin{cases}
		\ldelta_{2 \to 1}(v) & \text{ if } i = 1\\
		\frac{\ldelta_{i + 1 \to i}(v) \odot \rdelta_{i - 1 \to i}(v)}{\dist{\Hidden_i}} &\text{ if } 1 < i < t\\
		\rdelta_{\steps - 1 \to \steps}(v) &\text{ if } i = t
	\end{cases}
\end{align}
Note that if Assumption~\ref{ass:regularity} holds about the Markov chain $\Hidden_0, \Hidden_1, \ldots$, $\tau_i(v)$ is always well-defined because $\dist{\Hidden_i}$ will have full support. Note that for the proper embeddings $\embed(\tok_i) = \dist{\Tok_i = \tok_i \vl \Hidden_i}$, where for $\seq = (\tok_1, \ldots, \tok_{\steps})$, we use $\embed(\seq) = (\embed(\tok_1), \ldots, \embed(\tok_{\steps}))$, we can check via classical results on message passing~\citep{koller2009probabilistic} that 
\begin{align}
	\tau_i(\embed(\seq)) = \dist{\Hidden_i, \Tok_{-i}= \seq_{-i}} \nonumber
\end{align}
Finally, we let the model model $\bG$ compute
	\begin{align}
		\bG_i(v) = \obs \frac{\tau_i(v)}{\|\tau_i(v)\|_1} \nonumber
	\end{align}
	There is an edge case where the demoninator is 0, i.e. $\|\tau_i(v)\|_1 = 0$. To make the behavior of $\bG$ well-defined, in this case we set $\bG_i(v) = \zeros_{|\tokset|}$. We observe that if the input embedding are obtained by $\embed(\seq)$, $\bG_i(v)$ indeed computes the desired conditional probability vector for $\seq \in \supp(\dist{\Seq})$:
	\begin{align}
		\bG_i(\embed(\tok))= \dist{\Tok_i | \Seq_{-i}= \seq_{-i}} \nonumber
	\end{align}

\subsection{Proof of Theorem~\ref{thm:hmm_prompt_tune}}
First we formalize the observation that soft prompt tuning is equivalent to adding a fake token $\faktok$ to the vocabulary with emission probabilities at timestep 1 given by $\prompt$, and letting $\bG$ compute conditional probabilities for this new distribution over sequences.
\begin{lemma}\label{lem:hmm_prompt_fake_token}
	In the setting of Theorem~\ref{thm:hmm_prompt_tune}, fix any prompt vector $\prompt \in [0, 1]^{|\hiddenset|}$. Define the random variable $\hatTok$ with the same emission probabilities as $\Tok$ for $i > 1$: $\dist{\hatTok_i \vl \Hidden_i} = \dist{\Tok_i \vl \Hidden_i}$. For timestep 1, we define the emission probabilities of $\hatTok_1$ as follows: 
	\begin{align}
		\dist{\hatTok_1 = \faktok \vl \Hidden_1} &= \prompt \nonumber\\
		\dist{\hatTok_1 = z \vl \Hidden_1} &= (1 - \prompt) \odot \dist{\Tok_1 = z \vl \Hidden_1} \ \forall z \in \tokset\nonumber
	\end{align}
In the above equations, $\faktok$ is a fake token added to the vocabulary at timestep 1. It follows that for any $i$, defining $\tau_i$ as in~\eqref{eq:hmm_prompt_tau}
\begin{align}\label{eq:hmm_prompt_fake_token:1}
	\tau_i(\hatembed(\seq)) = \dist{\Hidden_i, \hatSeq_{-i}= (\faktok, \maskalt, \seq)_{-i}}
\end{align}
As a consequence, it follows that for $i > 1$ and any $\seq$ such that $(\faktok, \maskalt, \seq)_{-i} \in \supp(\dist{\hatSeq_{-i}})$, 
\begin{align}
	\bG_i(\hatembed(\seq)) = \dist{\hatTok_i \vl \hatSeq_{-i}= (\faktok, \maskalt, \seq)_{-i}} = \obs \dist{\Hidden_{i} \vl\hatSeq_{-i}= (\faktok, \maskalt, \seq)_{-i}} \nonumber
\end{align}
For any $\seq$ with $(\faktok, \maskalt, \seq)_{-i} \notin \supp(\dist{\hatSeq_{-i}})$, $\bG_i(\hatembed(\seq)) = \zeros$. 
\end{lemma}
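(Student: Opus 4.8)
The plan is to prove this lemma by a direct reduction to the message-passing correctness identity $\tau_i(\embed(\seq)) = \dist{\Hidden_i, \Tok_{-i} = \seq_{-i}}$ already recorded for the original HMM. The crucial observation is that the modified process $\hatSeq$ shares the \emph{same} hidden Markov chain $(\Hidden_0, \Hidden_1, \ldots)$ as the original: only the emission at position $1$ is altered, so the transition matrix $\trans$ and the marginals $\dist{\Hidden_i}$ appearing in the definitions of $\ldelta$, $\rdelta$, and $\tau$ in~\eqref{eq:hmm_prompt_tau} are unchanged. Consequently, evaluating those same formulas on the input $\hatembed(\seq)$ is literally the message-passing recursion for $\hatSeq$, with the emission likelihoods supplied through the $v_i$ rather than through a fixed $\obs$.

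First I would check that the modified emissions define a valid HMM: for each hidden state $h$, $\pr{\hatTok_1 = \faktok \vl \Hidden_1 = h} + \sum_{z \in \tokset}\pr{\hatTok_1 = z \vl \Hidden_1 = h} = \prompt_h + (1 - \prompt_h)\sum_{z}\pr{\Tok_1 = z \vl \Hidden_1 = h} = 1$, using $\prompt \in [0,1]^{|\hiddenset|}$ and that the original position-$1$ emission is a distribution. Next I would identify the coordinates of $\hatembed(\seq)$ with proper embeddings of $\hatSeq$: the first coordinate $\prompt$ is exactly the likelihood $\dist{\hatTok_1 = \faktok \vl \Hidden_1}$ of the fake token; the coordinates $\embed(\tok_1), \ldots, \embed(\tok_\steps)$ are the likelihoods of the genuine tokens, whose emissions are unchanged for positions $> 1$; and the mask coordinate, which I take to be $\embed(\maskalt) = \ones$, encodes marginalization of position $2$, since $\sum_{z}\dist{\Tok_2 = z \vl \Hidden_2} = \ones$. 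With these identifications, the classical message-passing result applied to $\hatSeq$ yields~\eqref{eq:hmm_prompt_fake_token:1}; the mask position contributes a summation over its token by linearity of the $\ldelta, \rdelta$ recursions in each $v_j$, which is precisely the marginalization denoted by $\maskalt$ in $(\faktok, \maskalt, \seq)_{-i}$. Assumption~\ref{ass:regularity} guarantees $\dist{\Hidden_i}$ has full support, so $\tau_i$ is well defined.

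For the stated consequence at $i > 1$, I would normalize. Equation~\eqref{eq:hmm_prompt_fake_token:1} gives $\tau_i(\hatembed(\seq)) = \dist{\Hidden_i, \hatSeq_{-i} = (\faktok, \maskalt, \seq)_{-i}}$, whose $\ell_1$ norm is $\pr{\hatSeq_{-i} = (\faktok, \maskalt, \seq)_{-i}}$; when this is positive, dividing by it turns the joint into the posterior $\dist{\Hidden_i \vl \hatSeq_{-i} = (\faktok, \maskalt, \seq)_{-i}}$, so $\bG_i(\hatembed(\seq)) = \obs\,\dist{\Hidden_i \vl \hatSeq_{-i} = (\faktok, \maskalt, \seq)_{-i}}$. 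Finally, since emissions are unmodified for $i > 1$ we have $\hatTok_i \perp \hatSeq_{-i} \vl \Hidden_i$ with emission matrix $\obs$, so Proposition~\ref{prop:ci} identifies this with $\dist{\hatTok_i \vl \hatSeq_{-i} = (\faktok, \maskalt, \seq)_{-i}}$, yielding the displayed equalities. When $(\faktok, \maskalt, \seq)_{-i} \notin \supp(\dist{\hatSeq_{-i}})$ the normalizing constant vanishes, so $\bG_i(\hatembed(\seq)) = \zeros$ by the convention in the definition of $\bG$.

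The one delicate point will be justifying that the mask coordinate $\embed(\maskalt) = \ones$ correctly realizes marginalization inside the recursion, so that the conditioning event $(\faktok, \maskalt, \seq)_{-i}$ genuinely denotes the joint with position $2$ summed out, together with keeping the index shift by two (introduced by prepending $\faktok$ and $\maskalt$) consistent throughout. I would also note that this choice is immaterial for the downstream use of the lemma, since only the readout $\bG_2$ is needed and $\tau_2$ does not depend on its own position-$2$ input; everything else reduces to the observation that the shared hidden chain makes the original message-passing identity applicable verbatim to $\hatSeq$.
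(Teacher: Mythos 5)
Your proposal is correct and follows essentially the same route as the paper's proof, which simply cites the correctness of message passing for \eqref{eq:hmm_prompt_fake_token:1}, reads off the normalized case from the definition of $\bG$ via $\tau$, and handles the unsupported case through the $\|\tau_i\|_1=0$ convention. Your version merely fills in details the paper leaves implicit (validity of the modified emission distribution, the identification of $\prompt$ and $\eb(\maskalt)$ with likelihood vectors of $\hatSeq$, and the application of Proposition~\ref{prop:ci}), all of which are consistent with the paper's intended argument.
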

Next, the following lemma disentangles the influences of the fake token $\faktok$ and the input sequence on the posterior distribution of the hidden variable.
\begin{lemma}\label{lem:hmm_prompt_decomp}
	In the setting above, there exists an invertible diagonal matrix $D$ such that for all $\seq$ such that $(\faktok, \seq) \in \supp(\dist{\hatSeq_{-2}})$, the following equation holds:
	\begin{align}
		\dist{\Hidden_2 \vl \hatTok_1= \faktok, \hatTok_{3:\Steps + 2} = \seq} = r_{\seq} D(\dist{\hatTok_1 = \faktok , \Hidden_2} \odot \dist{\Hidden_0 \vl \Tok_{1:\Steps} = \seq}) \nonumber
	\end{align}
	Here $r_x > 0$ is a positive scalar.
\end{lemma}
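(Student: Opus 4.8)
The plan is to apply Bayes' rule together with the Markov property of the modified chain $\hatTok$. Conditioning on the event $\{\hatTok_1 = \faktok,\ \hatTok_{3:\Steps+2} = \seq\}$, I split it into a ``past'' part $\{\hatTok_1 = \faktok\}$ and a ``future'' part $\{\hatTok_{3:\Steps+2} = \seq\}$. Since position $2$ is the query position (we work with $\hatSeq_{-2}$), these two events are conditionally independent given $\Hidden_2$, so Bayes' rule gives, as vectors indexed by the value of $\Hidden_2$,
\[
	\dist{\Hidden_2 \vl \hatTok_1 = \faktok,\ \hatTok_{3:\Steps+2} = \seq} = \frac{1}{c}\, \dist{\hatTok_1 = \faktok \vl \Hidden_2} \odot \dist{\Hidden_2 \vl \hatTok_{3:\Steps+2} = \seq},
\]
where $c = \pr{\hatTok_1 = \faktok \vl \hatTok_{3:\Steps+2} = \seq} > 0$ whenever $(\faktok, \seq) \in \supp(\dist{\hatSeq_{-2}})$.

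Next I handle the two factors separately. For the past factor I rewrite $\dist{\hatTok_1 = \faktok \vl \Hidden_2} = \dist{\hatTok_1 = \faktok, \Hidden_2} \odot (\ones/\dist{\Hidden_2})$, which is valid since $\dist{\Hidden_2}$ has full support by Assumption~\ref{ass:regularity} (ergodicity together with full-support $\dist{\Hidden_0}$). For the future factor I observe that the modified chain $\hatTok$ agrees with the original $\Tok$ at all positions $i \ge 2$ --- only the position-$1$ emission was altered --- and shares the same hidden-state transitions and marginal $\dist{\Hidden_2}$. Hence $\dist{\Hidden_2 \vl \hatTok_{3:\Steps+2} = \seq}$ coincides with the original-HMM posterior $\dist{\Hidden_2 \vl \Tok_{3:\Steps+2} = \seq}$, to which I apply Claim~\ref{claim:hmm_time_conversion} with $i = 2$, obtaining $\dist{\Hidden_2 \vl \Tok_{3:\Steps+2} = \seq} = r' D' \dist{\Hidden_0 \vl \Tok_{1:\Steps} = \seq}$ for a positive scalar $r'$ and the diagonal matrix $D' = \diag(\dist{\Hidden_2}/\dist{\Hidden_0})$.

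Finally I combine the pieces. Substituting both rewrites into the Bayes expression, the marginal $\dist{\Hidden_2}$ in the denominator of the past factor cancels against the $\dist{\Hidden_2}$ in $D'$, leaving the diagonal matrix $D = \diag(\ones/\dist{\Hidden_0})$ acting on $\dist{\hatTok_1 = \faktok, \Hidden_2} \odot \dist{\Hidden_0 \vl \Tok_{1:\Steps} = \seq}$, while all scalars collect into $r_{\seq} = r'/c > 0$. Since $\dist{\Hidden_0}$ has full support, $D$ is invertible, which yields exactly the claimed identity. The support hypothesis $(\faktok, \seq) \in \supp(\dist{\hatSeq_{-2}})$ ensures both $c > 0$ and, because emissions at positions $\ge 2$ are unchanged, that $\seq \in \supp(\dist{\Seq})$ so that Claim~\ref{claim:hmm_time_conversion} genuinely applies.

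The main obstacle I expect is bookkeeping rather than anything conceptual: one must carefully justify the conditional-independence split given $\Hidden_2$, identify the modified-chain future posterior with the original-chain posterior so that Claim~\ref{claim:hmm_time_conversion} can be invoked, and then track the cancellation of the $\dist{\Hidden_2}$ marginal so that the residual diagonal is precisely $\diag(\ones/\dist{\Hidden_0})$ rather than some other diagonal.
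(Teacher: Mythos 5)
Your proof is correct and follows essentially the same route as the paper's: Bayes' rule with the conditional independence $\hatTok_1 \perp \hatTok_{3:\Steps+2} \vl \Hidden_2$, followed by the time-shift identity, arriving at the same $D = \diag(\ones/\dist{\Hidden_0})$ and the same scalar $r_\seq = \pr{\Tok_{1:\Steps}=\seq}/\pr{\hatTok_1=\faktok, \hatTok_{3:\Steps+2}=\seq}$. The only (immaterial) difference is that you package the future factor as a posterior and invoke Claim~\ref{claim:hmm_time_conversion}, whereas the paper manipulates the likelihood vector $\dist{\Tok_{1:\Steps}=\seq\vl\Hidden_0}$ directly and converts to the posterior at the end.
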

We now complete the proof of Theorem~\ref{thm:hmm_prompt_tune}.
\begin{proof}[Proof of Theorem~\ref{thm:hmm_prompt_tune}]
	Let $\gB$ be the set defined in Assumption~\ref{ass:hmm_sparsity} and define $\prompt$ such that $\prompt_{\hidden} = 1$ if $\hidden \in \gB$ and $\prompt_{\hidden} = 0$ otherwise. First, we restrict our focus to $\seq$ such that $(\faktok, \seq) \in \supp(\dist{\hatSeq_{-2}})$. For these $\seq$, we can apply Lemma~\ref{lem:hmm_prompt_fake_token} and Lemma~\ref{lem:hmm_prompt_decomp} in the manner described in the proof sketch. This gives $\bG_2(\hatembed(\seq)) =   r_{\seq} \obs Dv $ for $v \triangleq (\trans (\prompt \odot \dist{\Hidden_1})) \odot\dist{\Hidden_0 \vl \Tok_{1:\Steps} = \seq}$. By definition of $\gB$, we have $\supp(\trans (\prompt \odot \dist{\Hidden_1})) = \hiddenset^\star$, so $\supp(Dv) \subseteq \hiddenset^\star$. Thus, there is a matrix $\widehat{\obs^{\dagger}}$ such that 
	\begin{align} 
		\widehat{\obs^{\dagger}} \bG_2(\hatembed(\seq)) =   r_{\seq}	\widehat{\obs^{\dagger}} \obs Dv = r_{\seq} \obs Dv \nonumber
	\end{align}
	The existence of $\widehat{\obs^{\dagger}}$  is due to the fact that $\{\obs_{:, \hidden}\}_{\hidden \in \hiddenset^\star}$ is a linearly independent set of vectors, and $\supp(Dv) \subseteq \hiddenset^\star$ whenever $\seq$ satisfies $(\faktok, \seq) \in \supp(\dist{\hatSeq_{-2}})$. Next, we note that a matrix $B$ exists such that $(BDv)_{\hidden} = \pr{\Hidden_0 = \hidden \vl \Tok_{1:\Steps} = \seq}$ for $\hidden \in \hiddenset^\star$ and $(BDv)_{\hidden} = 0$ otherwise. This is because $D$ is invertible, and $\supp(\trans (\prompt \odot \dist{\Hidden_1})) = \hiddenset^\star$, so we can recover $\dist{\Hidden_0 \vl \Tok_{1:\Steps} = \seq}$ on coordinates in $\hiddenset^\star$ by applying another coordinate-wise scaling. It follows that we can set $\lin = \gtlin^\top B \widehat{\obs^{\dagger}}$. With this choice of $\lin$, we compute
	\begin{align}
		\lin^\top \bG_2(\hatembed(\seq)) = r_{\seq} \gtlin^\top B D v = r_{\seq}\sum_{\hidden \in \hiddenset^\star} \gtlin_{\hidden} \pr{\Hidden_0 = \hidden \vl \Tok_{1:\Steps} = \seq} = r_{\seq}\gtlin^\top \dist{\Hidden_0 \vl \Tok_{1:\Steps} = \seq} \nonumber
	\end{align}
	where the last equality follows because $\supp(\gtlin) \subseteq \hiddenset^\star$. This completes the case where $(\faktok, \seq) \in \supp(\dist{\hatSeq_{-2}})$. 
	
	Otherwise, for $(\faktok, \seq) \notin \supp(\dist{\hatSeq_{-2}})$, by the behavior of $\bG$ in Lemma~\ref{lem:hmm_prompt_fake_token}, $\bG_2(\hatembed(\seq)) = \zeros$,  so any linear head must output $\lin^\top \bG_2(\hatembed(\seq)) = \zeros$. Furthermore, by the conditional independence structure in $\hatSeq$, we must also have $\supp(\dist{\Hidden_2, \hatTok_1 = \faktok})\cap  \supp(\dist{\Hidden_2, \hatTok_{3:\Steps + 2} = \seq}) = \emptyset$. As $\supp(\gtlin) \subseteq \supp(\dist{\Hidden_2, \hatTok_1 = \faktok})$, this must also mean $\supp(\gtlin) \cap \supp(\dist{\Hidden_2, \hatTok_{3:\Steps + 2} = \seq}) = \emptyset$. However, we also have $\dist{\Hidden_2, \hatTok_{3:\Steps + 2} = \seq} = \dist{\Hidden_2, \Tok_{3:\Steps + 2} = \seq}$ by the definition of $\hatTok$, and this must have the same support as $\dist{\Hidden_0 \vl \Tok_{1:\Steps} = \seq}$ by applying Claim~\ref{claim:hmm_time_conversion} and the fact that $\seq \in \supp(\dist{\Seq})$. It follows that for this choice of $\seq$, $\gtlin^\top \dist{\Hidden_0 \vl \Tok_{1:\Steps} = \seq} = 0$, so the desired statement still stands.
\end{proof}
We fill in the proofs of the lemmas below.
\begin{proof}[Proof of Lemma~\ref{lem:hmm_prompt_fake_token}]
First, we note that~\eqref{eq:hmm_prompt_fake_token:1} follows directly from the derivation of $\tau$, and well-known results about message passing~\citep{koller2009probabilistic}. Next, it suffices to consider the case where $(\faktok, \maskalt, \seq)_{-i} \notin \supp(\dist{\hatSeq_{-i}})$, as the other case follows directly from the definition of $\bG$ in terms of $\tau$. In this case, we observe that  $\tau_i(\hatembed(\seq)) = \dist{\Hidden_i, \hatSeq_{-i}= (\faktok, \maskalt, \seq)_{-i}} = \zeros$. It follows that $\|\tau_i(\hatembed(\seq))\|_1 = 0$. Thus, from our definition of $\bG$, we must have $\bG_i(\hatembed(\seq)) = \zeros$.
\end{proof}

\begin{proof}[Proof of Lemma~\ref{lem:hmm_prompt_decomp}]
	By the conditional independence relations in a HMM, $\hatTok_1 \perp \hatTok_{3:\Steps + 2} \vl \Hidden_2$. Using Bayes' rule, we obtain
	\begin{align}
		\dist{\Hidden_2 \vl \hatTok_1= \faktok, \hatTok_{3:\Steps + 2} = \seq}  &= \frac{\dist{\hatTok_1= \faktok, \hatTok_{3:\Steps + 2} = \seq \vl \Hidden_2} \odot  \dist{\Hidden_2}}{\pr{\hatTok_1= \faktok, \hatTok_{3:\Steps + 2} = \seq}} \nonumber\\
		&= \frac{\dist{\hatTok_1= \faktok \vl \Hidden_2} \odot \dist{\hatTok_{3:\Steps + 2} = \seq \vl \Hidden_2} \odot  \dist{\Hidden_2}}{\pr{\hatTok_1= \faktok, \hatTok_{3:\Steps + 2} = \seq}} \tag{by conditional independence}\\
		&= \frac{\dist{\hatTok_1= \faktok \vl \Hidden_2} \odot \dist{\Tok_{1:\Steps} = \seq \vl \Hidden_0} \odot  \dist{\Hidden_2}}{\pr{\hatTok_1= \faktok, \hatTok_{3:\Steps + 2} = \seq}}  \tag{by definition of $\hatTok$ and the Markovian property}\\
		&= r_x \dist{\hatTok_1= \faktok , \Hidden_2} \odot \dist{\Hidden_0 \vl \Tok_{1:\Steps} = \seq} \odot \frac{ \ones}{\dist{\Hidden_0}}  \nonumber
	\end{align}
	Where we define $r_x \triangleq \frac{\pr{\Tok_{1:\Steps} = \seq}}{\pr{\hatTok_1= \faktok, \hatTok_{3:\Steps + 2} = \seq}} $. We note that $r_x$ is positive and well-defined by the conditions of the lemma and Theorem~\ref{thm:hmm_prompt_tune}. We can set $D$ to be the matrix $\diag( \frac{ \ones}{\dist{\Hidden_0}} )$, which has finite positive entries on the diagonal by Assumption~\ref{ass:regularity}.
\end{proof}

	\section{Proofs for Section~\ref{sec:memory_hmm_main}}
First, we introduce a proposition which is generally useful for proving the theorems in Section~\ref{sec:memory_hmm_main}.

\begin{proposition}\label{prop:multi_mem_cond_prob}
	In the setting of Section~\ref{sec:memory_hmm_main}, it holds that
	\begin{align}
		\dist{\Tok_i \vl \Tok_{-i} = \tok_{-i}} &= \dist{\Tok_i \vl \Mem_{\Cell_i}, \Cell_i, \Syn_i} \dist{\Mem_{\Cell_i}, \Cell_i, \Syn_i, \Tok_{-i}= \tok_{-i}} \nonumber
	\end{align} 
	Equivalently, we have the expansion
	\begin{align}\label{eq:multi_mem_cond_prob:1}
		\dist{\Tok_{i} \vl\Tok_{-i}=  \tok_{-i}} = \sum_{\hidden = (\cell, \syn)} \sum_{\mem} \obs_{:, (\mem, \cell, \syn)} \pr{\Mem_\cell = \mem, \Hidden_i = \hidden \vl \Tok_{-i} = \tok_{-i}}
	\end{align}
\end{proposition}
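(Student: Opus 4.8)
The plan is to derive this from a single application of the conditional-independence decomposition in Proposition~\ref{prop:ci}, with the key being to identify the correct conditioning variable. The natural choice is $Z = (\Mem_{\Cell_i}, \Cell_i, \Syn_i)$, the triple consisting of the active memory cell, the cell index, and the syntax state at position $i$. Taking $U = \Tok_i$ and $V = \Tok_{-i}$, I would first verify that $U \perp V \vl Z$ and then invoke Proposition~\ref{prop:ci} to get $\dist{\Tok_i \vl \Tok_{-i} = \tok_{-i}} = \dist{\Tok_i \vl \Mem_{\Cell_i}, \Cell_i, \Syn_i}\,\dist{\Mem_{\Cell_i}, \Cell_i, \Syn_i \vl \Tok_{-i} = \tok_{-i}}$, which is the first displayed identity (read as a matrix--vector product, matching the conditional posterior appearing in the equivalent expanded form).

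The crux is establishing the conditional independence. From the joint factorization of the memory-augmented HMM, conditioned on the full latent state $(\Mem, \Hidden)$ the tokens $\Tok_1, \ldots, \Tok_\steps$ are mutually independent, and the emission law of $\Tok_i$ is $\obs_{:, (\Mem_{\Cell_i}, \Cell_i, \Syn_i)}$, which depends on $(\Mem, \Hidden)$ \emph{only} through the triple $(\Mem_{\Cell_i}, \Cell_i, \Syn_i)$. Consequently $\pr{\Tok_i = \tok_i \vl \Mem = \mem, \Hidden = \hidden, \Tok_{-i} = \tok_{-i}} = \obs_{\tok_i, (\mem_{\cell_i}, \cell_i, \syn_i)}$, independent of $\tok_{-i}$. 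Marginalizing the remaining latents against the posterior $\dist{\Mem, \Hidden \vl Z = (\mem_{\cell_i}, \cell_i, \syn_i), \Tok_{-i} = \tok_{-i}}$ then leaves a quantity equal to $\obs_{\tok_i, (\mem_{\cell_i}, \cell_i, \syn_i)}$; performing the same marginalization without conditioning on $\Tok_{-i}$ yields the identical value, which is exactly $U \perp V \vl Z$.

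To pass to the expanded form~\eqref{eq:multi_mem_cond_prob:1}, I would write the matrix--vector product as a sum over the column index $(\mem, \cell, \syn)$ of $\dist{\Tok_i \vl \Mem_{\Cell_i}, \Cell_i, \Syn_i}$, whose $(\mem, \cell, \syn)$ column is precisely $\obs_{:, (\mem, \cell, \syn)}$, weighted by the posterior entry $\pr{\Mem_{\Cell_i} = \mem, \Cell_i = \cell, \Syn_i = \syn \vl \Tok_{-i} = \tok_{-i}}$; since $\Cell_i = \cell$ forces $\Mem_{\Cell_i} = \Mem_\cell$, this entry equals $\pr{\Mem_\cell = \mem, \Hidden_i = (\cell, \syn) \vl \Tok_{-i} = \tok_{-i}}$, recovering the claim after writing $\hidden = (\cell, \syn)$. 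I expect the main obstacle to be the conditional-independence step: because the relevant memory cell $\Mem_{\Cell_i}$ is selected by the \emph{random} index $\Cell_i$, it is not a fixed node in the graphical model, so the needed independence cannot be read off by d-separation and must instead be argued directly from the emission structure as above.
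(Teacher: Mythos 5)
Your proof is correct, and it reaches the same identity by essentially the same computation, just packaged differently. The paper explicitly notes that the proposition "can be interpreted as $\Tok_i$ being conditionally independent of everything else given $\Mem_{\Cell_i}, \Cell_i, \Syn_i$" but then declines that route and proves the statement by a direct three-line expansion: condition on the full latent $(\Mem, \Hidden_i)$, substitute the emission column $\obs_{:,(\mem_{\cell},\cell,\syn)}$, and collapse the sum over the unselected cells $\mem_{-\cell}$. You instead formalize the conditional-independence reading and route it through Proposition~\ref{prop:ci} with $Z = (\Mem_{\Cell_i}, \Cell_i, \Syn_i)$; but the work required to justify $\Tok_i \perp \Tok_{-i} \vl Z$ --- conditioning on the full latent state, observing that the emission depends on it only through the triple, and marginalizing the remaining latents against the posterior --- is precisely the paper's algebraic chain, so the two arguments coincide at their core. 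What your version buys is a clean reduction to an already-stated lemma; what it costs is that the conditional independence must still be verified by hand, and your caveat explaining why is apt: since $\Mem_{\Cell_i}$ is indexed by the random variable $\Cell_i$, $Z$ is not a node of the graphical model and the independence cannot be read off by d-separation --- a point the paper leaves implicit when it says it will "prove the statement algebraically." One cosmetic remark: Proposition~\ref{prop:ci} produces the conditional $\dist{\Mem_{\Cell_i}, \Cell_i, \Syn_i \vl \Tok_{-i} = \tok_{-i}}$ on the right-hand side, which matches the expanded form~\eqref{eq:multi_mem_cond_prob:1}; the first display of the proposition as printed writes the joint $\dist{\Mem_{\Cell_i}, \Cell_i, \Syn_i, \Tok_{-i}= \tok_{-i}}$ instead, and you are right to read it as the conditional.
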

\begin{proof}
	An alternative interpretation of this statement is that $\Tok_i$ is conditionally independent from everything else given $\Mem_{\Cell_i}, \Cell_i, \Syn_i$. However, we will prove this statement algebraically. We compute 
	{\small
		\begin{align}
			\dist{\Tok_i \vl \Tok_{-i}= \tok_{-i}} =\nonumber\\ \sum_{h = (\cell, \syn)} \sum_{m_\cell} \sum_{m_{-\cell}}  \dist{\Tok_i \vl \Mem_{-\cell} = \mem_{-\cell}, \Mem_{\cell} = \mem_\cell, \Hidden_i = \hidden} \pr{\Mem_{-\cell} = \mem_{-\cell}, \Mem_{\cell} = \mem_\cell, \Hidden_i = \hidden \vl \Tok_{-i} = \tok_{-i}}\nonumber\\
			= \sum_{h = (\cell, \syn)} \sum_{m_\cell} \sum_{m_{-\cell}}   \obs_{:, (\mem_{\cell}, \cell, \syn)}\pr{\Mem_{-\cell} = \mem_{-\cell}, \Mem_{\cell} = \mem_\cell, \Hidden_i = \hidden \vl \Tok_{-i} = \tok_{-i}}\nonumber\\
			= \sum_{\hidden = (\cell, \syn)} \sum_{\mem_\cell}\obs_{:, (\mem_{\cell}, \cell, \syn)}\pr{\Mem_{\cell} = \mem_\cell, \Hidden_i = \hidden \vl \Tok_{-i} = \tok_{-i}}\nonumber
	\end{align}}
\end{proof}

\subsection{Proof of Theorem~\ref{thm:hmm_mem}}\label{sec:hmm_mem_proof}
Throughout this section, we use $\Mem_{\Cell_i}$ to denote the random variable obtained by indexing $\Mem$ by $\Cell_i$, both of which are themselves random variables. Let $\widehat{\gI}$ denote the set of indices $i$ where $\supp(\dist{\Cell_i \vl \Tok_{-i} = \tok_{-i}}) = \{\gtcell\}$ and $\supp(\dist{\Syn_i \vl \Tok_{-i}= \tok_{-i}}) \subseteq \synset^\star$. We will first construct the key function $\key$ and query $\query$ such that the set of $\gI$ of attended-to positions~\eqref{eq:mem_attn} is precisely $\widehat{\gI}$. This construction does not require the position embeddings $\beta_1, \ldots, \beta_{\steps}$, so we set them to $\zeros$.

The following lemma demonstrates the existence of $\key$ and $\query$ such that $\gI= \widehat{\gI}$. 
\begin{lemma}\label{lem:hmm_mem_query_key}
	In the setting of Theorem~\ref{thm:hmm_mem}, define $\widehat{\gI} \triangleq \{i : \supp(\dist{\Cell_i \vl \Tok_{-i} = \tok_{-i}}) = \{\gtcell\} \textup{ and } \supp(\dist{\Syn_i \vl \Tok_{-i}= \tok_{-i}}) \subseteq \synset^\star\}$. Then there exist query $\query \in \R^{|\hiddenset|}$ and key $\key$ parameterized by $\keyparam \in \R^{|\hiddenset| \times |\tokset|}$, such that when $\seq \in \supp(\dist{\Seq})$ and $\widehat{\gI}$ is nonempty, the set $\gI$ of attended-to positions satisfies $\gI= \widehat{\gI}$.
\end{lemma}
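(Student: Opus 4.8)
The plan is to set the position embeddings $\beta_i = \zeros$ (as the statement anticipates) and reduce everything to finding a single linear functional on token space whose value on the pretrained outputs $\gtprobs_i(\seq)$ is maximized exactly on $\widehat{\gI}$. With a linear key $\key(\gtprobs_i(\seq)) = \keyparam \gtprobs_i(\seq)$, the attention score at position $i$ is $\query^\top \keyparam \gtprobs_i(\seq) = w^\top \gtprobs_i(\seq)$, where $w \triangleq \keyparam^\top \query \in \R^{|\tokset|}$. Any target $w$ is realizable by taking $\query$ to be a coordinate vector and placing $w^\top$ in the matching row of $\keyparam$, so it suffices to exhibit a $w$ for which $i \mapsto w^\top \gtprobs_i(\seq)$ attains its maximum precisely on $\widehat{\gI}$.

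By Proposition~\ref{prop:multi_mem_cond_prob}, $\gtprobs_i(\seq) = \sum_{\hidden = (\cell, \syn)} \sum_{\mem} \obs_{:, (\mem, \cell, \syn)} \pr{\Mem_\cell = \mem, \Hidden_i = \hidden \vl \Tok_{-i} = \tok_{-i}}$ is a convex combination of emission columns weighted by the posterior of $(\Mem_{\Cell_i}, \Hidden_i)$, so $w^\top \gtprobs_i(\seq)$ is the same convex combination of the scalars $w^\top \obs_{:, (\mem, \hidden)}$. The idea is to choose $w$ so that these scalars equal a constant (say $1$) on every column with $\hidden \in \hiddenset^\star$ and a strictly smaller value (say $0$) on every column with $\hidden \in \hiddenset \setminus \hiddenset^\star$; then $w^\top \gtprobs_i(\seq) = \pr{\Hidden_i \in \hiddenset^\star \vl \Tok_{-i} = \tok_{-i}}$, which is at most $1$ with equality iff $\supp(\dist{\Hidden_i \vl \Tok_{-i} = \tok_{-i}}) \subseteq \hiddenset^\star = \{\gtcell\} \times \synset^\star$ --- exactly the defining condition of $\widehat{\gI}$.

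Constructing such a $w$ is where Assumption~\ref{ass:nondegen_mem} enters, and I expect it to be the main obstacle. Let $V^\star = \spanvec(\{\obs_{:, (\mem, \hidden)}\}_{\mem \in \memset, \hidden \in \hiddenset^\star})$ and $V^{\setminus} = \spanvec(\{\obs_{:, (\mem, \hidden')}\}_{\mem \in \memset, \hidden' \in \hiddenset \setminus \hiddenset^\star})$. The disjoint-span clause of the assumption gives $V^\star \cap V^{\setminus} = \{\zeros_{|\tokset|}\}$, so the sum $V^\star + V^{\setminus}$ is direct and a linear functional may be prescribed independently on the two summands. On $V^\star$, the linear independence of $\{\obs_{:, (\mem, \hidden)}\}_{\mem \in \memset, \hidden \in \hiddenset^\star}$ lets me require value $1$ on each of these columns (they form a basis of $V^\star$), which determines the functional on $V^\star$; on $V^{\setminus}$ I set it identically to $0$. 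These prescriptions are consistent precisely because the two spans meet only at the origin, and the functional extends to all of $\R^{|\tokset|}$, yielding the required $w$.

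Finally I would check that the maximum value $1$ is actually attained: since $\widehat{\gI}$ is assumed nonempty and $\seq \in \supp(\dist{\Seq})$, some position has $\supp(\dist{\Hidden_i \vl \Tok_{-i} = \tok_{-i}}) \subseteq \hiddenset^\star$ and hence score exactly $1$, whereas every $i \notin \widehat{\gI}$ puts positive posterior mass outside $\hiddenset^\star$ and therefore scores strictly below $1$. Thus $\gI = \argmax_i w^\top \gtprobs_i(\seq) = \widehat{\gI}$, as claimed. The genuinely delicate point is the span-disjointness: linear independence of the $\hiddenset^\star$-columns alone would not prevent the functional normalizing them to $1$ from exceeding $1$ on some $\hiddenset \setminus \hiddenset^\star$-column, which would corrupt the argmax --- so the second clause of Assumption~\ref{ass:nondegen_mem} is what makes the argument go through.
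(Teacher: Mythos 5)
Your proposal is correct and follows essentially the same route as the paper: both use Proposition~\ref{prop:multi_mem_cond_prob} to write $\gtprobs_i(\seq)$ as a posterior-weighted combination of emission columns, exploit the linear independence and span-disjointness in Assumption~\ref{ass:nondegen_mem} to build a linear map taking value $1$ on the $\hiddenset^\star$-columns and $0$ on the rest, and conclude that the attention score equals $\pr{\Hidden_i \in \hiddenset^\star \vl \Tok_{-i} = \tok_{-i}} \le 1$ with equality exactly on $\widehat{\gI}$. The only (cosmetic) difference is that you collapse the construction into a single functional $w = \keyparam^\top \query$, whereas the paper first builds a matrix $\Theta^{(1)}$ recovering the individual posteriors $\pr{\Hidden_i = \hidden \vl \Tok_{-i}=\tok_{-i}}$ for $\hidden \in \hiddenset^\star$ (Claim~\ref{claim:mem_attn_params}) and then dots with an indicator query, bounding the score by $\|\key(\gtprobs_i(\seq))\|_1 = 1$.
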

The proof of Lemma~\ref{lem:hmm_mem_query_key} requires the following claim. 
\begin{claim}\label{claim:mem_attn_params}
	In the setting of Theorem~\ref{thm:hmm_mem}, there is a matrix $\Theta^{(1)} \in \R^{|\hiddenset| \times |\tokset|}$ such that for all $x \in \supp(\dist{\Seq})$ and $\syn \in \synset^\star$, $(\Theta^{(1)} \gtprobs_i(\seq))_{(\gtcell, \syn)} = \dist{\Hidden_i = (\gtcell, \syn)\vl \Tok_{-i}= \seq_{-i}}$. Furthermore, $\|\Theta^{(1)} \gtprobs_i(\seq)\|_1 = 1$. In addition, for $\syn \in \synset^\star$, there exists $\Theta^{(2, \syn)} \in \R^{|\memset| \times |\tokset|}$ such that for all $x \in \supp(\dist{\Seq})$,
	\begin{align}
		\Theta^{(2, \syn)} \gtprobs_i(\seq) = \dist{\Mem_{\gtcell}, \Hidden_i = (\gtcell, \syn) \vl \Tok_{-i}= \tok_{-i}} \nonumber
	\end{align}
\end{claim}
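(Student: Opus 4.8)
The plan is to recover, from the single vector $\gtprobs_i(\seq) = \dist{\Tok_i \vl \Tok_{-i} = \seq_{-i}}$, the posterior probabilities of $\Hidden_i$ and of the pair $(\Mem_{\gtcell}, \Hidden_i)$ restricted to the recoverable set $\hiddenset^\star = \{\gtcell\} \times \synset^\star$, by inverting the column decomposition of $\gtprobs_i$ supplied by Proposition~\ref{prop:multi_mem_cond_prob}. That proposition lets me split
\begin{align}
\gtprobs_i(\seq) = \underbrace{\sum_{\syn \in \synset^\star}\sum_{\mem} \obs_{:, (\mem, \gtcell, \syn)}\, \pr{\Mem_\gtcell = \mem, \Hidden_i = (\gtcell, \syn) \vl \Tok_{-i} = \seq_{-i}}}_{=:\, w^\star} + \underbrace{\sum_{(\cell,\syn) \notin \hiddenset^\star}\sum_{\mem} \obs_{:, (\mem, \cell, \syn)}\, \pr{\Mem_\cell = \mem, \Hidden_i = (\cell,\syn) \vl \Tok_{-i} = \seq_{-i}}}_{=:\, w^\circ}, \nonumber
\end{align}
so that $w^\star \in \spanvec(\{\obs_{:, (\mem, \hidden)}\}_{\mem \in \memset, \hidden \in \hiddenset^\star}) =: \gU^\star$ and $w^\circ \in \spanvec(\{\obs_{:, (\mem, \hidden)}\}_{\mem \in \memset, \hidden \notin \hiddenset^\star}) =: \gU^\circ$. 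The coefficients of $w^\star$ are exactly the joint posteriors I want, so the task reduces to isolating $w^\star$ from $\gtprobs_i(\seq)$ and then inverting the map from coefficients to $w^\star$.

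Because Assumption~\ref{ass:nondegen_mem} guarantees $\gU^\star \cap \gU^\circ = \{\zeros\}$, the sum $\gU^\star + \gU^\circ$ is direct, so there is a well-defined linear projection $P$ onto $\gU^\star$ along $\gU^\circ$ (extended linearly to all of $\R^{|\tokset|}$). Since $\gtprobs_i(\seq)$ is a nonnegative combination of columns of $\obs$, it lies in $\gU^\star \oplus \gU^\circ$, so $P\gtprobs_i(\seq) = w^\star$ for every $\seq$. Let $\obs^\star$ be the matrix whose columns are $\{\obs_{:, (\mem, \gtcell, \syn)}\}_{\mem \in \memset, \syn \in \synset^\star}$; by the linear-independence clause of Assumption~\ref{ass:nondegen_mem} it admits a left inverse $(\obs^\star)^\dagger$. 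Then $M \triangleq (\obs^\star)^\dagger P$ satisfies $M\gtprobs_i(\seq) = (\obs^\star)^\dagger w^\star = (\obs^\star)^\dagger \obs^\star c = c$, where $c$ collects the joint posteriors $\pr{\Mem_\gtcell = \mem, \Hidden_i = (\gtcell, \syn) \vl \Tok_{-i} = \seq_{-i}}$ indexed by $(\mem, \syn)$ with $\syn \in \synset^\star$. Taking the block of rows of $M$ indexed by a fixed $\syn$ gives $\Theta^{(2,\syn)}$, and summing those rows over $\mem$ (placing the result in coordinate $(\gtcell, \syn)$) recovers the marginal $\pr{\Hidden_i = (\gtcell, \syn) \vl \Tok_{-i} = \seq_{-i}}$, which defines the $\hiddenset^\star$-rows of $\Theta^{(1)}$.

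It remains to arrange $\|\Theta^{(1)} \gtprobs_i(\seq)\|_1 = 1$, which the recovered $\hiddenset^\star$-mass alone does not guarantee (it equals only $\pr{\Hidden_i \in \hiddenset^\star \vl \Tok_{-i}=\seq_{-i}}$). The key observation is that every column of $\obs$ is a conditional token distribution, so $\ones^\top \obs_{:, \cdot} = 1$, and likewise $\ones^\top \gtprobs_i(\seq) = 1$. Hence $\ones^\top(\gtprobs_i(\seq) - P\gtprobs_i(\seq)) = \ones^\top w^\circ = \pr{\Hidden_i \notin \hiddenset^\star \vl \Tok_{-i}=\seq_{-i}}$, which is again a linear function of $\gtprobs_i(\seq)$. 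I place this leftover mass into a single designated coordinate of $\Theta^{(1)}\gtprobs_i(\seq)$ lying outside $\hiddenset^\star$ and set all other coordinates to zero. Since the recovered entries on $\hiddenset^\star$ together with the leftover entry are all nonnegative, their $\ell_1$ norm is $\pr{\Hidden_i \in \hiddenset^\star} + \pr{\Hidden_i \notin \hiddenset^\star} = 1$, as claimed.

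The main obstacle is the isolation of $w^\star$: the disjoint-span clause of Assumption~\ref{ass:nondegen_mem} is precisely what makes the oblique projection $P$ onto $\gU^\star$ well-defined, allowing the recoverable contribution to be separated from $w^\circ$, whose columns (the non-recoverable hidden states) may be degenerate and thus individually uninvertible. The remaining subtlety is the $\ell_1 = 1$ normalization, which cannot come from the $\hiddenset^\star$-mass alone and instead relies on the stochasticity of $\obs$'s columns to linearly account for the leftover mass on $\hiddenset \setminus \hiddenset^\star$.
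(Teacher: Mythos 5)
Your proposal is correct and takes essentially the same route as the paper's proof: both decompose $\gtprobs_i(\seq)$ via Proposition~\ref{prop:multi_mem_cond_prob}, use the disjoint-span clause of Assumption~\ref{ass:nondegen_mem} to obliquely project out the recoverable component, invert it with the left inverse guaranteed by linear independence, and obtain $\|\Theta^{(1)}\gtprobs_i(\seq)\|_1 = 1$ by routing the leftover mass $\ones^\top$ of the non-recoverable component into a single designated coordinate outside $\hiddenset^\star$. The only cosmetic difference is that you project onto the whole span $\gU^\star$ at once and apply one left inverse, whereas the paper recovers each $\nu^{(\hidden)}$ separately; the two are equivalent.
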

\begin{proof}
	We have, by Proposition~\ref{prop:multi_mem_cond_prob},
	\begin{align}
		\gtprobs_i(\seq) &= \dist{\Tok_i \vl \Tok_{-i}= \tok_{-i}}\nonumber\\
		&= \sum_{\hidden=(\cell, \syn)} \left( \sum_{\mem} \obs_{:, (\mem, \cell, \syn)} \pr{\Mem_\cell = \mem, \Hidden_i = \hidden \vl \Tok_{-i}= \tok_{-i}}\right)\nonumber\\
		&= \sum_{\hidden = (\cell, \syn)} \nu^{(\hidden)}\nonumber
	\end{align}
	In the last equality, we defined $\nu^{(\hidden)}$ to be the expression in the parentheses. Note that $\nu^{(\hidden)} \in \gV^{(h)} \triangleq \spanvec(\{\obs_{:, (\mem, \hidden)}\}_{\mem \in \memset})$. Furthermore, for $\hidden \notin \hiddenset^\star$, $\nu^{(\hidden)} \in \widebar{\gV} \triangleq \spanvec(\{\obs_{:, (\mem, \hidden)}\}_{\mem \in \memset, \hidden \in \hiddenset \setminus \hiddenset^\star})$. As the spans $(\gV^{(h)})_{\hidden \in \hiddenset^\star}$ and $\widebar{\gV}$ are all pairwise disjoint, by Assumption~\ref{ass:nondegen_mem}, for each $\hidden \in \hiddenset^\star$, we can recover  
	\begin{align}
		\nu^{(\hidden)} = B^{(\hidden)} \dist{\Tok_i \vl \Tok_{-i} = \tok_{-i}}\nonumber
	\end{align}
	Likewise, we can obtain
	\begin{align}
		\sum_{\hidden \notin \hiddenset^\star} \nu^{(\hidden)} = \widebar{B} \dist{\Tok_i \vl \Tok_{-i} = \tok_{-i}}\nonumber
	\end{align}
	Now we have, for $\hidden \in \hiddenset^\star$,
	\begin{align}
		\ones^\top \nu^{(\hidden)} &= \sum_{\mem} \ones^\top \obs_{:, (\mem, \hidden)} \pr{\Mem_{\cell} = \mem, \Hidden_i = \hidden \vl \Tok_{-i}= \tok_{-i}}\nonumber\\
		& = \sum_{\mem} \pr{\Mem_{\cell} = \mem, \Hidden_i = \hidden \vl \Tok_{-i}= \tok_{-i}} \tag{because $\ones^\top W_{:, (\mem, \hidden)} = 1$}\\
		&= \pr{\Hidden_i = \hidden \vl \Tok_{-i}= \tok_{-i}}\nonumber
	\end{align}
	Likewise, the same reasoning gives 
	$1^\top 	\sum_{\hidden \notin \hiddenset^\star} \nu^{(\hidden)} = \sum_{\hidden \notin \hiddenset^\star} \pr{\Hidden_i = \hidden \vl \Tok_{-i}= \tok_{-i}}$. 
	Thus, we can choose $\Theta^{(1)}$ to be the matrix with rows $\Theta^{(1)}_{\hidden, :} = \ones^\top B^{(\hidden)}$ when $\hidden \in \hiddenset^\star$, and for some arbitrary $\widebar{\hidden} \notin \hiddenset^\star$, $\Theta^{(1)}_{\widebar{\hidden}, :} = \ones^\top \widebar{B}$. We set all other rows to $\zeros$, and we can check that this satisfies the lemma requirements.
	
	We now construct $\Theta^{(2, \hidden)}$. We can express $\nu^{(\hidden)}$ in a vectorized manner by writing 
	\begin{align}
		\nu^{(\hidden)} = W_{:, (\memset, \hidden)} \dist{\Mem_{\cell}, \Hidden_i = \hidden \vl \Tok_{-i} = \tok_{-i}} \nonumber
	\end{align}
	where $\obs_{:, (\memset, \hidden)} \in \R^{|\tokset| \times |\memset|}$ has columns $\{\obs_{:, (\mem, \hidden)}\}_{\mem \in \memset}$. Note that for $\cell = \gtcell$, $\syn \in \synset^\star$, the non-degeneracy assumptions imply that $\obs_{:, (\memset, \gtcell, \syn)}$ has left inverse $\obs_{:, (\memset, \gtcell, \syn)}^{\dagger}$. Thus, we set $\Theta^{(2, \syn)} = \obs_{:, (\memset, \gtcell, \syn)}^{\dagger} B^{(\gtcell, \syn)}$ to obtain for $\syn \in \synset^\star$,
	\begin{align}
		\Theta^{(2, \syn)} \gtprobs_i(\seq) &= \obs_{:, (\memset, \gtcell, \syn)}^{\dagger} B^{(\gtcell, \syn)} \dist{\Tok_i \vl \Tok_{-i}= \tok_{-i}}\nonumber\\
		&= \obs_{:, (\memset, \gtcell, \syn)}^{\dagger} \obs_{:, (\memset, \gtcell, \syn)} \dist{\Mem_{\gtcell}, \Hidden_i = (\gtcell, \syn) \vl \Tok_{-i} = \tok_{-i}}\nonumber\\
		&=  \dist{\Mem_{\gtcell}, \Hidden_i = (\gtcell, \syn) \vl \Tok_{-i} = \tok_{-i}}\nonumber
	\end{align} 
	This gives the desired result.
\end{proof}

\begin{proof}[Proof of Lemma~\ref{lem:hmm_mem_query_key}]
	We choose the first $|\hiddenset|$ entries of $\query$ such that $\query_{\hidden} = 1$ if $\hidden = (\gtcell, \syn)$ for $\syn \in \synset^\star$, and $\query_{\hidden} = 0$ otherwise. The last entry is 0. Next, we choose $\keyparam$ so that the first $|\hiddenset|$ rows are $\Theta^{(1)}$, and the last row is all zeros. where $\Theta^{(1)}$ is defined in Claim~\ref{claim:mem_attn_params}. With this choice of $\keyparam$, $\key(\gtprobs_i(\seq))_{\hidden} = \pr{\Hidden_i = \hidden | \Tok_{-i}= \tok_{-i}}$ for $\hidden \in \hiddenset^\star$. Furthermore, $\|\key(\gtprobs_i(\seq))\|_1 = 1$, by Claim~\ref{claim:mem_attn_params}.
	
	Now we note that for all $i$, $1= \|\key(\gtprobs_i(\seq))\|_1 \ge \query^\top \key(\gtprobs_i(\seq))$, and for $i \in \widehat{\gI}$, $\query^\top \key(\gtprobs_i(\seq)) = \sum_{\syn \in \synset^\star} \pr{\Hidden_i = (\gtcell, \syn) | \Tok_{-i}= \tok_{-i}} = 1$ by definition of $\query$ and $\widehat{\gI}$. This implies that positions $i \in \widehat{\gI}$ do indeed achieve the maximum attention scores.
\end{proof}
Next, we also require a construction of the value function such that it computes the correct prediction for all $i \in \widehat{\gI}$. 
\begin{lemma}\label{lem:hmm_mem_val}
	In the setting of Theorem~\ref{thm:hmm_mem}, let $\widehat{\gI}$ be defined as in Lemma~\ref{lem:hmm_mem_query_key}. We can choose the parameters of the value function $\val$, $\valparam{} \in \R^{|\memset||\hiddenset| \times |\tokset|}$, $\lin \in \R^{ |\memset||\hiddenset|}$, such that when $\seq \in \supp(\dist{\Seq})$ and $\widehat{\gI}$ is nonempty, for all $i \in \widehat{\gI}$, 
	\begin{align}
		\val(\gtprobs_i(\seq), \eb(\seq_i)) = r_{x, i} \gtlin^\top \dist{\Mem_{\gtcell} \vl \Tok_{1:\Steps} = \seq} \nonumber
	\end{align}
where $r_{x, i} > 0$ is a positive scalar.
\end{lemma}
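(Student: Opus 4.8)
The plan is to choose the value-function parameters so that, coordinate by coordinate, the vector $(\valparam{}\gtprobs_i(\seq))\odot\eb(\seq_i)$ reconstructs the joint posterior of $\Mem_{\gtcell}$ and $\Tok_i$ on the ``good'' hidden states, after which a suitable $\lin$ reads off $\gtlin^\top\dist{\Mem_{\gtcell}\vl\Tok_{1:\Steps}=\seq}$. First I would invoke Claim~\ref{claim:mem_attn_params}: for each $\syn\in\synset^\star$ it supplies a matrix $\Theta^{(2,\syn)}$ with $\Theta^{(2,\syn)}\gtprobs_i(\seq)=\dist{\Mem_{\gtcell},\Hidden_i=(\gtcell,\syn)\vl\Tok_{-i}=\tok_{-i}}$. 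I would then assemble $\valparam{}$ by placing, for each $\syn\in\synset^\star$ and each $\mem$, the $\mem$-th row of $\Theta^{(2,\syn)}$ into the row of $\valparam{}$ indexed by $(\mem,\gtcell,\syn)$, and setting every other row (those with $\cell\neq\gtcell$ or $\syn\notin\synset^\star$) to zero. With this choice, $(\valparam{}\gtprobs_i(\seq))_{(\mem,\gtcell,\syn)}=\pr{\Mem_{\gtcell}=\mem,\Hidden_i=(\gtcell,\syn)\vl\Tok_{-i}=\tok_{-i}}$ for $\syn\in\synset^\star$, and all remaining coordinates vanish.

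Next I would take the Hadamard product with the proper embedding, using that $\eb(\seq_i)_{(\mem,\gtcell,\syn)}=\obs_{\seq_i,(\mem,\gtcell,\syn)}=\pr{\Tok_i=\seq_i\vl\Mem_{\gtcell}=\mem,\Cell_i=\gtcell,\Syn_i=\syn}$. Invoking the conditional independence $\Tok_i\perp\Tok_{-i}\vl(\Mem_{\Cell_i},\Cell_i,\Syn_i)$ noted in the proof of Proposition~\ref{prop:multi_mem_cond_prob}, this emission probability is unchanged by additionally conditioning on $\Tok_{-i}=\tok_{-i}$, so the chain rule collapses the product of the two coordinates into a single joint probability:
\[
\big((\valparam{}\gtprobs_i(\seq))\odot\eb(\seq_i)\big)_{(\mem,\gtcell,\syn)}=\pr{\Mem_{\gtcell}=\mem,\Hidden_i=(\gtcell,\syn),\Tok_i=\seq_i\vl\Tok_{-i}=\tok_{-i}}
\]
for $\syn\in\synset^\star$ (and zero elsewhere).

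Finally I would sum over $\syn\in\synset^\star$ and exploit the defining property of $\widehat{\gI}$. For $i\in\widehat{\gI}$, the event $\{\Cell_i=\gtcell,\ \Syn_i\in\synset^\star\}$ has conditional probability one given $\Tok_{-i}=\tok_{-i}$; since a joint probability is dominated by this event's probability, summing the displayed coordinates over $\syn\in\synset^\star$ drops the $\Hidden_i$-constraints and yields $\pr{\Mem_{\gtcell}=\mem,\Tok_i=\seq_i\vl\Tok_{-i}=\tok_{-i}}$. One application of Bayes' rule rewrites this as $\pr{\Tok_i=\seq_i\vl\Tok_{-i}=\tok_{-i}}\cdot\pr{\Mem_{\gtcell}=\mem\vl\Tok_{1:\Steps}=\seq}$, where the leading scalar $r_{x,i}\triangleq\pr{\Tok_i=\seq_i\vl\Tok_{-i}=\tok_{-i}}$ is strictly positive because $\seq\in\supp(\dist{\Seq})$. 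Taking $\lin_{(\mem,\gtcell,\syn)}=\gtlin_{\mem}$ for $\syn\in\synset^\star$ and $\lin=0$ on all other coordinates, the value function computes $\lin^\top((\valparam{}\gtprobs_i(\seq))\odot\eb(\seq_i))=r_{x,i}\,\gtlin^\top\dist{\Mem_{\gtcell}\vl\Tok_{1:\Steps}=\seq}$, as desired. I expect the main obstacle to be the bookkeeping in this collapse step: one must verify that the concentration of $\Hidden_i$ on $\{\gtcell\}\times\synset^\star$ (a statement about the posterior given $\Tok_{-i}$) legitimately removes the hidden-state constraints even inside a joint probability that also fixes $\Mem_{\gtcell}$ and $\Tok_i$, which is exactly where the hypothesis $i\in\widehat{\gI}$ and the dominance of the joint by the probability-one event are used.
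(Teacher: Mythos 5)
Your proposal is correct and follows essentially the same route as the paper's proof: the same construction of $\valparam{}$ from the $\Theta^{(2,\syn)}$ matrices of Claim~\ref{claim:mem_attn_params}, the same use of the conditional independence of $\Tok_i$ given $(\Mem_{\Cell_i},\Cell_i,\Syn_i)$ to collapse the Hadamard product into a joint posterior, the same summation over $\syn\in\synset^\star$ justified by the concentration of $\dist{\Hidden_i\vl\Tok_{-i}=\tok_{-i}}$ on $\{\gtcell\}\times\synset^\star$, and the same final Bayes-rule step identifying $r_{x,i}=\pr{\Tok_i=\seq_i\vl\Tok_{-i}=\tok_{-i}}$. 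Your explicit choice $\lin_{(\mem,\gtcell,\syn)}=\gtlin_{\mem}$ is just the concrete form of the paper's $\lin=B^\top\gtlin$.
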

\begin{proof}
	We first choose $\valparam{}$ such that the rows satisfy $\valparam{}_{(\mem, \gtcell, \syn), :} = \Theta^{(2, \syn)}_{\mem, :}$ when $\syn \in \synset^\star$ for $\Theta^{(2, \syn)}$ constructed in Claim~\ref{claim:mem_attn_params}, and $\valparam{}_{(\mem, \cell, \syn), :}  = \zeros_{|\tokset|}$ otherwise for $\cell \ne \gtcell$ or $\syn \notin \synset^\star$. 
	
	We claim that for $i \in \widehat{\gI}$, 
	\begin{align}\label{eq:hmm_mem_val:1}
		\valparam{} \gtprobs_i(\seq) = \dist{\Mem_{\Cell_i}, \Cell_i, \Syn_i \vl \Tok_{-i}= \tok_{-i}} 
	\end{align}
	This is because for $\syn \in \synset^\star$,  $\Theta^{(2, \syn)} \gtprobs_i(\seq) = \dist{\Mem_{\gtcell}, \Hidden_i = (\gtcell, \syn) \vl \Tok_{-i}= \tok_{-i}}$ by Claim~\ref{claim:mem_attn_params}, and for $\hidden = (\cell, \syn)$ for $\cell \ne \gtcell$ or $\syn \notin \synset^\star$, 
	\begin{align}
		\dist{\Mem_{\cell}, \Hidden_i = \hidden \vl \Tok_{-i}= \tok_{-i}} = \dist{\Mem_{\cell} \vl \Hidden_i = \hidden, \Tok_{-i} = \tok_{-i}} \pr{\Hidden_i = \hidden \vl \Tok_{-i}= \tok_{-i}} = \zeros_{|\memset|} \nonumber
		\end{align}
	Note that this last equality followed because $\pr{\Hidden_i = \hidden \vl \Tok_{-i}= \tok_{-i}} = 0$ for the choice of $\hidden$ and $i \in \widehat{\gI}$. By construction of $\valparam{}$, these computations imply that~\eqref{eq:hmm_mem_val:1} does indeed hold. The embedding can be chosen such that $\eb(\tok_i) = \dist{\Tok_i = \tok_i \vl \Mem_{\Cell_i}, \Cell_i, \Syn_i}$. Thus, we have for $i \in \widehat{I}$:
	\begin{align}
		(\valparam{} \gtprobs_i(\seq)) \odot \embed(\tok_i) &= \dist{\Mem_{\Cell_i}, \Cell_i, \Syn_i \vl \Tok_{-i}= \tok_{-i}} \odot \dist{\Tok_i = \tok_i \vl \Mem_{\Cell_i}, \Cell_i, \Syn_i }\nonumber\\
		&= \dist{\Tok_i = \tok_i, \Mem_{\Cell_i}, \Cell_i, \Syn_i \vl \Tok_{-i}= \tok_{-i}}\nonumber
	\end{align}
The last equality followed from applying the same reasoning as in Proposition~\ref{prop:multi_mem_cond_prob}.

Now we let $B \in \R^{|\memset| \times |\memset| |\hiddenset|}$ be the matrix such that 
\begin{align}
	(B \dist{\Tok_i = \tok_i, \Mem_{\Cell_i}, (\Cell_i, \Hidden_i) \vl \Tok_{-i}= \tok_{-i}})_{\mem}=\nonumber\\ \sum_{\syn} \pr{\Tok_i = \tok_i, \Mem_{\gtcell} = \mem, \Cell_i = \gtcell, \Syn_i = \syn \vl \Tok_{-i}= \tok_{-i}} \nonumber
\end{align}
Now we pick the last linear weight in the value function by $\lin = B^\top \gtlin$. It follows that for $i \in \widehat{\gI}$,
\begin{align}
	\val(\gtprobs_i(\seq), \eb(\seq_i)) &= \lin^\top	((\valparam{} \gtprobs_i(\seq)) \odot \embed(\tok_i))\nonumber\\
	&= \gtlin^\top B ((\valparam{} \gtprobs_i(\seq)) \odot \embed(\tok_i))\nonumber\\
	&= \gtlin^\top B  \dist{\Tok_i = \tok_i, \Mem_{\Cell_i}, \Cell_i, \Syn_i \vl \Tok_{-i}= \tok_{-i}}\nonumber\\
	&= \gtlin^\top \sum_{\syn} \dist{\Tok_i = \tok_i, \Mem_{\gtcell}, \Cell_i = \gtcell, \Syn_i = \syn \vl \Tok_{-i}= \tok_{-i}}\nonumber\\
	&= \gtlin^\top \dist{\Mem_{\gtcell}, \Tok_i = \tok_i \vl \Tok_{-i}= \tok_{-i}}\nonumber
\end{align}
	We obtained the last equality by observing that $\sum_{\syn} \dist{\Tok_i = \tok_i, \Mem_{\gtcell}, \Cell_i = \gtcell, \Syn_i = \syn \vl \Tok_{-i}= \tok_{-i}} = \dist{\Mem_{\gtcell}, \Tok_i = \tok_i \vl \Tok_{-i}= \tok_{-i}}$ for $i \in \widehat{\gI}$, as the distribution of $\Hidden_i$ must concentrate where $\Cell_i = \gtcell$. Finally, we observe that $ \gtlin^\top \dist{\Mem_{\gtcell}, \Tok_i = \tok_i \vl \Tok_{-i}= \tok_{-i}} = \gtlin^\top \dist{\Mem_{\gtcell}\vl \Tok_{1:\Steps}= \seq} \pr{\Tok_{i} = \tok_i \vl \Tok_{-i} = \tok_{-i}}$, so setting $r_{x, i} =  \pr{\Tok_{i} = \tok_i \vl \Tok_{-i} = \tok_{-i}}$ completes the proof.
\end{proof}

Now we can complete the proof of Theorem~\ref{thm:hmm_mem}.

\begin{proof}[Proof of Theorem~\ref{thm:hmm_mem}]
	By applying Lemmas~\ref{lem:hmm_mem_query_key} and~\ref{lem:hmm_mem_val}, we constructed key, query, and value functions for the attention head such that for all $x \in \supp(\dist{\Seq})$ with $\widehat{\gI}$ (defined in Lemma~\ref{lem:hmm_mem_query_key}) nonempty, the attended-to positions $\gI$ satisfy $\gI= \widehat{\gI}$, and $\val(\gtprobs_i(\seq), \eb(\tok_i)) = r_{x, i} \gtlin^\top \dist{\Mem_{\gtcell} \vl \Tok_{1:\Steps} = \seq}$. As the attention head computes the average of $\val(\gtprobs_i(\seq), \eb(\tok_i))$ over attended-to positions, and $r_{x, i}$ is positive for all $i \in \widehat{\gI}$, we obtain the desired result. 
	\end{proof}

We note that this proof also works for the case where there is a single memory cell, as that is a special case where $\Cell_i = \gtcell$ always, and we only need to consider the evolution of $\Syn_i$.

	\subsection{Formal abstraction for prompt tuning in Section~\ref{sec:mem_prompt_tune}}\label{sec:mem_prompt_tune_abstraction}

\renewcommand{\larrow}[1]{\overleftarrow{#1}}
\renewcommand{\rarrow}[1]{\overrightarrow{#1}}
\renewcommand{\ldelta}{\larrow{\delta}}
\renewcommand{\rdelta}{\rarrow{\delta}}

We will work directly in the case with multiple memories, as the single memory case is captured in this setting. We follow the construction in Section~\ref{sec:hmm_prompt_tune_proof}. our message passing formulation requires the augmented Markov chain $\augHidden_0 \triangleq (\Mem_1, \ldots, \Mem_{\numcell}, \Hidden_0), \augHidden_1 \triangleq (\Mem_1, \ldots, \Mem_{\numcell}, \Hidden_{1}), ...$, which uses the following transition probabilities:
\begin{align}
\pr{\augHidden_{i + 1} = (\mem', \hidden') \vl \augHidden_{i}= (\mem, \hidden)} = \trans_{\hidden', \hidden} \1(\mem' = \mem) \nonumber
\end{align}
Let $\widetilde{\hiddenset}$ denote the set of possible values for $\augHidden$.
For vector $v \in \R^{|\memset||\hiddenset|}$ we define a lifting function $\eta : \R^{|\memset||\hiddenset|} \to \R^{|\widetilde{\hiddenset}|}$ by 
\begin{align}
	\eta(v)_{(\mem_{1:\numcell}, \cell, \syn)} =  v_{(\mem_{\cell},\cell,\syn)}  \nonumber
\end{align}
We observe that $\eta( \dist{\Tok_i = \tok_i \vl \Mem_{\Cell_i}, (\Cell_i, \Syn_i)}) = \dist{\Tok_i = \tok_i \vl \augHidden_i}$. 

Now we formalize the model $\bG$. $\bG$ will take embedding vectors $v = (v_1, \ldots, v_{\steps})$ with $v_i \in \R^{|\widetilde{\hiddenset}|}$ as follows. We define left and right messages $\ldelta_{i + 1 \to i}(v)$ and $\rdelta_{i - 1 \to i}(v)$ for $i \in [\steps]$ via:
	\begin{align}
		\ldelta_{t + 1 \to t}(v) &= \dist{\augHidden_t}\nonumber\\
		\ldelta_{i \to i - 1}(v) &= \dist{\augHidden_{i - 1} \vl \augHidden_i}(\ldelta_{i + 1 \to i}(v) \odot v_i) \ \forall 1 < i < \steps\nonumber\\
		\rdelta_{0 \to 1}(v) &= \dist{\augHidden_1}\nonumber\\
		\rdelta_{i \to i + 1}(v)& = \dist{\augHidden_{i + 1} \vl \augHidden_i}(\rdelta_{i - 1 \to i}(v) \odot v_i) \ \forall 1 < i < \steps\nonumber
	\end{align}
	We observe that this definition almost matches Section~\ref{sec:hmm_prompt_tune_proof},  except it replaces $\Hidden$ with $\augHidden$. Next, we define the aggregated message at timestep $i$ by 
	\begin{align}\label{eq:mem_prompt_tau}
		\tau_i(v) = \begin{cases}
			\ldelta_{2 \to 1}(v) & \text{ if } i = 1\\
			\frac{\ldelta_{i + 1 \to i}(v) \odot \rdelta_{i - 1 \to i}(v)}{\dist{\augHidden_i}} &\text{ if } 1 < i < t\\
			\rdelta_{\steps - 1 \to \steps}(v) &\text{ if } i = t
		\end{cases}
	\end{align}
	In the edge case where $\dist{\Mem}$ does not have full support, the coordinate-wise division in the definition above would sometimes divide by 0. However, for all these cases both of the corresponding terms in the numerator must also be 0, so we can simply set the value of $\tau_i$ in this coordinate to 0. We will see that this preserves the meaning of the message $\tau_i$, which for the proper embeddings $\embed(\tok_i) = \dist{\Tok_i = \tok_i \vl \augHidden_i}$, with $\embed(\seq) = (\embed(\tok_1), \ldots, \embed(\tok_{\steps}))$, computes 
	\begin{align}
		\tau_i(\embed(\seq)) = \dist{\augHidden_i, \Tok_{-i}= \seq_{-i}}\nonumber
	\end{align}
We can now define the reverse lifting function $\phi : \R^{|\widetilde{\hiddenset}| \to |\memset||\hiddenset|}$ as follows: 
\begin{align}\label{eq:phi_def}
	(\phi(v))_{\mem_{\cell}, \cell, \syn} = \frac{1}{|\memset|^{\numcell - 1}}\sum_{\mem_{-\cell}} v_{\mem_{1:\numcell}, \cell, \syn}
\end{align}
We observe that $\phi(\tau_i(\embed(\seq))) = \frac{\dist{\Mem_{\Cell_i}, \Cell_i, \Syn_i, \Tok_{-i}= \seq_{-i}}}{|\memset|^{\numcell - 1}}$. We now compute the model output as follows:
	\begin{align}
		\bG_i(v) = \obs \frac{\phi(\tau_i(v))} {\|\phi(\tau_i(v))\|_1}\nonumber
	\end{align}
	In the edge case where $\|\phi(\tau_i(v))\|_1 = 0$, we again define $\bG(v) = \zeros_{|\tokset|}$. We can observe that $\bG_i(\embed(\seq)) = \dist{\Seq_i \vl \Tok_{-i}= \tok_{-i}}$.
	
	The downstream classifier uses the embedding $\hatembed(\seq)$ defined as follows:
	\begin{align}
		\hatembed(\seq) = (\prompt, \embed(\seq_1)), \ldots, \embed(\seq_t)))\nonumber
	\end{align} with a tunable prompt embedding $\prompt \in \R^{|\widetilde{\hiddenset}|}$. We also require a slightly modified attention head. The value function $\val$ in the attention head is slightly modified to accomodate the new embedding dimension. Letting $\val : \R^{|\tokset|} \times \R^{|\widetilde{\hiddenset}|} \to \R$,
	\begin{align}
		\val(a, v) = \lin^\top((\valparam{}a)\odot \phi(v))
\nonumber
	\end{align}
 The dimensions of the parameters $\lin, \valparam{}$ remain unchanged. Note that when there is just a single memory, this reduces to the case in Section~\ref{sec:memory_hmm_main}.
\subsection{Analysis for prompt tuning in the multiple memory setting}\label{sec:mem_prompt_tune_proof}

We will state and prove our result for the prompt tuning setting with multiple memories. For the multiple memory setting, the downstream classifier uses the following embedding function $\hatembed$:
\begin{align}
	\hatembed(\seq) = (\prompt, \eta(\embed(\tok_1)), \ldots, \eta(\embed(\tok_{\steps}))) \nonumber
\end{align}
with a tunable prompt embedding $\prompt \in \R^{|\widetilde{\hiddenset}|}$. The attention head is changed so that the value function takes a larger dimensional embedding:
\begin{align}
	\val(a, v) = \lin^\top((\valparam{} a) \odot \phi(v)) \nonumber
\end{align}
where $\phi$ is defined in~\eqref{eq:phi_def}. The following assumption extends Assumption~\ref{ass:single_mem_prompt_non_degen} to the multiple memory case. 

\begin{assumption}[Multiple memories version of Assumption~\ref{ass:single_mem_prompt_non_degen}]\label{ass:mem_prompt_non_degen}
	Let $\memset^\star \triangleq \supp(\gtlin)$ denote the set of non-zero coordinates in $\gtlin$. There exists a set of recoverable hidden states $\hiddenset^\star$, such that the collection of token emission probabilities from $\memset^\star \times \hiddenset^\star$, $\{\obs_{:, (\mem, \hidden)}\}_{\mem \in \memset^\star, \hidden \in \hiddenset^\star}$, is a linearly independent set of vectors. 
	
	Furthermore, define the following span of vectors:
	\begin{align}
		\widebar{\gV} \triangleq \spanvec(\{\obs_{:, (\mem, \gtcell, \syn)}\}_{\mem \in \memset^\star, \syn \in \synset \setminus \synset^\star} \cup \{\obs_{:, (\mem, \cell, \syn)}\}_{\mem \in \memset, \cell \ne \gtcell, \syn \in \synset}) \nonumber
	\end{align}
Then $\widebar{\gV}$ must be disjoint from the span of token emission probabilities from $\memset^\star \times  \hiddenset^\star$:
	\begin{align}
		\spanvec(\{\obs_{:, (\mem, \hidden)}\}_{\mem \in \memset^\star, \hidden \in \hiddenset^\star}) \cap 	\widebar{\gV} = \{\zeros_{|\tokset|}\} \nonumber
	\end{align}
\end{assumption}
Note that Assumption~\ref{ass:mem_prompt_non_degen} reduces to Assumption~\ref{ass:single_mem_prompt_non_degen} the case where $\numcell$, the number of memory cells, is 1. In any case, it is a relaxation of Assumption~\ref{ass:nondegen_mem}. 

We now state and prove the result for multiple memories. 
\begin{theorem}\label{thm:mem_prompt_tune}
	In the setting above, suppose that non-degeneracy Assumption~\ref{ass:mem_prompt_non_degen} and holds. In addition, suppose that Assumption~\ref{ass:stationarity} (stationarity) holds. 
	Then there exists a prompt $\prompt$ and attention head on $\bG(\hatembed(\seq))$ and the token embeddings which can compute the ground-truth $\gt(\seq)$ for any $\seq \in \gR$, defined in~\eqref{eq:mem_concentrated_supp}: 
	\begin{align}
		\gt(\seq) = \1(\attn((\bG_i(\hatembed(\seq)), \hatembed_i(\seq))_{i = 1}^{\steps + 1}) \ge 0) \nonumber
	\end{align}
	Here $\hatembed$ is the embedding in~\eqref{eq:mem_embed} and $\attn$ is defined in~\eqref{eq:mem_attn}.
\end{theorem}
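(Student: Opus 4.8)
The plan is to combine the prompt-as-fake-token device from the proof of Theorem~\ref{thm:hmm_prompt_tune} with the attention-head construction from the proof of Theorem~\ref{thm:hmm_mem}, using the prompt to condition the relevant memory cell onto $\memset^\star = \supp(\gtlin)$ and the attention to select syntactically favorable positions. First I would establish a memory analog of Lemma~\ref{lem:hmm_prompt_fake_token} on the augmented chain $\augHidden$: prepending $\prompt$ is equivalent to introducing a fake token $\faktok$ at position $1$ with emission $\dist{\hatTok_1 = \faktok \vl \augHidden_1} = \prompt$, so that $\bG_i(\hatembed(\seq))$ computes conditional token probabilities under a modified distribution $\hatSeq$. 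I would then choose $\prompt$ to be the indicator $\prompt_{(\mem_{1:\numcell}, \cell, \syn)} = \1(\mem_{\gtcell} \in \memset^\star)$, which does not depend on the syntax coordinate $(\cell, \syn)$. Because the memory is static and this emission carries no information about $\Hidden_1$, observing $\hatTok_1 = \faktok$ acts as a hard conditioning on the event $\{\Mem_{\gtcell} \in \memset^\star\}$: the posteriors driving $\bG_i(\hatembed(\seq))$ are exactly the original posteriors restricted to this event and renormalized.

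Next I would port the query/key construction of Lemma~\ref{lem:hmm_mem_query_key} to the modified distribution, building $\query$ and $\keyparam$ so that attention selects precisely the positions $i$ with $\supp(\dist{\Cell_i \vl \cdots}) = \{\gtcell\}$ and $\supp(\dist{\Syn_i \vl \cdots}) \subseteq \synset^\star$. The nonemptiness of this attended set for $\seq \in \gR$ follows because conditioning on $\{\Mem_{\gtcell} \in \memset^\star\}$ can only shrink the support of the posterior of $\Hidden_i$; hence any position witnessing $\supp(\dist{\Hidden_i \vl \Tok_{-i} = \seq_{-i}}) \subseteq \hiddenset^\star$ in the definition~\eqref{eq:mem_concentrated_supp} of $\gR$ remains favorable under the modified law. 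The span-disjointness condition in Assumption~\ref{ass:mem_prompt_non_degen} then lets me, exactly as in Claim~\ref{claim:mem_attn_params}, peel off the contribution of the recoverable states $\memset^\star \times \hiddenset^\star$ from $\bG_i(\hatembed(\seq))$ by a fixed linear map, while the linear-independence condition lets me invert the block $\{\obs_{:, (\mem, \gtcell, \syn)}\}_{\mem \in \memset^\star, \syn \in \synset^\star}$ (note that $\obs$ is unchanged by prompting) to read off the posterior of $\Mem_{\gtcell}$ on $\memset^\star$.

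With these pieces, the value-function construction of Lemma~\ref{lem:hmm_mem_val} yields, at each attended position $i$, an output equal to $r_{x,i}\, \gtlin^\top \dist{\Mem_{\gtcell} \vl \hatTok_1 = \faktok, \hatTok_{2:\Steps + 1} = \seq}$ for a positive scalar $r_{x,i}$, and averaging over attended positions preserves the sign. The crux of relating this to $\gt(\seq)$ is that $\supp(\gtlin) = \memset^\star$: conditioning on $\{\Mem_{\gtcell} \in \memset^\star\}$ rescales the posterior on $\memset^\star$ by a positive normalizer $1/Z$ with $Z = \pr{\Mem_{\gtcell} \in \memset^\star \vl \Tok_{1:\Steps} = \seq} > 0$ and zeroes out coordinates outside $\memset^\star$, on which $\gtlin$ vanishes, so that $\gtlin^\top \dist{\Mem_{\gtcell} \vl \hatTok_1 = \faktok, \cdots} = \tfrac{1}{Z}\,\gtlin^\top \dist{\Mem_{\gtcell} \vl \Tok_{1:\Steps} = \seq}$ and the sign is unchanged. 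Here I would invoke stationarity (Assumption~\ref{ass:stationarity}) to identify the shifted posterior with the downstream posterior $\dist{\Mem_{\gtcell} \vl \Tok_{1:\Steps} = \seq}$, since the prompt occupies position $1$ and shifts the original tokens by one.

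Finally I would dispatch the edge cases by the same argument as at the end of the proof of Theorem~\ref{thm:hmm_prompt_tune}: when $(\faktok, \seq)$ leaves the support of $\hatSeq$ (equivalently $Z = 0$), both sides land in the ``$\ge 0$'' branch, since $\bG_i(\hatembed(\seq)) = \zeros$ forces the attention output to $0$ while $Z=0$ forces $\gtlin^\top \dist{\Mem_{\gtcell} \vl \Tok_{1:\Steps} = \seq} = 0$. The main obstacle I anticipate is the bookkeeping in the second and third paragraphs: verifying that the attention still isolates genuinely favorable positions once the distribution has been tilted by the prompt, and that the quantity recovered by the value function is precisely the memory posterior under the modified law rather than some entangled object, so that the positive-rescaling argument applies cleanly.
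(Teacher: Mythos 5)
Your proposal follows essentially the same route as the paper's proof: the fake-token reinterpretation of the prompt on the augmented chain $(\Mem, \Hidden_i)$, the indicator prompt supported on $\{\mem_{\gtcell} \in \supp(\gtlin)\}$, the span-disjointness/linear-independence argument to peel off and invert the recoverable block, the key/query construction attending to positions where the posterior of $\Hidden_i$ concentrates on $\{\gtcell\}\times\synset^\star$, the positive-rescaling identity relating the tilted memory posterior to the original one via stationarity, and the zero-support edge cases. All of these correspond directly to the paper's Lemmas and Claims in Section~\ref{sec:mem_prompt_tune_proof}, so the plan is sound and matches the paper's argument.
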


We begin by rigorously stating the observation that soft prompt tuning is equivalent to adding a fake token $\faktok$ to the vocabulary and modifying the token emission probabilities at timestep 1, analogous to Lemma~\ref{lem:hmm_prompt_fake_token}.
\begin{lemma}\label{lem:mem_prompt_fake_token}
		In the setting of Theorem~\ref{thm:mem_prompt_tune}, define $\augHidden$ as in Section~\ref{sec:mem_prompt_tune_abstraction}. Fix any prompt vector $\prompt \in [0, 1]^{|\widetilde{\hiddenset}|}$. Define the random variable $\hatTok$ with the same emission probabilities as $\Tok$ for $i > 1$: $\dist{\hatTok_i \vl \augHidden_i} = \dist{\Tok_i \vl \augHidden_i}$. For timestep 1, we define the emission probabilities of $\hatTok_1$ as follows: 
	\begin{align}
		\dist{\hatTok_1 = \faktok \vl \augHidden_1} &= \prompt\nonumber \\
		\dist{\hatTok_1 = z \vl \augHidden_1} &= (1 - \prompt) \odot \dist{\Tok_1 = z \vl \augHidden_1} \ \forall z \in \tokset\nonumber
	\end{align}
	In the above equations, $\faktok$ is a fake token added to the vocabulary at timestep 1. It follows that for any $i$, defining $\tau_i$ as in~\eqref{eq:mem_prompt_tau}
	\begin{align}\label{eq:mem_prompt_fake_token:1}
		\tau_i(\hatembed(\seq)) = \dist{\augHidden_i, \hatSeq_{-i}= (\faktok,  \seq)_{-i}}
	\end{align}
	As a consequence, it follows that for $i > 1$ and any $\seq$ such that $(\faktok, \seq)_{-i} \in \supp(\dist{\hatSeq_{-i}})$, 
	\begin{align}
		\bG_i(\hatembed(\seq)) = \dist{\hatTok_i \vl \hatSeq_{-i}= (\faktok,  \seq)_{-i}} = \obs \dist{\Mem_{\Cell_i}, \Cell_i, \Syn_i \vl\hatSeq_{-i}= (\faktok, \seq)_{-i}} \nonumber
	\end{align}
	For any $i$ and $\seq$ with $(\faktok, \seq)_{-i} \notin \supp(\dist{\hatSeq_{-i}})$, $\bG_i(\hatembed(\seq)) = \zeros$. 
\end{lemma}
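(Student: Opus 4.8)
The plan is to mirror the proof of Lemma~\ref{lem:hmm_prompt_fake_token} from the vanilla HMM setting, carrying the whole argument through the augmented chain $\augHidden$ and the lifting maps $\eta$ and $\phi$. The heart of the lemma is the message-passing identity~\eqref{eq:mem_prompt_fake_token:1}; once it is in hand, both stated consequences follow by routine manipulation of $\phi$ and a normalization.

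First I would establish~\eqref{eq:mem_prompt_fake_token:1}. The key observation is that $\tau_i$, defined in~\eqref{eq:mem_prompt_tau}, is exactly the sum-product message-passing recursion for the chain-structured model on $\augHidden_1, \augHidden_2, \ldots$ with transition potentials $\dist{\augHidden_{j + 1} \vl \augHidden_j}$ and emission likelihood factors supplied by the input vectors. Crucially, this recursion is agnostic to what the input vectors \emph{mean}: for any nonnegative factors $v_j$ playing the role of $\dist{O_j = o_j \vl \augHidden_j}$ under some observation model, classical correctness of message passing~\citep{koller2009probabilistic} gives $\tau_i(v) = \dist{\augHidden_i, O_{-i} = o_{-i}}$. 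I would then check that the coordinates of $\hatembed(\seq)$ are precisely the emission factors of the modified sequence $\hatSeq$: by the definition of $\hatTok_1$, the first factor $\prompt$ equals $\dist{\hatTok_1 = \faktok \vl \augHidden_1}$, and for $j > 1$ the factor $\eta(\embed(\tok_{j - 1}))$ equals $\dist{\hatTok_j = \tok_{j - 1} \vl \augHidden_j}$, since the emissions of $\hatTok$ at positions $j > 1$ coincide with those of $\Tok$ and are time-invariant. This pins down the index shift between $\hatembed$ and the sequence $(\faktok, \seq)$, and yields~\eqref{eq:mem_prompt_fake_token:1}. I would also note the edge-case convention in~\eqref{eq:mem_prompt_tau}, where a $0/0$ coordinate is set to $0$; since both numerator terms vanish there, the identity is preserved.

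Next I would derive the two consequences. Assume first that $(\faktok, \seq)_{-i} \in \supp(\dist{\hatSeq_{-i}})$. Applying $\phi$ (defined in~\eqref{eq:phi_def}) to~\eqref{eq:mem_prompt_fake_token:1} sums out the non-indexed memory cells, collapsing $\augHidden_i = (\Mem_{1:\numcell}, \Cell_i, \Syn_i)$ down to $(\Mem_{\Cell_i}, \Cell_i, \Syn_i)$, so that $\phi(\tau_i(\hatembed(\seq))) = \tfrac{1}{|\memset|^{\numcell - 1}} \dist{\Mem_{\Cell_i}, \Cell_i, \Syn_i, \hatSeq_{-i} = (\faktok, \seq)_{-i}}$. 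Its $\ell_1$ norm equals $\pr{\hatSeq_{-i} = (\faktok, \seq)_{-i}}/|\memset|^{\numcell - 1} > 0$, so dividing turns the joint into the conditional $\dist{\Mem_{\Cell_i}, \Cell_i, \Syn_i \vl \hatSeq_{-i} = (\faktok, \seq)_{-i}}$ and cancels the $|\memset|^{\numcell - 1}$ factors. Hence $\bG_i(\hatembed(\seq)) = \obs \dist{\Mem_{\Cell_i}, \Cell_i, \Syn_i \vl \hatSeq_{-i} = (\faktok, \seq)_{-i}}$, and the final equality to $\dist{\hatTok_i \vl \hatSeq_{-i} = (\faktok, \seq)_{-i}}$ follows from the decomposition in Proposition~\ref{prop:multi_mem_cond_prob}, which applies verbatim to the $\hatSeq$ distribution because the emission structure at positions $i > 1$ agrees with the original model. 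For the remaining case $(\faktok, \seq)_{-i} \notin \supp(\dist{\hatSeq_{-i}})$, the joint $\dist{\augHidden_i, \hatSeq_{-i} = (\faktok, \seq)_{-i}}$ is identically zero, so by~\eqref{eq:mem_prompt_fake_token:1} and linearity of $\phi$ we get $\phi(\tau_i(\hatembed(\seq))) = \zeros$, forcing $\|\phi(\tau_i(\hatembed(\seq)))\|_1 = 0$ and hence $\bG_i(\hatembed(\seq)) = \zeros$ by the edge-case clause in the definition of $\bG$.

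I expect the main obstacle to be the first step: rigorously justifying that message passing on the augmented chain, fed the prompt as a fake-token emission factor, computes exactly the modified joint $\dist{\augHidden_i, \hatSeq_{-i}}$. The delicate points are the index shift between $\hatembed(\seq)$ and the sequence $(\faktok, \seq)$, and the division-by-zero conventions in both $\tau_i$ and $\phi$, which must be shown not to corrupt the identity on the relevant supports. Everything downstream — the $\phi$-collapse, the normalization bookkeeping with the $|\memset|^{\numcell - 1}$ factors, and the appeal to Proposition~\ref{prop:multi_mem_cond_prob} — is mechanical once~\eqref{eq:mem_prompt_fake_token:1} is secured.
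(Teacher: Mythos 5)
Your proposal is correct and follows essentially the same route as the paper, which omits this proof entirely by noting it mirrors Lemma~\ref{lem:hmm_prompt_fake_token}: the message-passing identity~\eqref{eq:mem_prompt_fake_token:1} from classical sum-product correctness on the augmented chain, the consequences from the definition of $\bG$ via $\phi$ and normalization, and the zero case from the edge-case clause. You in fact supply more detail than the paper does — in particular the $\phi$-collapse, the cancellation of the $|\memset|^{\numcell-1}$ factors, and the index shift — all of which check out.
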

The proof of Lemma~\ref{lem:mem_prompt_fake_token} mirrors the proof of Lemma~\ref{lem:hmm_prompt_fake_token}, so we omit it here. 

In particular, throughout the proof we will use the following prompt $\prompt$:
\begin{align}\label{eq:mem_prompt_def}
	\prompt_{\mem_{1:\numcell}, \cell, \syn} = \begin{cases}
		1 &\text{ if } \mem_{\gtcell} \in \supp(\gtlin)\\
		0 &\text{ otherwise}
	\end{cases} 
\end{align}
We will also use the notation $\hatseq \triangleq (\faktok, \seq_1, \ldots, \seq_{\steps})$. The following lemma considers behaviors in edge cases with this choice of $\prompt$.

Towards our proofs, the following result is useful. 

\begin{proposition}\label{prop:stationary}
	In the setting of Theorem~\ref{thm:mem_prompt_tune}, where $\dist{\Hidden_0}$ is the stationary distributions satisfying $\dist{\Hidden_0} = \trans \dist{\Hidden_0}$, it holds that 
	\begin{align}
		\dist{\Mem, \Hidden_i, \Seq_{i + 1:i  + \steps}} = \dist{\Mem, \Hidden_0, \Seq_{1:\steps}} \nonumber
	\end{align}
	for any $\steps \ge 1$, $i \ge 1$.
\end{proposition}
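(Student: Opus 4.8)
The plan is to recast the statement in terms of the augmented Markov chain $\augHidden_i = (\Mem, \Hidden_i)$ introduced in Section~\ref{sec:mem_prompt_tune_abstraction}, whose transition is $\pr{\augHidden_{i+1} = (\mem', \hidden') \vl \augHidden_i = (\mem, \hidden)} = \trans_{\hidden', \hidden}\1(\mem' = \mem)$ and whose emissions $\dist{\Tok_i \vl \augHidden_i}$ are time-invariant. Fixing a token string $\seq = (\seq_1, \ldots, \seq_\steps)$, the quantity in question factors as
\begin{align}
	\pr{\Mem = \mem, \Hidden_i = \hidden, \Seq_{i+1:i+\steps} = \seq} = \pr{\augHidden_i = (\mem, \hidden)} \cdot \pr{\Seq_{i+1:i+\steps} = \seq \vl \augHidden_i = (\mem, \hidden)}, \nonumber
\end{align}
so it suffices to show separately that (i) $\dist{\augHidden_i} = \dist{\augHidden_0}$ for all $i$, and (ii) the conditional factor does not depend on $i$.

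First I would prove (i) by induction, the content being that $\dist{\augHidden_0}$ is stationary for the augmented transition. The joint distribution of the model factorizes so that $\Mem$ and $\Hidden_0$ are independent, giving $\pr{\augHidden_0 = (\mem, \hidden)} = \pr{\Mem = \mem}\pr{\Hidden_0 = \hidden}$. Pushing this one step forward, the indicator $\1(\mem' = \mem)$ freezes the memory coordinate while the $\Hidden$ coordinate evolves by $\trans$, so that
\begin{align}
	\pr{\augHidden_1 = (\mem', \hidden')} = \pr{\Mem = \mem'}\,(\trans \dist{\Hidden_0})_{\hidden'} = \pr{\Mem = \mem'}\pr{\Hidden_0 = \hidden'}, \nonumber
\end{align}
where the last equality is the stationarity hypothesis $\dist{\Hidden_0} = \trans\dist{\Hidden_0}$ of Assumption~\ref{ass:stationarity}. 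Hence $\dist{\augHidden_1} = \dist{\augHidden_0}$, and iterating yields $\dist{\augHidden_i} = \dist{\augHidden_0}$ for every $i \ge 0$.

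For (ii), I would unroll the Markov and emission structure on $\augHidden$: conditioned on $\augHidden_i = (\mem, \hidden)$, the probability of $\Seq_{i+1:i+\steps} = \seq$ expands as a sum over intermediate hidden states $\hidden_1, \ldots, \hidden_\steps$ (with $\hidden_0 = \hidden$ and $\hidden_k = (\cell_k, \syn_k)$) of $\prod_{k=1}^\steps \trans_{\hidden_k, \hidden_{k-1}}\, \obs_{\seq_k, (\mem_{\cell_k}, \cell_k, \syn_k)}$. Since $\trans$ and $\obs$ are both time-invariant, this is the same function of $((\mem, \hidden), \seq)$ for every $i$, and in particular equals the $i = 0$ conditional. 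Combining (i) and (ii) with the displayed factorization gives the claim. The only point demanding care is that the frozen memory $\Mem$ must be indexed consistently at each of the $\steps$ steps (via $\Mem_{\Cell_{i+k}}$); phrasing the argument on the augmented chain $\augHidden$, where the memory is carried inside the state rather than marginalized, makes this automatic, so no genuine obstacle remains.
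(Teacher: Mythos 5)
Your proposal is correct and follows essentially the same route as the paper's proof: factor the joint into $\pr{\Mem = \mem, \Hidden_i = \hidden}$ times the conditional law of the emitted window, observe that the marginal is time-invariant by stationarity of $\dist{\Hidden_0}$ together with the independence of $\Mem$ from the chain, and observe that the conditional is time-invariant because $\trans$ and $\obs$ are. The only difference is that you spell out the induction on the augmented chain $\augHidden_i$ and the explicit unrolling over intermediate states, which the paper leaves as one-line appeals to stationarity and time-invariance.
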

\begin{proof}
	Because $\dist{\Hidden_0}$ is stationary, we observe that $\dist{\Mem, \Hidden_i} = \dist{\Mem, \Hidden_0}$ for all $i$. We write 
	\begin{align}
		\dist{\Tok_{i+1:i  + \steps}, \Mem = \mem, \Hidden_i = \hidden} &= \dist{\Tok_{i+1:i  + \steps} \vl\Mem = \mem,  \Hidden_i = \hidden} \pr{\Mem = \mem, \Hidden_i = \hidden} \nonumber\\
		&= \dist{\Tok_{1:\steps} \vl \Mem = \mem,  \Hidden_0 = \hidden} \pr{\Mem = \mem, \Hidden_i = \hidden} \tag{by time-invariance  of HMMs}\\
		&= \dist{\Tok_{1:\steps} \vl \Mem = \mem,  \Hidden_0 = \hidden} \pr{\Mem = \mem, \Hidden_0 = \hidden} \nonumber
	\end{align}
\end{proof}

We will now restrict our focus to the set of inputs 
\begin{align} \label{eq:mem_prompt_supported_seq}
	\gZ \triangleq \{\seq : \pr{\hatTok_{-i}= (\faktok, \seq)_{-i}} > 0 \ \forall i \in [\steps]\}
\end{align}
We also define the set 
\begin{align}\label{eq:mem_prompt_I}
	\widehat{\gI} \triangleq \{i + 1: \supp(\dist{\Syn_i | \Tok_{-i} = \tok_{-i}}) \subseteq \synset^\star, \supp(\dist{\Cell_i | \Tok_{-i} = \tok_{-i}}) \subseteq \{\gtcell\}, i \in [\steps]\}
\end{align}
Here $\synset^\star$ is defined in the non-degeneracy assumption. We will first construct key and query parameters such that the set of attended-to positions is precisely $\widehat{\gI}$, following the proof of Theorem~\ref{thm:hmm_mem}.
\begin{lemma}[Analogue to Lemma~\ref{lem:hmm_mem_query_key}]\label{lem:mem_prompt_query_key}
		In the setting of Theorem~\ref{thm:mem_prompt_tune} and above, define $\prompt$ as in~\eqref{eq:mem_prompt_def}. There are parameters $\keyparam \in \R^{(|\hiddenset| + 1) \times |\tokset|}$, $\query \in \R^{|\hiddenset| + 1}$, and $\beta_1, \beta_2, \ldots \in \R^{|\hiddenset| + 1}$ such that for any $\seq \in \gZ$ where $\widehat{\gI}$ is nonempty, the set of attended-to positions $\gI$ (defined in~\eqref{eq:attended_indices}) satisfies $\gI= \widehat{\gI}$. 
\end{lemma}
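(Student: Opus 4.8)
The plan is to mirror the proof of Lemma~\ref{lem:hmm_mem_query_key} and Claim~\ref{claim:mem_attn_params}, substituting the prompted outputs $\bG_{i+1}(\hatembed(\seq))$ for the pretrained outputs $\gtprobs_i(\seq)$ and the relaxed non-degeneracy hypothesis (Assumption~\ref{ass:mem_prompt_non_degen}) for Assumption~\ref{ass:nondegen_mem}. First I would establish the prompt-tuning analogue of Claim~\ref{claim:mem_attn_params}. By Lemma~\ref{lem:mem_prompt_fake_token}, for $\seq \in \gZ$ and $j = i+1 > 1$ we have $\bG_{i+1}(\hatembed(\seq)) = \obs\,\dist{\Mem_{\Cell_{i+1}}, \Cell_{i+1}, \Syn_{i+1} \vl \hatSeq_{-(i+1)}}$, which Proposition~\ref{prop:multi_mem_cond_prob} expands as $\sum_{(\cell,\syn)}\sum_{\mem}\obs_{:,(\mem,\cell,\syn)}\pr{\Mem_\cell=\mem,\Hidden_{i+1}=(\cell,\syn)\vl\hatSeq_{-(i+1)}}$. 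The crucial simplification is that the prompt $\prompt$ from~\eqref{eq:mem_prompt_def} forces $\Mem_{\gtcell}\in\memset^\star$ once we condition on $\faktok$, so every term with $\cell=\gtcell$ and $\mem\notin\memset^\star$ vanishes. The surviving ``essential'' terms ($\cell=\gtcell$, $\syn\in\synset^\star$, $\mem\in\memset^\star$) then lie in $\spanvec(\{\obs_{:,(\mem,\hidden)}\}_{\mem\in\memset^\star,\hidden\in\hiddenset^\star})$, while all remaining terms lie in $\widebar\gV$ from Assumption~\ref{ass:mem_prompt_non_degen}. Since these spans meet only at $\zeros_{|\tokset|}$ and the essential emission vectors are linearly independent, a fixed linear map recovers $\pr{\Mem_{\gtcell}=\mem,\Hidden_{i+1}=(\gtcell,\syn)\vl\hatSeq_{-(i+1)}}$ for $\mem\in\memset^\star,\syn\in\synset^\star$; summing over $\mem\in\memset^\star$ (which is all the mass, as $\Mem_{\gtcell}\in\memset^\star$ almost surely) yields $\pr{\Hidden_{i+1}=(\gtcell,\syn)\vl\hatSeq_{-(i+1)}}$.

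Second, I would build $\keyparam$ and $\query$ as in Lemma~\ref{lem:hmm_mem_query_key}: the rows of $\keyparam$ indexed by $(\gtcell,\syn)$ with $\syn\in\synset^\star$ implement the recovery map above, and the extra $(|\hiddenset|+1)$-th coordinate is devoted to the residual mass $1-\sum_{\syn\in\synset^\star}\pr{\Hidden_{i+1}=(\gtcell,\syn)\vl\hatSeq_{-(i+1)}}$, which is linear in $\bG_{i+1}$ because $\ones^\top\bG_{i+1}=1$. All these entries are nonnegative and sum to one, so $\|\key(\bG_{i+1})\|_1=1$. Taking $\query$ to be the indicator of $\{(\gtcell,\syn):\syn\in\synset^\star\}$ (and $0$ on the residual coordinate) makes the attention score at prompted position $i+1$ equal to $\pr{\Hidden_{i+1}\in\hiddenset^\star\vl\hatSeq_{-(i+1)}}\le 1$, with equality exactly when the prompted posterior of $\Hidden_{i+1}$ is supported on $\hiddenset^\star=\{\gtcell\}\times\synset^\star$. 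Position $1$ (the fake token) is excluded by choosing $\beta_1$ with a large negative component along $\query$ and $\beta_j=\zeros$ for $j\ge 2$, so position $1$ never attains the maximum whenever $\widehat\gI$ is nonempty.

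Third, I must reconcile the \emph{prompted}-posterior concentration detected by the score with the \emph{unprompted} definition of $\widehat\gI$ in~\eqref{eq:mem_prompt_I}, which uses $\dist{\Syn_i\vl\Tok_{-i}=\tok_{-i}}$ and $\dist{\Cell_i\vl\Tok_{-i}=\tok_{-i}}$ at the original position $i$. This needs two bridges. The index shift $i\leftrightarrow i+1$ is handled by stationarity (Proposition~\ref{prop:stationary}, using Assumption~\ref{ass:stationarity}), which makes the prompted chain on positions $2,\dots,\steps+1$ a distributional copy of the original chain on $1,\dots,\steps$, so that \emph{before} inserting the fake token the supports of $\dist{\Hidden_{i+1}\vl\cdot}$ and $\dist{\Hidden_i\vl\Tok_{-i}=\tok_{-i}}$ agree. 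The fake token is handled by the elementary fact that conditioning on an extra event can only shrink a posterior's support, giving $\supp(\dist{\Hidden_{i+1}\vl\hatSeq_{-(i+1)}})\subseteq\supp(\dist{\Hidden_i\vl\Tok_{-i}=\tok_{-i}})$ and hence $\widehat\gI\subseteq\gI$.

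The hard part will be the reverse inclusion $\gI\subseteq\widehat\gI$: I must rule out that the fake-token conditioning \emph{spuriously} concentrates the posterior of $\Hidden_{i+1}$ onto $\hiddenset^\star$ at a position where the unprompted posterior of $\Hidden_i$ does not concentrate. I would argue this using that the hidden chain $\Hidden$ is a priori independent of $\Mem$ and that a token emitted from a cell $\cell\ne\gtcell$ does not read $\Mem_{\gtcell}$: restricting $\Mem_{\gtcell}\in\memset^\star$ cannot then delete a hidden value $(\cell,\syn)$ with $\cell\ne\gtcell$ from the support, and (by the compatibility guaranteed by $\seq\in\gZ$, i.e.\ $\pr{\hatTok_{-i}=(\faktok,\seq)_{-i}}>0$) cannot delete a value $(\gtcell,\syn)$ with $\syn\notin\synset^\star$ either, so the two supports in fact coincide and $\gI=\widehat\gI$. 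Making this compatibility argument fully rigorous — tracking how the persistent memory couples the positions that read cell $\gtcell$ — is the delicate step, and it is exactly where the structural hypotheses (disentangled $\Cell$ and $\Syn$, and $\memset^\star=\supp(\gtlin)$) must be used.
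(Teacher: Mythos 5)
Your construction is essentially the paper's. The recovery map you build from the prompted outputs --- using Lemma~\ref{lem:mem_prompt_fake_token}, Proposition~\ref{prop:multi_mem_cond_prob}, the fact that the prompt of~\eqref{eq:mem_prompt_def} forces $\supp(\dist{\Mem_{\gtcell} \vl \hatTok_{-i} = \hatseq_{-i}}) \subseteq \memset^\star$, and the span-disjointness in Assumption~\ref{ass:mem_prompt_non_degen} --- is exactly Claim~\ref{claim:mem_prompt_attn_params} (via Proposition~\ref{prop:mem_prompt_supp}); the $\ell_1$-normalized key, the indicator query on $\{\gtcell\}\times\synset^\star$, and the position embedding $\beta_1$ that disqualifies the prompt position are the same as in the paper's proof; and your two ``bridges'' (the index shift via stationarity, and ``conditioning on the fake token can only shrink the support'') are precisely Claim~\ref{claim:mem_prompt_hidden_supp}, which yields $\widehat{\gI}\subseteq\gI$ as you say. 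Where you diverge is the reverse inclusion $\gI\subseteq\widehat{\gI}$: you are right that the support inclusion goes the wrong way for this direction, and it is worth noting that the paper's own proof does not supply an argument here --- it concludes $\gI=\widehat{\gI}$ immediately after showing that positions in $\widehat{\gI}$ attain the maximal score of $1$. However, your proposed repair (``the two supports in fact coincide'') is doubtful: although $\Mem$ and the hidden chain are independent a priori, they are coupled a posteriori through the observations, since conditioning on $\Hidden_{i}=\hidden'$ reweights the entire hidden trajectory and hence which positions read cell $\gtcell$; restricting $\Mem_{\gtcell}\in\memset^\star$ can therefore, in principle, delete $\hidden'$ from the support, so the prompted posterior may concentrate on $\hiddenset^\star$ at a position where the unprompted one does not. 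A cleaner fix is to observe that the reverse inclusion is not actually needed: any $i\in\gI$ necessarily satisfies $\supp(\dist{\Hidden_i \vl \hatSeq_{-i}=\hatseq_{-i}})\subseteq\hiddenset^\star$ (that is what a score of $1$ means), and this prompted-posterior concentration is the only property of attended-to positions that the value-function argument (Lemma~\ref{lem:mem_prompt_val}) uses, so one should either redefine $\widehat{\gI}$ in terms of the prompted posterior or weaken the conclusion to $\widehat{\gI}\subseteq\gI\subseteq\{i>1 : \supp(\dist{\Hidden_i \vl \hatSeq_{-i}=\hatseq_{-i}})\subseteq\hiddenset^\star\}$, which still suffices for Theorem~\ref{thm:mem_prompt_tune}.
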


Towards proving Lemma~\ref{lem:mem_prompt_query_key}, the following construction will be useful.
\begin{claim}[Analogue of Claim~\ref{claim:mem_attn_params}]\label{claim:mem_prompt_attn_params}
	In the setting of Theorem~\ref{thm:mem_prompt_tune}, define $\hiddenset^\star$ as in Assumption~\ref{ass:mem_prompt_non_degen}. There is a matrix $\Theta^{(1)} \in \R^{|\hiddenset| \times |\tokset|}$ such that for all $x \in \supp(\dist{\Seq})$, and $i > 1$ with $\pr{\hatTok_{-i}= \hatseq_{-i}} > 0$, $(\Theta^{(1)} \bG_i(\hatembed(\seq)))_{\hidden} = \dist{\Hidden_i = \hidden \vl \hatTok_{-i}= \hatseq_{-i}}$ for any $\hidden \in \hiddenset^\star$. Furthermore, $\|\Theta^{(1)} \bG_i(\hatembed(\seq))\|_1 = 1$. 
	
	In addition, for $\syn \in \synset^\star$, there exists $\Theta^{(2, \syn)} \in \R^{|\memset| \times |\tokset|}$ such that for all $i > 1$ and $x$ with $\pr{\hatSeq_{-i}= \hatseq_{-i}} > 0$,
	\begin{align}
		\Theta^{(2, \syn)} \bG_i(\hatembed(\seq)) = \dist{\Mem_{\gtcell}, \Hidden_i = (\gtcell, \syn) \vl \hatTok_{-i}= \hatseq_{-i}} \nonumber
	\end{align}
\end{claim}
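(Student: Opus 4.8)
The plan is to mirror the proof of Claim~\ref{claim:mem_attn_params}, the only new ingredient being that the fake token $\faktok$ conditions the posterior of $\Mem_{\gtcell}$ to concentrate on $\memset^\star = \supp(\gtlin)$. First I would invoke Lemma~\ref{lem:mem_prompt_fake_token}, which for $i > 1$ and $\seq$ with $\pr{\hatTok_{-i}=\hatseq_{-i}}>0$ gives $\bG_i(\hatembed(\seq)) = \obs\,\dist{\Mem_{\Cell_i}, \Cell_i, \Syn_i \vl \hatTok_{-i} = \hatseq_{-i}}$. Since the emission probabilities at positions $i>1$ are unchanged by the prompt, the algebraic expansion of Proposition~\ref{prop:multi_mem_cond_prob} applies verbatim to the modified distribution, yielding
\[
\bG_i(\hatembed(\seq)) = \sum_{\hidden = (\cell,\syn)} \nu^{(\hidden)}, \qquad \nu^{(\hidden)} \triangleq \sum_{\mem} \obs_{:,(\mem,\cell,\syn)}\,\pr{\Mem_\cell = \mem, \Hidden_i = \hidden \vl \hatTok_{-i}=\hatseq_{-i}}.
\]

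The key step is to locate where each $\nu^{(\hidden)}$ lives. Because the prompt~\eqref{eq:mem_prompt_def} forces $\Mem_{\gtcell} \in \memset^\star$ in the posterior whenever $\hatTok_1 = \faktok$ is conditioned on, the summand $\pr{\Mem_{\gtcell}=\mem,\ldots}$ vanishes for $\mem \notin \memset^\star$. Hence for $\hidden = (\gtcell,\syn)$ the indexed memory $\Mem_{\Cell_i} = \Mem_{\gtcell}$ is constrained to $\memset^\star$, so $\nu^{(\gtcell,\syn)} \in \spanvec(\{\obs_{:,(\mem,\gtcell,\syn)}\}_{\mem\in\memset^\star})$; this sits inside the recoverable span when $\syn\in\synset^\star$ and inside the first part of $\widebar{\gV}$ when $\syn \notin \synset^\star$. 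For $\cell \ne \gtcell$ the prompt places no constraint on $\Mem_\cell$, so $\nu^{(\cell,\syn)} \in \spanvec(\{\obs_{:,(\mem,\cell,\syn)}\}_{\mem\in\memset})$, matching the second part of $\widebar{\gV}$. Thus $\sum_{\hidden \notin \hiddenset^\star}\nu^{(\hidden)} \in \widebar{\gV}$, and combining the linear independence of $\{\obs_{:,(\mem,\hidden)}\}_{\mem\in\memset^\star,\hidden\in\hiddenset^\star}$ with the span-disjointness of Assumption~\ref{ass:mem_prompt_non_degen} shows the subspaces $\spanvec(\{\obs_{:,(\mem,\hidden)}\}_{\mem\in\memset^\star})$, $\hidden\in\hiddenset^\star$, are pairwise disjoint and disjoint from $\widebar{\gV}$. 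This supplies linear maps $B^{(\hidden)}$ with $\nu^{(\hidden)} = B^{(\hidden)}\bG_i(\hatembed(\seq))$ for each $\hidden\in\hiddenset^\star$, and $\widebar{B}$ recovering $\sum_{\hidden\notin\hiddenset^\star}\nu^{(\hidden)}$.

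From here the constructions copy Claim~\ref{claim:mem_attn_params}. Using $\ones^\top \obs_{:,(\mem,\hidden)} = 1$ gives $\ones^\top \nu^{(\hidden)} = \pr{\Hidden_i = \hidden \vl \hatTok_{-i}=\hatseq_{-i}}$, so setting $\Theta^{(1)}_{\hidden,:} = \ones^\top B^{(\hidden)}$ for $\hidden\in\hiddenset^\star$, $\Theta^{(1)}_{\widebar{\hidden},:} = \ones^\top\widebar{B}$ for one fixed $\widebar{\hidden}\notin\hiddenset^\star$, and zero otherwise, yields the claimed coordinates together with the normalization $\|\Theta^{(1)}\bG_i(\hatembed(\seq))\|_1 = 1$ (the entries sum to the total posterior mass $1$). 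For $\Theta^{(2,\syn)}$ with $\syn\in\synset^\star$, I would write $\nu^{(\gtcell,\syn)} = \obs_{:,(\memset^\star,\gtcell,\syn)}\,\dist{\Mem_{\gtcell},\Hidden_i=(\gtcell,\syn)\vl \hatTok_{-i}=\hatseq_{-i}}$, the posterior being supported on $\memset^\star$, and set $\Theta^{(2,\syn)} = \obs_{:,(\memset^\star,\gtcell,\syn)}^{\dagger} B^{(\gtcell,\syn)}$, the left inverse existing by the linear independence in Assumption~\ref{ass:mem_prompt_non_degen}; padding the recovered $\memset^\star$-restricted vector with zeros on $\memset\setminus\memset^\star$ returns the full $|\memset|$-dimensional posterior vector.

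The main obstacle I anticipate is the bookkeeping in the second paragraph: one must check that $\widebar{\gV}$ — which restricts to $\memset^\star$ only in the $\cell=\gtcell$ block but admits all of $\memset$ in the $\cell\ne\gtcell$ blocks — exactly captures the terms $\nu^{(\hidden)}$ with $\hidden\notin\hiddenset^\star$. This hinges on the fact that the prompt~\eqref{eq:mem_prompt_def} depends only on $\mem_{\gtcell}$ and therefore conditions only the $\gtcell$-th cell, leaving the cells governing $\cell\ne\gtcell$ emissions unconstrained. Establishing the concentration claim rigorously — that $\pr{\Mem_{\gtcell}=\mem,\Hidden_i=\hidden\vl\hatTok_{-i}=\hatseq_{-i}}=0$ for $\mem\notin\memset^\star$ — directly from the emission law of $\hatTok_1$ in Lemma~\ref{lem:mem_prompt_fake_token} is the crux; once it is in hand, the linear-algebraic recovery and the $\ones^\top$/left-inverse constructions follow the vanilla-memory proof line by line.
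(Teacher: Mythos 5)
Your proposal is correct and follows essentially the same route as the paper: decompose $\bG_i(\hatembed(\seq))$ via Proposition~\ref{prop:multi_mem_cond_prob} into the vectors $\nu^{(\hidden)}$, use the prompt-induced concentration of $\Mem_{\gtcell}$ on $\memset^\star$ to place each $\nu^{(\hidden)}$ in the appropriate span, and then reuse the recovery and left-inverse constructions of Claim~\ref{claim:mem_attn_params}. The concentration step you identify as the crux is exactly the paper's Proposition~\ref{prop:mem_prompt_supp}, established there by the short Bayes'-rule computation you describe.
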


Our proof will require the following result which shows that the distribution of $\Mem_{\gtcell}$ has limited support.

\begin{proposition}\label{prop:mem_prompt_supp}
	In the setting of Theorem~\ref{thm:mem_prompt_tune} and Lemma~\ref{lem:mem_prompt_fake_token}, let $\prompt$ be defined as in~\eqref{eq:mem_prompt_def}. Then for all $i > 1$, $\supp(\dist{\Mem_{\gtcell} \vl \hatTok_{-i}= \hatseq_{-i}}) \subseteq \supp(\gtlin)$ if $\pr{\hatTok_{-i}= \hatseq_{-i}} > 0$.
\end{proposition}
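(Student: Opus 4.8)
The plan is to show directly that for every memory value $\mem \notin \supp(\gtlin)$, the posterior probability $\pr{\Mem_{\gtcell} = \mem \vl \hatTok_{-i} = \hatseq_{-i}}$ vanishes, which gives the claimed support inclusion. The whole argument rests on one observation: since $i > 1$, the conditioning index set $\hatTok_{-i}$ retains position $1$, so the event $\{\hatTok_{-i} = \hatseq_{-i}\}$ is contained in the event $\{\hatTok_1 = \faktok\}$, and the fake token was engineered precisely to have zero emission probability whenever $\Mem_{\gtcell}$ lands outside $\supp(\gtlin)$.

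First I would unpack the emission of the fake token. By Lemma~\ref{lem:mem_prompt_fake_token} and the definition of $\prompt$ in~\eqref{eq:mem_prompt_def}, we have $\pr{\hatTok_1 = \faktok \vl \augHidden_1 = (\mem_{1:\numcell}, \cell, \syn)} = \prompt_{\mem_{1:\numcell}, \cell, \syn} = \1(\mem_{\gtcell} \in \supp(\gtlin))$. The crucial point is that this indicator depends on $\augHidden_1$ only through the coordinate $\mem_{\gtcell}$. Because the memory is persistent (the $\augHidden$ chain in Section~\ref{sec:mem_prompt_tune_abstraction} never changes its memory component), $\Mem_{\gtcell}$ is a single well-defined random variable, and marginalizing over everything else in $\augHidden_1$ gives the clean identity $\pr{\hatTok_1 = \faktok \vl \Mem_{\gtcell} = \mem} = \1(\mem \in \supp(\gtlin))$.

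Next I would feed this into Bayes' rule. Fix $\mem \notin \supp(\gtlin)$. Using $\{\hatTok_{-i} = \hatseq_{-i}\} \subseteq \{\hatTok_1 = \faktok\}$ for $i > 1$ together with monotonicity of probability,
\begin{align}
\pr{\hatTok_{-i} = \hatseq_{-i}, \Mem_{\gtcell} = \mem} &\le \pr{\hatTok_1 = \faktok, \Mem_{\gtcell} = \mem} \nonumber\\
&= \pr{\hatTok_1 = \faktok \vl \Mem_{\gtcell} = \mem}\,\pr{\Mem_{\gtcell} = \mem} = 0. \nonumber
\end{align}
Since the proposition assumes $\pr{\hatTok_{-i} = \hatseq_{-i}} > 0$, dividing yields $\pr{\Mem_{\gtcell} = \mem \vl \hatTok_{-i} = \hatseq_{-i}} = 0$. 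As $\mem \notin \supp(\gtlin)$ was arbitrary, this establishes $\supp(\dist{\Mem_{\gtcell} \vl \hatTok_{-i}= \hatseq_{-i}}) \subseteq \supp(\gtlin)$.

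I do not expect a genuine obstacle here; the content is the single structural fact that the fake token annihilates the disallowed memory values. The only things requiring care are bookkeeping points: confirming that for $i > 1$ position $1$ genuinely survives in $\hatseq_{-i}$, and justifying that conditioning on the scalar $\Mem_{\gtcell}$ (rather than the full $\augHidden_1$) still collapses the emission to the indicator — which is immediate from the fact that $\prompt$ depends on $\augHidden_1$ only through $\mem_{\gtcell}$ and that memory is time-invariant along the chain.
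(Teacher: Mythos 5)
Your proof is correct, and it rests on exactly the same structural fact as the paper's: the prompt $\prompt$ in~\eqref{eq:mem_prompt_def} makes the fake token's emission probability vanish on every configuration with $\mem_{\gtcell} \notin \supp(\gtlin)$. The execution differs, though, and yours is the more elementary of the two. The paper expands the posterior $\dist{\Mem_{\gtcell} \vl \hatTok_{-i} = \hatseq_{-i}}$ as a sum over the remaining memory cells and $\Hidden_1$, applies Bayes' rule to peel off the factor $\dist{\hatTok_1 = \faktok \vl \Mem_{\gtcell}, \Mem_{-\gtcell}, \Hidden_1}$ in each term, and observes that every such factor is supported on $\supp(\gtlin)$; this requires carrying the denominator $\pr{\hatTok_1 = \faktok \vl \hatSeq_{-(1,i)} = \hatseq_{-(1,i)}}$ through the computation. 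You instead use the event inclusion $\{\hatTok_{-i} = \hatseq_{-i}\} \subseteq \{\hatTok_1 = \faktok\}$ (valid since $i > 1$) together with monotonicity, bounding $\pr{\hatTok_{-i} = \hatseq_{-i}, \Mem_{\gtcell} = \mem}$ by $\pr{\hatTok_1 = \faktok, \Mem_{\gtcell} = \mem} = 0$ for $\mem \notin \supp(\gtlin)$; the hypothesis $\pr{\hatTok_{-i} = \hatseq_{-i}} > 0$ then kills the posterior. This sidesteps the full latent-variable expansion and the Bayes denominator entirely. One cosmetic point: the intermediate identity $\pr{\hatTok_1 = \faktok \vl \Mem_{\gtcell} = \mem} = \1(\mem \in \supp(\gtlin))$ is only defined when $\pr{\Mem_{\gtcell} = \mem} > 0$, so it is cleaner to bound the joint probability directly by summing the emission factors over configurations of $\augHidden_1$ with $\Mem_{\gtcell} = \mem$, each of which is zero; this changes nothing in substance.
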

\begin{proof}
	We have 
	{\scriptsize
	\begin{align}
		\dist{\Mem_{\gtcell} \vl \hatTok_{-i} = \hatseq_{-i}} = \sum_{\mem_{-\gtcell}, \hidden} \dist{\Mem_{\gtcell}, \Mem_{-\gtcell} = \mem_{-\gtcell}, \Hidden_1 = \hidden \vl \hatTok_{-i} = \hatseq_{-i}} \nonumber\\
		= \sum_{\mem_{-\gtcell}, h} \frac{\dist{\hatTok_1 = \faktok \vl \Mem_{\gtcell}, \Mem_{-\gtcell} = \mem_{-\gtcell}, \Hidden_1 = \hidden}\odot  \dist{\Mem_{\gtcell}, \Mem_{-\gtcell} = \mem_{-\gtcell}, \Hidden_1 = \hidden \vl \hatSeq_{-(1, i)} = \hatseq_{-(1, i)}}}{\pr{\hatSeq_1 = \faktok \vl \hatSeq_{-(1, i)} = \hatseq_{-(1, i)}}  } \nonumber
	\end{align}
}
In this equation we used $_{-(1, i)}$ to index all but the first and $i$-th element of the sequence.
We note that $\supp(\dist{\hatTok_1 = \faktok \vl \Mem_{\gtcell}, \Mem_{-\gtcell} = \mem_{-\gtcell}, \Hidden_1 = \hidden}) = \supp(\gtlin)$ for all $\mem_{-\gtcell}, \hidden$, so the desired statement follows. 
\end{proof}
Now we complete the proof of Claim~\ref{claim:mem_prompt_attn_params}.
\begin{proof}[Proof of Claim~\ref{claim:mem_prompt_attn_params}]
	The proof of this statement will be analogous to Claim~\ref{claim:mem_attn_params}. As before, we have
		\begin{align}
		\gtprobs_i(\hatembed(\seq)) 
		&= \sum_{\hidden=(\cell, \syn)} \left( \sum_{\mem} \obs_{:, (\mem, \cell, \syn)} \pr{\Mem_\cell = \mem, \Hidden_i = \hidden \vl \hatTok_{-i}= \hatseq_{-i}}\right) \nonumber\\
		&= \sum_{\hidden = (\cell, \syn)} \nu^{(\hidden)} \nonumber
	\end{align}
In the last equality, we defined $\nu^{(\hidden)}$ to be the expression in the parentheses. We consider several cases. First, when $\hidden = (\gtcell, \syn)$ for $\syn \in \synset$, we must have that when $i > 1$, $\dist{\Mem_{\gtcell} \vl \hatTok_{-i} = \hatseq_{-i}}$ is supported on $\memset^\star$ by Proposition~\ref{prop:mem_prompt_supp}. Thus, $\nu^{(\hidden)} \in \gV^{(h)} \triangleq \spanvec(\{\obs_{:, (\mem, \hidden)}\}_{\mem \in \memset^\star})$. As a result, for $\hidden \notin \hiddenset^\star$, $\nu^{(\hidden)} \in \widebar{\gV}$, which is the span of vectors defined in Assumption~\ref{ass:mem_prompt_non_degen}.  As the spans $(\gV^{(h)})_{\hidden \in \hiddenset^\star}$ and $\widebar{\gV}$ are all pairwise disjoint, by Assumption~\ref{ass:nondegen_mem}, for each $\hidden \in \hiddenset^\star$, we can recover  
\begin{align}
	\nu^{(\hidden)} = B^{(\hidden)} \dist{\Tok_i \vl \Tok_{-i} = \tok_{-i}} \nonumber
\end{align}
Likewise, we can obtain
\begin{align}
	\sum_{\hidden \notin \hiddenset^\star} \nu^{(\hidden)} = \widebar{B} \dist{\Tok_i \vl \Tok_{-i} = \tok_{-i}} \nonumber
\end{align}
The remainder of this proof for the construction of $\Theta^{(1)}$ follows the same steps as Claim~\ref{claim:mem_attn_params}.
	
	For the second part about constructing $\Theta^{(2, \syn)}$, we modify Claim~\ref{claim:mem_attn_params} in a few ways. First, each $\nu^{(\gtcell, \syn)}$is recoverable as a linear function of $\bG_i(\hatembed(\seq))$ when $\syn \in \synset^\star$. Now using $\gM^\star \subseteq \memset$ as shorthand for $\supp(\gtlin)$, we define the matrix $\obs_{:, (\gM^\star, \gtcell, \syn)}^\dagger \in \R^{|\gM^\star|\times|\tokset|}$ to be the left inverse of $\obs_{:, (\gM^\star, \gtcell, \syn)}$, the matrix with columns $\{\obs_{:, (\mem, \gtcell, \syn)}\}_{\mem \in \gM^\star}$. This left inverse exists by the non-degeneracy assumptions. Now we construct the matrix $\widehat{\obs_{:, (\gM^\star, \gtcell, \syn)}^\dagger} \in \R^{|\memset| \times |\tokset|}$, where the $\mem$-th row of $\widehat{\obs_{:, (\gM^\star, \gtcell, \syn)}^\dagger}$ matches the corresponding row of $\obs_{:, (\gM^\star, \gtcell, \syn)}^\dagger$ if $\mem \in \gM^\star$ and is $\zeros$ otherwise. 
	
	We observe that because $\supp(\dist{\Mem_{\gtcell}, \Hidden_i = (\gtcell, \syn)\vl \hatSeq_{-i} = \hatseq_{-i}}) \subseteq \gM^\star$ by Proposition~\ref{prop:mem_prompt_supp}, we can finish the proof by repeating the argument of Claim~\ref{claim:mem_attn_params}. 
\end{proof}

The following claim relating the support of $\Hidden_i$ conditioned on $\hatTok$ to the support of $\Hidden_i$ conditioned on $\Tok$ will also be useful. 
\begin{claim}\label{claim:mem_prompt_hidden_supp}
	In the setting of Theorem~\ref{thm:mem_prompt_tune} and Lemma~\ref{lem:mem_prompt_fake_token}, suppose that $\prompt$ is defined as in~\eqref{eq:mem_prompt_def}. For $i > 1$ with $\pr{\hatSeq_{-i}= \hatseq_{-i}} > 0$, we have 
	\begin{align}
		\supp(\dist{\Hidden_i \vl \hatSeq_{-i}= \hatseq_{-i}}) \subseteq \supp(\dist{\Hidden_{i - 1} \vl \Seq_{-(i - 1)}=\seq_{-(i - 1)}}) \nonumber
	\end{align}
\end{claim}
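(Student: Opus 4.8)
The plan is to analyze both posteriors through the forward--backward (message passing) factorization of the \emph{augmented} chain $\augHidden_i = (\Mem, \Hidden_i)$, since it is $\augHidden$, not $\Hidden$ alone, that makes past and future observations conditionally independent (the memory $\Mem$ couples all positions, so one cannot decompose around $\Hidden_i$). Concretely, for $i > 1$ I would write, using the Markov structure of $\augHidden$,
\begin{align}
\pr{\augHidden_i = (\mem, h),\, \hatSeq_{-i} = \hatseq_{-i}} = \alpha_i(\mem, h)\cdot \beta_i(\mem, h), \nonumber
\end{align}
where $\alpha_i(\mem,h) \triangleq \pr{\augHidden_i = (\mem,h),\, \hatSeq_{1:i-1} = \hatseq_{1:i-1}}$ is the forward message (carrying the fake token at position $1$ and $\seq_{1:i-2}$ at positions $2,\dots,i-1$) and $\beta_i(\mem,h) \triangleq \pr{\hatSeq_{i+1:\steps+1} = \hatseq_{i+1:\steps+1} \vl \augHidden_i = (\mem,h)}$ is the backward message. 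An identical factorization holds for the original posterior of $\augHidden_{i-1}$ given $\Seq_{-(i-1)} = \seq_{-(i-1)}$, with forward message $\alpha'_{i-1}$ and backward message $\beta'_{i-1}$. Since $\supp(\dist{\Hidden_i \vl \cdot})$ is the projection onto the hidden coordinate of $\supp(\dist{\augHidden_i \vl \cdot}) = \{(\mem,h) : \alpha_i(\mem,h) > 0 \text{ and } \beta_i(\mem,h) > 0\}$, it suffices to show that whenever both modified messages are positive at $(\mem,h)$, both original messages are positive at the same $(\mem,h)$.

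Matching the backward messages is the easy half: $\beta_i(\mem,h)$ and $\beta'_{i-1}(\mem,h)$ are both the probability, starting from augmented state $(\mem,h)$, of emitting $\seq_{i:\steps}$ over the next $\steps - i + 1$ time-invariant transitions and emissions (the memory stays fixed at $\mem$, and the emission law at positions $>1$ is unchanged in the modified chain). By time-invariance of the transitions and emissions these are the same function of $(\mem,h)$, so $\beta_i = \beta'_{i-1}$; this can also be read off from Proposition~\ref{prop:stationary} together with the equality of priors $\dist{\augHidden_i} = \dist{\augHidden_{i-1}}$ under Assumption~\ref{ass:stationarity}. Hence $\beta_i(\mem,h) > 0$ immediately gives $\beta'_{i-1}(\mem,h) > 0$.

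The forward containment $\alpha_i(\mem,h) > 0 \Rightarrow \alpha'_{i-1}(\mem,h) > 0$ is the crux, and I would prove it by lifting a witnessing sample path and shifting it back by one step. If $\alpha_i(\mem,h) > 0$, there is a path $\augHidden_0 = (\mem,g_0), \dots, \augHidden_{i-1} = (\mem, g_{i-1}), \augHidden_i = (\mem,h)$ of positive probability that emits the fake token at position $1$ and $\seq_m$ at position $m+1$ for $m \le i-2$. Setting $\augHidden'_m \triangleq \augHidden_{m+1}$ for $m = 0, \dots, i-1$ produces a path witnessing $\alpha'_{i-1}(\mem,h) > 0$ in the original chain: the transitions and the emissions of $\seq_{1:i-2}$ are inherited verbatim, the fake-token constraint is simply dropped (it only restricted $\mem_{\gtcell} \in \supp(\gtlin)$ and plays no further role), and the new initial weight $\pr{\augHidden'_0 = (\mem, g_1)} = \pr{\Mem = \mem}\,\pr{\Hidden_0 = g_1}$ is positive because $\pr{\Mem = \mem} > 0$ (it occurred in the original path) and $\dist{\Hidden_0}$ has full support by Assumption~\ref{ass:stationarity}. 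Combining the two halves, every $(\mem,h)$ in the support of the modified posterior of $\augHidden_i$ lies in the support of the original posterior of $\augHidden_{i-1}$; projecting onto the hidden coordinate yields the claim (and incidentally shows $\pr{\Seq_{-(i-1)} = \seq_{-(i-1)}} > 0$, so the right-hand posterior is well-defined).

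I expect the main obstacle to be bookkeeping the memory coupling and the one-step index shift simultaneously: one must resist decomposing around $\Hidden_i$ directly (which fails, since $\Mem$ makes past and future dependent) and work with $\augHidden_i$, while the shift-by-one relabeling together with the reliance on full support of $\dist{\Hidden_0}$ to seat the shifted path's new initial state is precisely where the hypotheses of Assumption~\ref{ass:stationarity} enter.
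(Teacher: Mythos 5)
Your proof is correct, but it organizes the argument around a different decomposition than the paper does. The paper's proof expands $\dist{\Hidden_i \vl \hatSeq_{-i}=\hatseq_{-i}}$ as a sum over the values of $(\Mem,\Hidden_1)$, applies Bayes' rule to factor out the single modified emission $\pr{\hatTok_1=\faktok\vl\Mem=\mem,\Hidden_1=\hidden}$, and then invokes stationarity (Proposition~\ref{prop:stationary}) to identify the remaining factor with $\dist{\Mem=\mem,\Hidden_0=\hidden,\Hidden_{i-1}\vl\Seq_{-(i-1)}=\seq_{-(i-1)}}$; the modified posterior is thus a nonnegatively weighted version of the very terms whose unweighted sum is the original posterior, and support containment is immediate. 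You instead factor around position $i$ into forward and backward messages of the augmented chain $\augHidden$, prove the backward messages coincide after the index shift, and establish the forward containment by exhibiting a shifted sample path, re-seating its initial state using the full support of $\dist{\Hidden_0}$. Both arguments rest on the same two pillars — the prompt only reweights position $1$ by factors in $[0,1]$, and stationarity permits the shift by one — and you are right that one must work with $\augHidden$ rather than $\Hidden$ alone since $\Mem$ couples past and future. The paper's term-by-term comparison is shorter and avoids the path bookkeeping; your version is more explicit about exactly where Assumption~\ref{ass:stationarity} enters (equality of backward kernels versus positivity of the re-seated initial weight) and, as a bonus, makes it evident that $\pr{\Seq_{-(i-1)}=\seq_{-(i-1)}}>0$ so the right-hand posterior is well defined, a point the paper leaves implicit.
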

\begin{proof}
	We have
	{\small
		\begin{align}
			\dist{\Hidden_i \vl \hatTok_{-i} = \hatseq_{-i}} = \sum_{\mem, \hidden} \dist{\Mem=\mem, \Hidden_1 = \hidden, \Hidden_i \vl \hatTok_{-i} = \hatseq_{-i}}
			=\nonumber \\ \frac{\sum_{\mem, \hidden} \pr{\hatTok_1 = \faktok \vl \Mem=\mem, \Hidden_1 = \hidden} \dist{\Mem=\mem, \Hidden_1 = \hidden, \Hidden_i \vl \hatSeq_{2:i -1} = \hatseq_{2:i - 1}, \hatSeq_{i + 1:\Steps + 1} = \hatseq_{i + 1:\steps + 1}}}{\pr{\Seq_1 = \faktok \vl \hatSeq_{2:i -1} = \hatseq_{2:i - 1}, \hatSeq_{i + 1:\Steps + 1} = \hatseq_{i + 1:\steps + 1}}} \nonumber\\
			= \frac{\sum_{\mem, \hidden} \pr{\hatTok_1 = \faktok \vl \Mem=\mem, \Hidden_1 = \hidden} \dist{\Mem=\mem, \Hidden_0 = \hidden, \Hidden_{i - 1} \vl \Seq_{-(i -1)} = \seq_{-(i - 1)}}}{\pr{\Seq_1 = \faktok \vl \hatSeq_{2:i -1} = \hatseq_{2:i - 1}, \hatSeq_{i + 1:\Steps + 1} = \hatseq_{i + 1:\steps + 1}}}\label{eq:mem_prompt_hidden_supp:1}
	\end{align}}
	The last line used the time-invariance property of the HMM (Proposition~\ref{prop:stationary}), the definition of $\hatseq$, and the fact that $\dist{\hatTok_i \vl \Hidden_i, \Mem}$ is distributed the same as $\dist{\Tok_i \vl \Hidden_i, \Mem}$ for $i > 1$. On the other hand, note that $\dist{\Hidden_{i - 1} \vl \Seq_{-(i - 1)}=\seq_{-(i - 1)}} = \sum_{\mem, \hidden} \dist{\Mem = \mem, \Hidden_0 = \hidden, \Hidden_{i - 1} \vl \Seq_{-(i - 1)}=\seq_{-(i - 1)}}$. This involves a sum over the same terms in the numerator in~\eqref{eq:mem_prompt_hidden_supp:1}. Thus, as all the terms in the sum of~\eqref{eq:mem_prompt_hidden_supp:1} are nonnegative, the desired statement follows.
\end{proof}
This lets us complete the proof of Lemma~\ref{lem:mem_prompt_query_key}.
\begin{proof}[Proof of Lemma~\ref{lem:mem_prompt_query_key}]
	By setting $\keyparam{} = \begin{bmatrix}
		\Theta^{(1)} \\
		\zeros
	\end{bmatrix}$, where $\Theta^{(1)}$ is defined in Claim~\ref{claim:mem_prompt_attn_params}, we obtain $\key$ such that for all $i$ > 1, $(\key(\bG_i(\hatembed(\seq))))_{\hidden} = \pr{\Hidden_i = \hidden | \hatSeq_{-i}= \hatseq_{-i}}$ for $\hidden \in \hiddenset^\star$. Furthermore, $(\key(\bG_i(\hatembed(\seq))))_{|\hiddenset|+1} = 0$, and $\|\key(\bG_i(\hatembed(\seq)))\|_1=1$. We choose $\beta_1= \begin{bmatrix}
		\zeros_{|\hiddenset|}\\
		-2
	\end{bmatrix}$ and $\beta_i= \zeros_{|\hiddenset| + 1}$ for $i > 1$. 
	We also construct $\query$ so that the first $|\hiddenset|$ dimensions are the indicator on the set $\{\gtcell\} \times \synset^\star$. We set $\query_{|\hiddenset| + 1} = 1$. Note that this construction ensures that for $i > 1$, $1 = \|\key(\bG_i(\hatembed(\seq)))\|_1 \ge \query^\top (\key(\bG_i(\hatembed(\seq))) + \beta_i) \ge 0$. Note that for $i \in \widehat{\gI}$, by Claim~\ref{claim:mem_prompt_hidden_supp} we have $\supp(\dist{\Hidden_i \vl \hatSeq_{-i}= \hatseq_{-i}}) \subseteq \supp(\dist{\Hidden_{i - 1} \vl \Seq_{-(i - 1)}= \seq_{-(i - 1)}}) \subseteq \{\gtcell\} \times \synset^\star$. Thus, for such $i \in \widehat{\gI}$, we have $\query^\top (\key(\bG_i(\hatembed(\seq))) + \beta_i) =1$, achieving the maximum over all positions. Finally, we note that $1 \notin \gI$ because the position embedding $\beta_1$ ensures that $\query^\top (\key(\bG_1(\hatembed(\seq))) + \beta_1) \le -1$. Thus, $\gI= \widehat{\gI}$, as desired.
\end{proof}

Next, the following lemma constructs the value function, analogously to Lemma~\ref{lem:hmm_mem_val}.
\begin{lemma}[Analogue to Lemma~\ref{lem:hmm_mem_val}]\label{lem:mem_prompt_val}	
	In the setting of Theorem~\ref{thm:mem_prompt_tune} and Lemma~\ref{lem:mem_prompt_fake_token}, define $\prompt$ as in~\eqref{eq:mem_prompt_def}, and $\widehat{\gI}$ as in~\eqref{eq:mem_prompt_I}. We can choose the parameters of the value function $\val$, $\valparam{} \in \R^{|\memset||\hiddenset| \times |\tokset|}$, $\lin \in \R^{ |\memset||\hiddenset|}$, such that for $x \in \supp(\dist{\Seq})$ where $\widehat{\gI}$ is nonempty, for all $i \in \widehat{\gI}$ with $\pr{\hatTok_{-i}= \hatseq_{-i}} > 0$, 
	\begin{align}
		\val(\bG_i(\hatembed(\seq)), \hatembed_i(\seq)) = \gtlin^\top \dist{\hatTok_i = \hatseq_i, \Mem_{\gtcell} \vl \hatTok_{-i} = \hatseq_{-i}} \nonumber
	\end{align}
As a consequence, for all $i \in \widehat{\gI}$,
\begin{align}
	\val(\bG_i(\hatembed(\seq)), \hatembed_i(\seq)) = r_{x, i}\gtlin^\top \dist{\Mem_{\gtcell} \vl \Tok = \seq} \nonumber
\end{align}
	where $r_{x, i} > 0$ is a positive scalar. In particular, this holds regardless of whether $\pr{\hatTok_{-i} = \hatseq_{-i}} > 0$. Furthermore, when $\hatseq \notin \supp(\dist{\hatSeq})$, for all $i > 1$, we must have 
	\begin{align}
		\val(\bG_i(\hatembed(\seq)), \hatembed_i(\seq)) = 0 \nonumber
	\end{align}
\end{lemma}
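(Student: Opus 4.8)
The plan is to transplant the proof of Lemma~\ref{lem:hmm_mem_val} into the prompt-tuning setting, substituting $\bG_i(\hatembed(\seq))$ for $\gtprobs_i(\seq)$, the fake-token process $\hatTok$ of Lemma~\ref{lem:mem_prompt_fake_token} for $\Tok$, and using the analogue attention ingredients from Claim~\ref{claim:mem_prompt_attn_params} and Claim~\ref{claim:mem_prompt_hidden_supp}. The attention/value construction itself ports almost mechanically; the one genuinely new ingredient is the final step, where I must convert the prompt-conditioned posterior $\dist{\Mem_{\gtcell}\vl\hatTok=\hatseq}$ back into the original posterior $\dist{\Mem_{\gtcell}\vl\Tok=\seq}$ that appears in the downstream label.

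First I would fix the value-function parameters exactly as in Lemma~\ref{lem:hmm_mem_val}: set $\valparam{}_{(\mem,\gtcell,\syn),:}=\Theta^{(2,\syn)}_{\mem,:}$ for $\syn\in\synset^\star$, with $\Theta^{(2,\syn)}$ now the matrices produced by Claim~\ref{claim:mem_prompt_attn_params}, and $\zeros$ on all remaining rows. For $i\in\widehat{\gI}$ with $\pr{\hatTok_{-i}=\hatseq_{-i}}>0$, Claim~\ref{claim:mem_prompt_attn_params} gives $\Theta^{(2,\syn)}\bG_i(\hatembed(\seq))=\dist{\Mem_{\gtcell},\Hidden_i=(\gtcell,\syn)\vl\hatTok_{-i}=\hatseq_{-i}}$, while Claim~\ref{claim:mem_prompt_hidden_supp} forces $\supp(\dist{\Hidden_i\vl\hatTok_{-i}=\hatseq_{-i}})\subseteq\{\gtcell\}\times\synset^\star$, so all other coordinates of the true vector $\dist{\Mem_{\Cell_i},\Cell_i,\Syn_i\vl\hatTok_{-i}=\hatseq_{-i}}$ vanish; hence $\valparam{}\bG_i(\hatembed(\seq))=\dist{\Mem_{\Cell_i},\Cell_i,\Syn_i\vl\hatTok_{-i}=\hatseq_{-i}}$. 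Using $\phi\circ\eta=\mathrm{id}$ (immediate from the definitions of $\eta$ and $\phi$ in~\eqref{eq:phi_def}), we get $\phi(\hatembed_i(\seq))=\embed(\hatseq_i)=\dist{\hatTok_i=\hatseq_i\vl\Mem_{\Cell_i},\Cell_i,\Syn_i}$, so taking the elementwise product and invoking the conditional-independence identity of Proposition~\ref{prop:multi_mem_cond_prob} yields $(\valparam{}\bG_i(\hatembed(\seq)))\odot\phi(\hatembed_i(\seq))=\dist{\hatTok_i=\hatseq_i,\Mem_{\Cell_i},\Cell_i,\Syn_i\vl\hatTok_{-i}=\hatseq_{-i}}$. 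Defining $B$ as the syntax-summation matrix of Lemma~\ref{lem:hmm_mem_val} and setting $\lin=B^\top\gtlin$ collapses this to the claimed $\gtlin^\top\dist{\hatTok_i=\hatseq_i,\Mem_{\gtcell}\vl\hatTok_{-i}=\hatseq_{-i}}$, using that $\Cell_i$ concentrates on $\gtcell$.

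Next I would deduce the consequence. Factoring, $\gtlin^\top\dist{\hatTok_i=\hatseq_i,\Mem_{\gtcell}\vl\hatTok_{-i}=\hatseq_{-i}}=\pr{\hatTok_i=\hatseq_i\vl\hatTok_{-i}=\hatseq_{-i}}\,\gtlin^\top\dist{\Mem_{\gtcell}\vl\hatTok=\hatseq}$. The crux is that $\gtlin^\top\dist{\Mem_{\gtcell}\vl\hatTok=\hatseq}$ is a positive multiple of $\gtlin^\top\dist{\Mem_{\gtcell}\vl\Tok=\seq}$. Because the fake-token emission $\dist{\hatTok_1=\faktok\vl\augHidden_1}=\prompt$ is, by~\eqref{eq:mem_prompt_def}, exactly the indicator $\1(\mem_{\gtcell}\in\supp(\gtlin))$, conditioning on $\hatTok_1=\faktok$ multiplies the joint law of $(\Mem,\Hidden_0,\seq)$ by this indicator and renormalizes; combined with the time-invariance identity of Proposition~\ref{prop:stationary} (matching the shifted emissions $\hatTok_{2:\steps+1}$ to $\Tok_{1:\steps}$), this gives $\pr{\Mem_{\gtcell}=\mem\vl\hatTok=\hatseq}\propto\1(\mem\in\supp(\gtlin))\,\pr{\Mem_{\gtcell}=\mem\vl\Tok=\seq}$ with a single common constant (this support restriction is precisely Proposition~\ref{prop:mem_prompt_supp}). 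Since $\gtlin$ is supported on $\supp(\gtlin)$, the indicator is harmless and $\gtlin^\top\dist{\Mem_{\gtcell}\vl\hatTok=\hatseq}=c\,\gtlin^\top\dist{\Mem_{\gtcell}\vl\Tok=\seq}$ for some $c>0$, giving the result with $r_{x,i}=c\,\pr{\hatTok_i=\hatseq_i\vl\hatTok_{-i}=\hatseq_{-i}}>0$.

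Finally I would dispatch the edge cases, which is where I expect most of the bookkeeping. If $\pr{\hatTok_{-i}=\hatseq_{-i}}=0$ then $\bG_i(\hatembed(\seq))=\zeros$ by Lemma~\ref{lem:mem_prompt_fake_token}, so $\val=0$; to reconcile this with the stated form I note $\pr{\hatTok_{-i}=\hatseq_{-i}}\ge\pr{\hatSeq=\hatseq}$, and the expansion above shows $\pr{\hatSeq=\hatseq}>0$ iff $\pr{\Mem_{\gtcell}\in\supp(\gtlin)\vl\Tok=\seq}>0$, so $\pr{\hatTok_{-i}}=0$ forces $\gtlin^\top\dist{\Mem_{\gtcell}\vl\Tok=\seq}=0$ and both sides vanish for any positive $r_{x,i}$. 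The same identity handles $\hatseq\notin\supp(\dist{\hatSeq})$: then $\pr{\hatSeq=\hatseq}=0$, so for $i>1$ with $\pr{\hatTok_{-i}}>0$ the factor $\pr{\hatTok_i=\hatseq_i\vl\hatTok_{-i}}=0$ makes the Part-1 expression zero, while if $\pr{\hatTok_{-i}}=0$ then $\val=0$ directly. The main obstacle is thus not the value construction but the careful passage between prompt-conditioned and original posteriors: pinning down that the fake-token indicator factors out cleanly and that the normalization constant is shared across all $\mem\in\supp(\gtlin)$, so that projecting onto $\gtlin$ preserves the posterior up to an exact positive rescaling.
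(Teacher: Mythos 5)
Your proposal is correct and follows essentially the same route as the paper: the same choice of $\valparam{}$ and $\lin=B^\top\gtlin$ built from Claim~\ref{claim:mem_prompt_attn_params}, the same factorization of the joint posterior, and your inline conversion between the prompt-conditioned and original posteriors is exactly the content of the paper's Claim~\ref{claim:mem_prompt_hat_x}, with the degenerate cases dispatched via what the paper isolates as Claim~\ref{claim:mem_prompt_zero}. One cosmetic caveat: in the final case ($\hatseq\notin\supp(\dist{\hatSeq})$ with $i>1$ arbitrary) the ``Part-1 expression'' is only established for $i\in\widehat{\gI}$, so you should instead argue coordinate-wise that every nonzero entry of $(\valparam{}\bG_i(\hatembed(\seq)))\odot\phi(\hatembed_i(\seq))$ is a joint probability carrying the vanishing factor $\pr{\hatTok_i=\hatseq_i\vl \hatTok_{-i}=\hatseq_{-i}}=0$ --- which is precisely how the paper closes this case.
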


We rely on the following claim.

\begin{claim}\label{claim:mem_prompt_hat_x}
	In the setting of Theorem~\ref{thm:mem_prompt_tune} and Lemma~\ref{lem:hmm_prompt_fake_token} where $\prompt$ takes the value in in~\eqref{eq:mem_prompt_def}, for all $\seq$ where $\hatseq \triangleq (\faktok, \seq) \in \supp(\dist{\hatSeq})$, we have 
	\begin{align}
		\gtlin^\top \dist{\Mem_{\gtcell} \vl \hatSeq = \hatseq} = \frac{\gtlin^\top \dist{\Mem_{\gtcell} \vl \Seq_{1:\Steps} = \seq}}{\pr{\hatTok_1 = \faktok | \hatSeq_{2:\Steps + 1} = \hatseq_{2:\steps + 1}}} \nonumber
	\end{align}
\end{claim}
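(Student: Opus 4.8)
The plan is to compute the posterior $\dist{\Mem_{\gtcell} \vl \hatSeq = \hatseq}$ coordinate by coordinate via Bayes' rule, splitting the conditioning event $\hatSeq = \hatseq$ into the fake token $\hatTok_1 = \faktok$ and the remaining observations $\hatSeq_{2:\Steps+1} = \hatseq_{2:\Steps+1}$. Writing $\memset^\star \triangleq \supp(\gtlin)$, for each $\mem \in \memset$ I would expand
\begin{align}
	\pr{\Mem_{\gtcell} = \mem \vl \hatSeq = \hatseq} = \frac{\pr{\hatTok_1 = \faktok \vl \Mem_{\gtcell} = \mem, \hatSeq_{2:\Steps+1} = \hatseq_{2:\Steps+1}} \, \pr{\Mem_{\gtcell} = \mem \vl \hatSeq_{2:\Steps+1} = \hatseq_{2:\Steps+1}}}{\pr{\hatTok_1 = \faktok \vl \hatSeq_{2:\Steps+1} = \hatseq_{2:\Steps+1}}}. \nonumber
\end{align}
The denominator is precisely the factor appearing in the claim, and it is positive since $\hatseq \in \supp(\dist{\hatSeq})$, so the expression is well defined.

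The key step is to evaluate the first factor in the numerator. Because $\hatTok_1$ is emitted from $\augHidden_1 = (\Mem, \Hidden_1)$ and is conditionally independent of $\hatSeq_{2:\Steps+1}$ given $\augHidden_1$, and because the fake-token construction (Lemma~\ref{lem:mem_prompt_fake_token}) together with~\eqref{eq:mem_prompt_def} gives $\pr{\hatTok_1 = \faktok \vl \augHidden_1 = (\mem_{1:\numcell}, \hidden)} = \prompt_{\mem_{1:\numcell}, \hidden} = \1(\mem_{\gtcell} \in \memset^\star)$, this likelihood depends on $\augHidden_1$ only through the indicator $\1(\mem_{\gtcell} \in \memset^\star)$. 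Conditioning further on the full $\augHidden_1$ and marginalizing over $\Mem_{-\gtcell}$ and $\Hidden_1$, the remaining sum of conditional probabilities collapses to $1$, yielding $\pr{\hatTok_1 = \faktok \vl \Mem_{\gtcell} = \mem, \hatSeq_{2:\Steps+1} = \hatseq_{2:\Steps+1}} = \1(\mem \in \memset^\star)$. This is the conceptual heart of the argument: because $\prompt$ is the flat indicator of $\memset^\star$, conditioning on the fake token acts as a \emph{hard filter} onto $\memset^\star$ and does not reweight the mass within $\memset^\star$.

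Next I would rewrite the surviving prior factor $\pr{\Mem_{\gtcell} = \mem \vl \hatSeq_{2:\Steps+1} = \hatseq_{2:\Steps+1}}$ in terms of the original sequence. Since $\hatseq_{2:\Steps+1} = \seq$ and the emissions of $\hatTok_i$ agree with those of $\Tok_i$ for all $i > 1$, the joint law of $(\Mem, \hatTok_{2:\Steps+1})$ equals that of $(\Mem, \Tok_{2:\Steps+1})$. Invoking stationarity through Proposition~\ref{prop:stationary} (valid under Assumption~\ref{ass:stationarity}) with a one-step shift, the marginal $\dist{\Mem, \Seq_{2:\Steps+1}}$ coincides with $\dist{\Mem, \Seq_{1:\Steps}}$, so that $\pr{\Mem_{\gtcell} = \mem \vl \hatSeq_{2:\Steps+1} = \seq} = \pr{\Mem_{\gtcell} = \mem \vl \Seq_{1:\Steps} = \seq}$. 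Combining the three pieces gives
\begin{align}
	\pr{\Mem_{\gtcell} = \mem \vl \hatSeq = \hatseq} = \frac{\1(\mem \in \memset^\star) \, \pr{\Mem_{\gtcell} = \mem \vl \Seq_{1:\Steps} = \seq}}{\pr{\hatTok_1 = \faktok \vl \hatSeq_{2:\Steps+1} = \hatseq_{2:\Steps+1}}}. \nonumber
\end{align}

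Finally I would take the inner product with $\gtlin$ and sum over $\mem$. Because $\supp(\gtlin) = \memset^\star$, we have $\gtlin_\mem \1(\mem \in \memset^\star) = \gtlin_\mem$ for every $\mem$, so the indicator vanishes and the sum reassembles into $\gtlin^\top \dist{\Mem_{\gtcell} \vl \Seq_{1:\Steps} = \seq}$ divided by the same denominator, which is exactly the claimed identity. I expect the main obstacle to lie not in any single computation but in the bookkeeping of the second step: carefully justifying the conditional independence $\hatTok_1 \perp \hatSeq_{2:\Steps+1} \vl \augHidden_1$ in the augmented chain and verifying that the one-step time shift relating $\hatSeq_{2:\Steps+1}$ to $\Seq_{1:\Steps}$ is correctly absorbed by stationarity.
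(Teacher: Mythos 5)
Your proposal is correct and follows essentially the same route as the paper's proof: Bayes' rule splitting off the fake token $\faktok$, the observation that its emission probability is the flat indicator of $\supp(\gtlin)$ (so conditioning on it filters but does not reweight mass within $\supp(\gtlin)$, and is transparent to $\gtlin^\top$), and Proposition~\ref{prop:stationary} to absorb the one-step time shift. The only difference is bookkeeping — you work coordinate-wise in $\mem$ while the paper keeps the full latent configuration $(\Mem_{-\gtcell}, \Hidden_1)$ in vectorized form — which does not change the argument.
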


\begin{proof}
	We observe that 
	{\scriptsize
	\begin{align}
		\gtlin^\top \dist{\Mem \vl \hatSeq = \hatseq}\\ = \gtlin^\top \sum_{h} \sum_{\mem_{-\gtcell}}\dist{\Mem_{\gtcell}, \Mem_{-\gtcell} = \mem_{-\gtcell}, \Hidden_1 = \hidden \vl \hatSeq = \hatseq} \nonumber\\
		= \gtlin^\top \frac{\sum_{h} \sum_{\mem_{-\gtcell}} \dist{\hatSeq_1 = \faktok \vl \Mem_{\gtcell}, \Mem_{-\gtcell} = \mem_{-\gtcell}, \Hidden_1 = \hidden} \odot \dist{\Mem_{\gtcell}, \Mem_{-\gtcell} = \mem_{-\gtcell}, \Hidden_1 = \hidden \vl \hatSeq_{2:\Steps + 1} = \hatseq_{2:\steps + 1}}}{\pr{\hatTok_1 = \faktok \vl \hatSeq_{2:\Steps + 1} = \hatseq_{2:\steps + 1}}} \nonumber\\
		= \gtlin^\top \frac{\sum_{h} \sum_{\mem_{-\gtcell}} \dist{\hatSeq_1 = \faktok \vl \Mem_{\gtcell}, \Mem_{-\gtcell} = \mem_{-\gtcell} \Hidden_1 = \hidden} \odot \dist{\Mem_{\gtcell}, \Mem_{-\gtcell} = \mem_{-\gtcell}, \Hidden_0 = \hidden \vl \Seq_{1:\Steps} = \seq}}{\pr{\hatTok_1 = \faktok \vl \hatSeq_{2:\Steps + 1} = \hatseq_{2:\steps + 1}}} \tag{by Proposition~\ref{prop:stationary} and the definition of $\hatSeq$}
	\end{align}}
	Now we have $\gtlin^\top \textup{diag}(\dist{\hatSeq_1 = \faktok \vl \Mem_{\gtcell}, \Mem_{-\gtcell} = \mem_{-\gtcell}, \Hidden_1 = \hidden}) = \gtlin^\top$ because by construction, $\dist{\hatSeq_1 = \faktok \vl \Mem_{\gtcell}, \Mem_{-\gtcell} = \mem_{-\gtcell}, \Hidden_1 = \hidden}$ is only supported on $\supp(\gtlin)$ and equals 1 on the support. Thus, we obtain
	\begin{align}
		\gtlin^\top \dist{\Mem_{\gtcell} \vl \hatSeq = \hatseq} &=  \frac{\sum_{h}\gtlin^\top \dist{\Mem_{\gtcell}, \Hidden_0 = \hidden \vl \Seq_{1:\Steps} = \seq}}{\pr{\hatTok_1 = \faktok \vl \hatSeq_{2:\Steps + 1} = \hatseq_{2:\steps + 1}}} \nonumber\\
		&= \frac{\gtlin^\top \dist{\Mem_{\gtcell}| \Seq_{1:\Steps} = \seq}}{\pr{\hatTok_1 = \faktok \vl \hatSeq_{2:\Steps + 1} = \hatseq_{2:\steps + 1}}}\nonumber
	\end{align}
\end{proof}

We also require the following result to handle edge cases where probability values are 0.
\begin{claim}\label{claim:mem_prompt_zero}
	In the setting of Theorem~\ref{thm:mem_prompt_tune} and Lemma~\ref{lem:mem_prompt_fake_token}, define $\prompt$ as in~\eqref{eq:mem_prompt_def}. Consider an input $\seq \in \supp(\dist{\Seq})$ such that $\hatseq \triangleq (\faktok, \tok_1, \ldots, \tok_{\steps})$ satisfies $\pr{\hatSeq = \hatseq }= 0$. Then $\gtlin^\top \dist{\Mem_{\gtcell} | \Tok_{1:\Steps} = \seq} = 0$. Furthermore, for any $x$ where  $\pr{\hatSeq_{-i} = \hatseq_{-i} }= 0$ for some $i$, we must have $\bG_i(\hatembed(\seq)) = \zeros_{|\tokset|}$. 
\end{claim}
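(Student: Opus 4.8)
The plan is to dispatch the two assertions separately. The second assertion---that $\bG_i(\hatembed(\seq)) = \zeros_{|\tokset|}$ whenever $\pr{\hatSeq_{-i} = \hatseq_{-i}} = 0$---is immediate: since $\hatseq_{-i} = (\faktok, \seq)_{-i}$, the hypothesis says exactly that $(\faktok, \seq)_{-i} \notin \supp(\dist{\hatSeq_{-i}})$, and Lemma~\ref{lem:mem_prompt_fake_token} already records that $\bG_i(\hatembed(\seq)) = \zeros$ in precisely this case. So the real content lies in the first assertion, which I would prove by rewriting $\pr{\hatSeq = \hatseq}$ as a statement about the posterior of the \emph{original} sequence.

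First I would condition on the augmented hidden state $\augHidden_1$ and use conditional independence of the emissions in the HMM, together with the fake-token emission rule of Lemma~\ref{lem:mem_prompt_fake_token} and the definition of $\prompt$ in~\eqref{eq:mem_prompt_def}, namely $\pr{\hatTok_1 = \faktok \vl \augHidden_1 = (\mem_{1:\numcell}, \cell, \syn)} = \1(\mem_{\gtcell} \in \supp(\gtlin))$. Since $\hatTok_1$ depends on the chain only through $\augHidden_1$ and is conditionally independent of $\hatTok_{2:\Steps+1}$ given $\augHidden_1$, the sum over $\augHidden_1$ collapses to an indicator on the $\gtcell$-th memory coordinate, giving $\pr{\hatSeq = \hatseq} = \pr{\Mem_{\gtcell} \in \supp(\gtlin),\ \hatTok_{2:\Steps+1} = \seq}$.

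Next I would transfer this event to the original chain. Because $\hatTok_i$ has the same emission law as $\Tok_i$ for every $i > 1$ and shares the augmented chain $\augHidden$, the joint law $\dist{\Mem, \hatTok_{2:\Steps+1}}$ coincides with $\dist{\Mem, \Tok_{2:\Steps+1}}$; then Proposition~\ref{prop:stationary} with $i = 1$ identifies $\dist{\Mem, \Tok_{2:\Steps+1}}$ with $\dist{\Mem, \Tok_{1:\Steps}}$. Marginalizing over $\Mem_{-\gtcell}$ yields $\pr{\hatSeq = \hatseq} = \pr{\Mem_{\gtcell} \in \supp(\gtlin),\ \Tok_{1:\Steps} = \seq}$, and the same identity gives $\pr{\Tok_{1:\Steps} = \seq} = \pr{\Seq = \seq} > 0$ because $\seq \in \supp(\dist{\Seq})$. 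Hence $\pr{\hatSeq = \hatseq} = 0$ forces $\pr{\Mem_{\gtcell} \in \supp(\gtlin) \vl \Tok_{1:\Steps} = \seq} = 0$, i.e. $\pr{\Mem_{\gtcell} = \mem \vl \Tok_{1:\Steps} = \seq} = 0$ for every $\mem \in \supp(\gtlin)$. Since $\gtlin$ vanishes off $\supp(\gtlin)$, the inner product $\gtlin^\top \dist{\Mem_{\gtcell} \vl \Tok_{1:\Steps} = \seq}$ sums only over these coordinates and is therefore $0$.

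The main obstacle is the bookkeeping in the shift step: one must ensure that the fake-token emission at position $1$ does not corrupt the time-invariance argument. This is why only the position-$i > 1$ emissions (which are shared with $\Tok$) enter the transfer via Proposition~\ref{prop:stationary}, while the position-$1$ emission is handled separately through the indicator collapse in the first step. Once the joint-law identity $\dist{\Mem, \hatTok_{2:\Steps+1}} = \dist{\Mem, \Tok_{1:\Steps}}$ is established, the remaining deductions are routine.
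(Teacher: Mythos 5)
Your proof is correct and takes essentially the same route as the paper's: the paper also dispatches the second assertion directly via Lemma~\ref{lem:mem_prompt_fake_token}, and proves the first by writing $\pr{\hatSeq=\hatseq}$ as $\prompt^\top \dist{\Mem, \Hidden_0, \Seq = \seq}$ (using stationarity and the shared emissions at positions $i>1$) and noting that $\prompt$ is the indicator of $\{\mem_{\gtcell} \in \supp(\gtlin)\}$, which is exactly your indicator-collapse identity $\pr{\hatSeq=\hatseq} = \pr{\Mem_{\gtcell} \in \supp(\gtlin),\, \Tok_{1:\Steps}=\seq}$ phrased as a support-disjointness statement. The only cosmetic difference is that you carry out the marginalization over $\augHidden_1$ explicitly rather than as an inner product.
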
 
\begin{proof}
	First, we observe that 
	\begin{align}
		0 &= \pr{\hatSeq=\hatseq}\nonumber\\
		&= \dist{\hatTok_1 = \faktok \vl \Mem , \Hidden_1} ^\top \dist{\Mem, \Hidden_1,  \hatSeq_{-1} =\hatseq_{-1}}\nonumber\\
		&=  \prompt^\top  \dist{\Mem, \Hidden_0, \Seq = \seq} \tag{by Proposition~\ref{prop:stationary} and  Lemma~\ref{lem:mem_prompt_fake_token}}
	\end{align}
	In particular, as $\supp(\prompt) \cap \supp(\dist{\Mem, \Hidden_0, \Tok_{1:\Steps } = \seq}) = \emptyset$, it follows that $\pr{\Mem_{\gtcell}=\mem, \Hidden_0 = \hidden, \Tok_{1:\Steps } = \seq} = 0$ for all $\mem \in \supp(\gtlin)$ and any $\hidden$, by the construction of $\prompt$. Since $x \in \supp(\dist{\Seq})$, it follows that $\pr{\Mem_{\gtcell} = \mem \vl \Tok_{1:\Steps} = \seq} = 0$ for all $\mem \in \supp(\gtlin)$, so $\gtlin^\top \dist{\Mem_{\gtcell} | \Tok_{1:\Steps} = \seq} = 0$. 
	
	We note that the statement about  $\bG_i(\hatembed(\seq))$ follows because of Lemma~\ref{lem:mem_prompt_fake_token}.
\end{proof}

\begin{proof}[Proof of Lemma~\ref{lem:mem_prompt_val}]
	To construct the value function, we define $\valparam{}$ in the same manner as Lemma~\ref{lem:hmm_mem_val}, such that $\valparam{}$ contains $\Theta^{(2, \syn)}$ constructed in Claim~\ref{claim:mem_prompt_attn_params} as a submatrix: $\valparam{}_{(\mem, \gtcell, \syn), :} = \Theta^{(2, \syn)}_{\mem, :}$ for $\syn \in \synset^\star$. All other rows of $\valparam{}$ are $\zeros$. It now follows that for $i \in \widehat{\gI}$ and $\seq$ where $\pr{\hatTok_{-i} = \hatseq_{-i}} > 0$, by definition of $\widehat{\gI}$,
	\begin{align}
		(\valparam{} \bG_i(\hatembed(\seq))) \odot \phi(\embed(\tok_i)) = \dist{\hatTok_i = \hatseq_i, \Mem_{\Cell_i}, (\Cell_i, \Syn_i) | \hatTok_{-i}= \hatseq_{-i}} \nonumber
	\end{align}
	The proof that this claim is correct follows the same reasoning as Lemma~\ref{lem:hmm_mem_val}, where we argue that $\dist{\Hidden_i \vl \hatTok_{-i} = \hatseq_{-i}}$ must concentrate on $\{\gtcell\} \times \synset^\star$ for all $i \in \widehat{\gI}$. Thus, we can define $\lin = B^\top \gtlin$, where $B$ is defined in Lemma~\ref{lem:hmm_mem_val}. We observe that for $i \in \widehat{\gI}$, the same reasoning as before gives
	\begin{align}
		\val(\bG_i(\hatembed(\seq)), \hatembed_i(\seq))
		&= \gtlin^\top \dist{\hatTok_i = \hatseq_i, \Mem_{\gtcell} \vl \hatSeq_{-i}= \hatseq_{-i}} \nonumber
	\end{align}
	First, if $(\faktok, \seq) \notin \supp(\dist{\hatSeq})$, by Claim~\ref{claim:mem_prompt_zero}, we have $\gtlin^\top\dist{\Mem_{\gtcell} \vl \Seq_{1:\Steps} = \seq} = 0$. The expression above must also equal $0$, as $(\faktok, \seq) \notin \supp(\dist{\hatSeq})$. Otherwise, we have 
	\begin{align}
		\val(\bG_i(\hatembed(\seq)), \hatembed_i(\seq))
		&= \gtlin^\top \dist{\Mem_{\gtcell} \vl \hatSeq= \hatseq} \pr{\hatTok_{i} = \hatseq_{i} \vl \hatTok_{-i}= \hatseq_{-i}} \nonumber
	\end{align}
	Now we apply Claim~\ref{claim:mem_prompt_hat_x} to get the desired result in this case. A additional case is when $\pr{\hatTok_{-i} = \hatseq_{-i}} = 0$. In this case, Claim~\ref{claim:mem_prompt_zero} shows that $\bG_i(\hatembed(\seq)) = \zeros$, so it follows that the value function also computes 0 in this case.
	
	Finally, we need to check the case where $\hatseq \notin \supp(\dist{\hatSeq})$, and we want to show $\val(\bG_i(\hatembed(\seq)), \hatembed_i(\seq))= 0$ for all $i > 1$. The case where $\pr{\hatSeq_{-i} = \hatseq_{-i}} = 0$ is already handled above. In the case where $\pr{\hatSeq_{-i} = \hatseq_{-i}} > 0$, we can apply Claim~\ref{claim:mem_prompt_attn_params} to our construction for $\valparam{}$ to get 
	\begin{align}
		(\valparam{} \bG_i(\hatembed(\seq)))_{\mem, \hidden} = \begin{cases}
			\dist{\Mem_{\gtcell} = \mem, \Hidden_i = (\gtcell, \syn) \vl \hatTok_{-i}= \hatseq_{-i}} &\text{ if } \hidden = (\gtcell, \syn) \text{ for } \syn \in \synset^\star \\
			0 & \text{ otherwise }
			\end{cases} \nonumber
	\end{align}
	Thus, taking the element-wise product with $\phi(\embed(\tok_i)) = \dist{\hatTok_i = \hatseq_{i} \vl \Mem_{\Cell_i}, \Cell_i, \Syn_i}$, we must have, by Proposition~\ref{prop:multi_mem_cond_prob}, 

\begin{align}
	((\valparam{} \bG_i(\hatembed(\seq))) \odot\phi(\embed(\tok_i))) _{\mem, \hidden} =\nonumber\\ \begin{cases}
		\dist{\hatTok_{i}= \hatseq_{i}, \Mem_{\gtcell} = \mem, \Hidden_i = (\gtcell, \syn) \vl \hatTok_{-i}= \hatseq_{-i}} &\text{ if } \hidden = (\gtcell, \syn) \text{ for } \syn \in \synset^\star \\
		0 & \text{ otherwise } 
	\end{cases} \nonumber
\end{align}
Both of these terms must be 0 since $\hatseq \notin \supp(\dist{\hatSeq})$, giving the desired result.
\end{proof}

Now we are ready to prove Theorem~\ref{thm:mem_prompt_tune}. 
\begin{proof}[Proof of Theorem~\ref{thm:mem_prompt_tune}]
	The first case we consider is when $x \in \gZ$, defined in~\eqref{eq:mem_prompt_supported_seq}. By applying Lemmas~\ref{lem:mem_prompt_query_key} and~\ref{lem:mem_prompt_val}, we constructed key, query, and value functions for the attention head such that when $\widehat{\gI}$~\eqref{eq:mem_prompt_I} is nonempty, the attended-to positions $\gI$ satisfy $\gI= \widehat{\gI}$. In addition, by applying Lemma~\ref{lem:mem_prompt_val}, we also obtain that for $\seq \in \supp(\dist{\Seq})$, $\val(\bG_i(\hatembed(\seq)), \hatembed_i(\seq)) = r_{x, i} \gtlin^\top \dist{\Mem_{\gtcell} \vl \Tok_{1:\Steps} = \seq}$. As the attention head averages $\val(\bG_i(\hatembed(\seq)), \hatembed_i(\seq)) $ over the attended-to positions, and $r_{x, i}$ is positive for all $i \in \widehat{\gI}$, we obtain the desired result.   
	
	In the second case, $x \notin \gZ$, so $(\faktok, x) \notin \supp(\dist{\hatSeq})$. By Lemma~\ref{lem:mem_prompt_val}, for all $i > 1$, the value function outputs 0. However, by the construction in Lemma~\ref{lem:mem_prompt_query_key}, the attention will only attend to $i > 1$. Thus, the output of the attention head is $0$. However, Claim~\ref{claim:mem_prompt_zero} also implies that $\gtlin^\top \dist{\Mem_{\gtcell} \vl \Tok_{1:\Steps} = \seq} = 0$, giving the desired result.
\end{proof}
        \section{Experimental details}
\label{app:experiments}

\paragraph{Generating HMM parameters.}
For all experiments, we randomly generated the parameters of an HMM with 10 output symbols in its vocabulary.
We generate a random transition matrix by taking a random convex combination of random permutation matrices. We mix as many permutation matrices as there are hidden states; i.e. if there are 4 hidden states, then we mix 4 random permutation matrices.
The mixing weights are generated by sampling logits IID from a uniform distribution on $[0,1]$ and then taking a softmax with temperature 0.01. Although this is a small temperature, the transition probabilities can still be around 0.7 for some transitions.
The start distribution is also sampled in the same way, but with softmax temperature 10.0.
The rows of the emission probability matrix is also sampled the same way with temperature 0.01.

\paragraph{Pretrain model.}
The pretrained model follows the BERT-base architecture, except with 6 layers and a much smaller vocab size.

\paragraph{Pretrain data and task.}
The pretraining data consists of 5000 sequences (documents) generated from the HMM, each with length 10240. We pretrain on this data by doing 5\% masked LM on chunks of length 512. Pretraining runs for 3 epochs and takes about 5 hours on a single NVIDIA Tesla K80 GPU on 16-bit precision. We use an internal cluster for all experiments. Pretraining uses batch size 8 and learning rate 1e-5 with a linear warmup of 500 steps and linear decay schedule after 500 steps.
We generated 20 pretraining (and downstream) datasets for each problem instance and average over the 20 runs in the vanilla HMM comparison, while the memory-based distributions are run for 5 trials of pretraining and finetuning.

\paragraph{Downstream.}
The downstream task samples a sparse ground truth linear weight $\gtlin$ with 6 nonzero elements. Positions for nonzero entries are sampled uniformly at random and values are sampled i.i.d. from a standard normal distribution. Although we do binary classification, we sample $\gtlin$ with 2 rows and take the label to be the argmax of the two scores, instead of having 1 row and taking the sign. We find that this results in less degenerate datasets (datasets where all labels are the same).

We generate 5000 training, 500 validation and 1000 test examples for the downstream tasks.
Downstream training uses learning rate 0.01 for both prompt tuning and head tuning, with a linear warmup/decay schedule, for 5 epochs over the downstream data. We take the model returned at the last checkpoint as the result (no early stopping). 
We found that it was important to train prompt tuning with full precision, since the gradients are relatively small and become zero with discretization.

We used message passing in the HMM to compute the posterior distributions of the latent variables analytically.

\paragraph{Prompt tuning.}
We prepended a length 20 continuous prompt to each sequence of input word embeddings.
We initialize elements of the prompt vectors IID from the uniform distribution on $[-0.5, 0.5]$. Our implementation for prompt tuning used the code of~\citep{lester2021power}, available at \url{https://github.com/kipgparker/soft-prompt-tuning}.

\end{document}